\documentclass{article}

\PassOptionsToPackage{numbers,sort&compress}{natbib}

\usepackage[preprint]{neurips_2026}

\usepackage[utf8]{inputenc} 
\usepackage[T1]{fontenc}    
\usepackage{hyperref}       
\usepackage{url}            
\usepackage{booktabs}       
\usepackage{tabularx}       
\usepackage{amsfonts}       
\usepackage{nicefrac}       
\usepackage{microtype}      
\usepackage{xcolor}         

\usepackage{amsmath}
\usepackage{amssymb}
\usepackage{mathtools}
\usepackage{amsthm}
\usepackage{thmtools}
\usepackage{thm-restate}

\usepackage{multirow}

\usepackage{algorithm}
\usepackage{algorithmic}

\usepackage{graphicx}
\usepackage{enumitem}
\usepackage{subcaption}
\usepackage{placeins}
\usepackage{wrapfig}

\usepackage{tikz}
\usetikzlibrary{arrows.meta,positioning,shapes.geometric,fit,calc}

\theoremstyle{plain}
\newtheorem{theorem}{Theorem}
\newtheoremstyle{plaincite}
  {}{}{\itshape}{}{\bfseries}{.}{.5em}
  {\thmname{#1}\thmnumber{ #2}\thmnote{ #3}}
\theoremstyle{plaincite}
\newtheorem{citedtheorem}[theorem]{Theorem}
\theoremstyle{plain}
\newtheorem{proposition}{Proposition}

\theoremstyle{definition}

\theoremstyle{remark}

\definecolor{mydarkblue}{rgb}{0,0.08,0.45}
\hypersetup{ %
pdftitle={},
pdfsubject={},
pdfkeywords={},
pdfborder=0 0 0,
pdfpagemode=UseNone,
colorlinks=true,
linkcolor=mydarkblue,
citecolor=mydarkblue,
filecolor=mydarkblue,
urlcolor=mydarkblue,
}

\title{Non-Myopic Active Feature Acquisition via Pathwise Policy Gradients}

\author{%
\begin{minipage}[t]{0.48\textwidth}\centering
\textbf{Linus Aronsson}\\
{\mdseries\parbox{\linewidth}{\centering Department of Computer Science and Engineering}}\\
{\mdseries\parbox{\linewidth}{\centering Chalmers University of Technology \& University of Gothenburg}}\\
{\mdseries Gothenburg, Sweden}\\
{\mdseries\texttt{linaro@chalmers.se}}
\end{minipage}\hfill
\begin{minipage}[t]{0.48\textwidth}\centering
\textbf{Morteza Haghir Chehreghani}\\
{\mdseries\parbox{\linewidth}{\centering Department of Computer Science and Engineering}}\\
{\mdseries\parbox{\linewidth}{\centering Chalmers University of Technology \& University of Gothenburg}}\\
{\mdseries Gothenburg, Sweden}\\
{\mdseries\texttt{morteza.chehreghani@chalmers.se}}
\end{minipage}
}

\begin{document}

\maketitle

\begin{abstract}
Active feature acquisition (AFA) considers prediction problems in which features are costly to obtain and the learner adaptively decides which feature values to acquire for each instance and when to stop and predict. AFA can be formulated as a partially observable Markov decision process (POMDP), which naturally admits a sequential decision-making perspective. In this paper, we present \emph{non-myopic pathwise policy gradients} (NM-PPG), a new AFA method built around this formulation. We introduce a continuous relaxation of the acquisition process that enables pathwise gradients through the full acquisition trajectory, avoiding the high variance of standard score-function policy gradients while allowing end-to-end optimization of a non-myopic acquisition policy. To better align training with deployment, we further develop a straight-through rollout scheme that follows hard feature acquisitions in the forward pass while backpropagating through the corresponding soft relaxation in the backward pass. We stabilize optimization with entropy regularization and staged temperature sharpening. Experiments on both synthetic and real-world datasets demonstrate that NM-PPG yields superior performance relative to state-of-the-art AFA baselines.
\end{abstract}

\section{Introduction} \label{sec:intro}

Many predictive systems rely on features that are expensive, slow, invasive, or privacy sensitive to obtain. In medical decision support, ordering all tests for every patient may be costly and burdensome, and can delay treatment \citep{GorryBarnett1968}. In recommender systems and personalized services, collecting many behavioral signals raises both monetary and privacy costs \citep{Jeckmans2013}. In interactive troubleshooting, each diagnostic query consumes time and user effort \citep{10.1145/203330.203341,uaivoi2017}. Similar constraints also arise in robotics and sensor systems, where measurements require energy, time, or motion \citep{stachniss2005information,10.5555/1641503.1641516,9899480}. Related ideas have also recently appeared in LLM-based decision-making, where the model may strategically gather additional information in a cost-sensitive way at test time rather than answering immediately \citep{cooper2025the}. These settings motivate \emph{active feature acquisition} (AFA), where a learner sequentially decides which feature values to acquire for each data instance, and when to stop and predict, so as to balance predictive quality against feature acquisition cost \citep{dulac2011datum,aronsson2025surveyactivefeatureacquisition}.

AFA differs fundamentally from standard static feature selection, where a single subset of features is selected and used for all instances \citep{DBLP:journals/jmlr/GuyonE03}. This can be suboptimal when the informative features vary across instances. For example, a cheap feature may reveal which one of several expensive features is actually relevant for the current instance. A static selector must typically retain all potentially relevant features in order to remain accurate across the population, whereas an \emph{adaptive} AFA policy can first acquire the cheap indicator and then query only the appropriate expensive feature for that particular instance \citep{DBLP:conf/icml/ValanciusLO24}. In this sense, AFA can achieve a strictly better accuracy-cost trade-off than static selection by tailoring acquisition to the realized feature values of each instance.

AFA can be formulated as a \emph{partially observable Markov decision process} (POMDP) \citep{ASTROM1965174}. A recent survey on AFA categorizes existing AFA methods around this POMDP formulation \citep{aronsson2025surveyactivefeatureacquisition}, such that the resulting taxonomy closely mirrors standard categorizations in the POMDP literature \citep{ASTROM1965174,10.5555/1622673.1622690,NIPS2010_edfbe1af}. The survey identifies the following AFA categories. 
(i) Model-based methods, which estimate a model of the acquisition dynamics (i.e., a probability distribution over unobserved features given the already observed features) and learn an acquisition policy based on this \citep{geman1996active,chai2004test,DBLP:conf/aaai/BilgicG07,ma2019eddi}. 
(ii) Model-free methods, which avoid explicit modeling of the acquisition dynamics and instead learn acquisition policies directly from experience, either through oracle-guided supervision or reinforcement learning (RL) \citep{dulac2011datum,DBLP:conf/nips/HeDE12,DBLP:journals/ml/JanischPL20,covert2023learning, GhoshLan2023DiFA, gadgil2024estimating}. Our method belongs to this category.
(iii) Hybrid methods, which combine policy learning from experience with a learned model to improve supervision, state representations, or training stability \citep{zannone2019odin,pmlr-v139-li21p,li2024distributionguidedactivefeature,DBLP:conf/icml/ValanciusLO24,guney2025active}.

Our contributions are as follows: 

\begin{enumerate}[label=(\roman*), leftmargin=*]
\item We introduce a continuous relaxation of the AFA acquisition process that enables pathwise gradients through the entire acquisition trajectory (see Section \ref{sec:pathwise}). Prior non-myopic AFA methods face two complementary limitations. RL-based methods directly target long-term cost minimization in the AFA-POMDP, but rely on generic value-based or score-function RL methods such as DQN and PPO \citep{NEURIPS2018_e5841df2, kachuee2018opportunistic, DBLP:journals/ml/JanischPL20, pmlr-v139-li21p}, which are known to be highly unstable due to the intractability of AFA \citep{covert2023learning, gadgil2024estimating, DBLP:conf/icml/ValanciusLO24, schütz2025afabenchgenericframeworkbenchmarking}. Non-RL non-myopic methods instead exploit the structure of AFA to construct policies from jointly informative feature groups \citep{DBLP:conf/icml/ValanciusLO24,norcliffe2025stochasticencodingsactivefeature}, but this makes them biased relative to adaptive long-term cost minimization in the full AFA-POMDP; see Section~\ref{sec:results} for details. Our method addresses both limitations by targeting long-term cost minimization while exploiting the structure of AFA through an AFA-specific continuous relaxation rather than generic RL. While continuous relaxations have been considered in prior AFA work \citep{GhoshLan2023DiFA,covert2023learning,chattopadhyay2023variational,gadgil2024estimating}, these methods are fundamentally \emph{myopic} in the sense that they optimize a one-step truncated approximation of the optimal value function, so gradients propagate only through a single acquisition step rather than through the full trajectory (see Appendices~\ref{appendix:baselines} and~\ref{appendix:myopic} for details). In contrast, our method targets long-term cost minimization in the AFA-POMDP, naturally handles non-uniform feature costs, and yields a more principled treatment of the stopping decision, which myopic methods typically handle heuristically, for example by stopping once label confidence exceeds a threshold. 

\item We address the relaxation gap between the continuous relaxation and the underlying discrete acquisition process by introducing a straight-through rollout scheme, which follows hard feature acquisitions in the forward pass while backpropagating through the corresponding soft relaxation (see Section~\ref{sec:relaxation}). This yields better alignment between training and the discrete test-time AFA process. 

\item We stabilize this pathwise objective using entropy regularization and staged temperature sharpening, providing a practical alternative to high-variance score-function policy gradients (see Section~\ref{sec:alg}). 

\item Finally, we show experimentally that our method yields more stable performance than existing non-myopic AFA methods, outperforms myopic methods on datasets with non-myopic structure, and remains consistent with myopic baselines on datasets where myopic acquisition is sufficient (see Section~\ref{sec:results}).
\end{enumerate}

\section{Problem Formulation and Notation} \label{sec:problemformulation}

Let \(p(\mathbf{x},\mathbf{y})\) be the data distribution, where \(\mathbf{x}=(\mathbf{x}_1,\dots,\mathbf{x}_d)\in\mathcal{X}\) and \(\mathbf{y}\in\mathcal{Y}\), with \(d\) denoting the number of features of each instance. Following most prior work on AFA, we focus on \emph{classification} tasks in this paper, so the label space \(\mathcal{Y}\) is assumed to be a finite set of class labels. Nevertheless, the proposed method extends straightforwardly to regression tasks as well. Random variables are denoted by bold symbols (e.g. \(\mathbf{x}, \mathbf{y}\)), while their realizations are written in regular font, that is, \(x, y \sim p(\mathbf{x}, \mathbf{y})\). For an instance \(x \in \mathcal{X}\), acquiring feature \(a \in [d] \triangleq \{1,\dots,d\}\) incurs a cost \(c(a)\in\mathbb{R}^+\). For any subset \(A \subseteq [d]\), we let \(x_A \triangleq \{x_a \mid a \in A\}\) denote the subvector of \(x\) indexed by \(A\). Similarly, for a random vector \(\mathbf{x}\), we write \(\mathbf{x}_A\) for the corresponding random subvector. For any subsets \(A,B \subseteq [d]\), we write \(p(\mathbf{x}_A,\mathbf{y} \mid x_B)\) for the conditional distribution under the true data distribution. For any finite set \(\mathcal{A}\), let \(\Delta(\mathcal{A}) \triangleq \{p \in \mathbb{R}^{|\mathcal{A}|} \mid p_i \geq 0 \text{ for all } i, \sum_{i=1}^{|\mathcal{A}|} p_i = 1\}\) denote the probability simplex over \(\mathcal{A}\), that is, the set of all discrete probability distributions over the elements of \(\mathcal{A}\).  




\textbf{AFA Procedure and Predictor with Partially Observed Inputs.}
For an instance \(x \in \mathcal{X}\), features are acquired \emph{adaptively} over a number of selection steps by a parameterized policy \(\pi_\theta\). A common way to represent partially observed inputs in AFA is via feature masking \citep{aronsson2025surveyactivefeatureacquisition}. At selection step \(0 \le t \le k\), let \(m_t \in \{0,1\}^d\) denote the current observation mask, where \(m_{t,a} = 1\) if feature \(a \in [d]\) has been observed by step \(t\), and \(m_{t,a} = 0\) otherwise. We assume \(m_0 \in \{0\}^d\) (no observed features initially). We denote the corresponding sets of observed and unobserved feature indices at step $t$ by \(S_t \triangleq \{a \in [d] \mid  m_{t,a} = 1\}\) and \(U_t \triangleq [d] \setminus S_t\). Let \(\odot\) denote elementwise multiplication. We then represent the currently observed features and the action space by
\begin{equation}
\label{eq:masked_input_and_actionspace}
x(m_t) \triangleq (m_t \odot x, m_t) \in \mathbb{R}^{2d},
\quad
\mathcal{A} = \{1,\dots,d,d+1\}.
\end{equation}
Here, \(m_t \odot x \in \mathbb{R}^d\) is the masked feature vector, in which observed features retain their true values and unobserved features are set to zero, while the mask \(m_t\) itself is concatenated to indicate which features have been observed. Given the currently observed features, represented by \(x(m_t)\), the policy outputs a probability distribution over the available actions. The policy is therefore defined as $\pi_{\theta} : \mathcal{X} \times \{0,1\}^d \rightarrow \Delta(\mathcal{A})$, and $\pi_{\theta}(\cdot \mid x(m_t)) \in \Delta(\mathcal{A})$. Thus, \(\pi_\theta(a \mid x(m_t))\) denotes the probability of selecting action \(a \in \mathcal{A}\) given the currently observed features. At step \(t<k\), an action \(a_t \in \mathcal{A}\) is selected by sampling \(a_t \sim \pi_\theta(\cdot \mid x(m_t))\). If \(a_t \in [d]\), the cost \(c(a_t)\) is incurred and feature \(x_{a_t}\) is acquired. The mask is then updated as $m_{t+1} = m_t + (1 - m_t)\odot \mathrm{onehot}(a_t)$, where $\mathrm{onehot}(a_t)\in\{0,1\}^d$ denotes a one-hot vector where the $a_t$-th entry is $1$ and remaining entries are $0$. If \(a_t = d+1\), the acquisition process terminates, and a prediction of the label is made given $x(m_t)$ (the predictor is defined below). Since the AFA problem is known to be highly intractable \citep{aronsson2025surveyactivefeatureacquisition}, we follow much prior AFA work and consider a fixed truncation horizon \(k \le d\): if the stop action has not been selected before step \(k\), we force stopping at step \(k\) (i.e., $a_k = d+1$). In practice, a feature may not be acquired more than once, so for any $a \in S_t$, we force $\pi_{\theta}(a \mid x(m_t)) = 0$. In AFA, the predictor (similar to the policy) must operate on partially observed inputs. Using the masked representation introduced above, for any observation mask \(m_t \in \{0,1\}^d\), the predictor takes as input \(x(m_t) \in \mathbb{R}^{2d}\) and outputs a probability distribution over the class labels in \(\mathcal{Y}\). The predictor is therefore defined as $f_{\phi} : \mathcal{X} \times \{0,1\}^d \rightarrow \Delta(\mathcal{Y})$, and $f_{\phi}(x(m_t)) \in \Delta(\mathcal{Y})$. Concretely, \(f_{\phi}(x(m_t))\) is intended to approximate the Bayes optimal conditional predictive distribution \(p(\mathbf{y} \mid x_{S_t})\).

\textbf{Optimization Objective.} 
Let \(t_\theta(x)\le k\) denote the stopping step for instance \(x\) under policy \(\pi_\theta\). Also, let \(c(m_t) \triangleq \sum_{a \in [d]} m_{t,a}\, c(a)\) denote the total acquisition cost incurred up to step \(t\). A standard optimization objective in the AFA literature is to minimize the expected prediction loss and feature acquisition cost across instances \citep{aronsson2025surveyactivefeatureacquisition}:
\begin{equation}
\begin{aligned}
\label{eq:afa_obj_soft}
&\min_{\theta} 
\mathbb{E}_{\mathbf{x},\mathbf{y}} 
\mathbb{E}_{\pi_{\theta}} [
\ell (f_{\phi}(\mathbf{x}(m_{t_\theta(\mathbf{x})})),\mathbf{y})
+ \alpha c(m_{t_\theta(\mathbf{x})})],
\end{aligned}
\end{equation}
where $\phi \triangleq \arg \min_{\phi'} \mathbb{E}_{\mathbf{x},\mathbf{y},\mathbf{m}}[\ell(f_{\phi'}(\mathbf{x}(\mathbf{m})), \mathbf{y})]$ with $\mathbf{m} \sim \mathrm{Uniform}(\{0,1\}^d)$. Also, $\ell$ is a prediction loss and $\alpha \ge 0$ trades off prediction loss and acquisition cost. Following the vast majority of prior work on AFA, we consider the \emph{offline} setting \citep{aronsson2025surveyactivefeatureacquisition}, where we assume access to a fully observed training dataset \(\{(x^{(1)}, y^{(1)}), \dots, (x^{(n)}, y^{(n)})\} \sim p(\mathbf{x}, \mathbf{y})\). This dataset is then used to learn and evaluate the predictor \(f_{\phi}\) and policy \(\pi_{\theta}\).
\section{Proposed Method} \label{sec:novelformulation}
In this section, we describe our proposed method.

\subsection{AFA-POMDP Instantiation for the Standard Objective} \label{sec:pomdpinstantiation}

The acquisition process in AFA (see Section \ref{sec:problemformulation}) can be formulated as a finite-horizon POMDP \citep{ruckstiest2013minimizing, aronsson2025surveyactivefeatureacquisition}. A formal definition of the AFA-POMDP is deferred to Appendix \ref{appendix:pomdp}. A state is represented by \((m_t,x,y)\in \{0,1\}^d \times \mathcal{X} \times \mathcal{Y}\), where \(m_t\) is the current observation mask, while the agent observes only \(x(m_t)\) and chooses actions \(a_t\in\mathcal{A}\) as defined in \eqref{eq:masked_input_and_actionspace}. If \(a_t\in[d]\), the state transitions to the updated mask \(m_{t+1}\), which includes the newly acquired feature as described in Section~\ref{sec:problemformulation}. Let \(C((m_t,x,y),a_t)\) denote the immediate cost of taking action \(a_t\) in state \((m_t,x,y)\). We set \(C((m_t,x,y),a)=\alpha c(a)\) for feature acquisitions \(a\in[d]\), while for stopping, \(C((m_t,x,y),d+1)=\ell(f_\phi(x(m_t)),y)\). Let \(\ell_t \triangleq \ell(f_\phi(x(m_t)),y)\), and recall that \(t_\theta(x)\) denotes the stopping step for instance \(x\) under policy \(\pi_\theta\). For an instance \((x,y)\sim p(\mathbf{x},\mathbf{y})\), with actions sampled as \(a_t\sim\pi_\theta(\cdot\mid x(m_t))\), the trajectory cost \(G(x,y,\pi_\theta)\) and expected cost \(J(\pi_\theta)\) are
\begin{equation}\label{eq:return_and_objective}
G(x,y,\pi_\theta) \triangleq \sum_{t=0}^{t_\theta(x)} C((m_t,x,y),a_t)= \sum_{t=0}^{t_\theta(x)-1} \alpha c(a_t) + \ell_{t_\theta(x)}, \quad J(\pi_\theta) \triangleq \mathbb{E}_{\mathbf{x},\mathbf{y},\pi_\theta}[G(\mathbf{x},\mathbf{y},\pi_\theta)].
\end{equation}
As in Section \ref{sec:problemformulation}, we force stopping after \(k\le d\) steps. The following theorem states that minimizing expected cost in this POMDP is equivalent to optimizing the standard AFA objective in \eqref{eq:afa_obj_soft}.
\begin{citedtheorem}[\citep{dulac2011datum}]\label{theorem:mdp}
Minimizing \(J(\pi_\theta)\) in \eqref{eq:return_and_objective} is equivalent to the optimization problem in \eqref{eq:afa_obj_soft}.
\end{citedtheorem}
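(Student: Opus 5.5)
The plan is to show that, for every fixed policy \(\pi_\theta\), the two objectives take the same value: \(J(\pi_\theta)\) in \eqref{eq:return_and_objective} equals the expectation inside the minimization in \eqref{eq:afa_obj_soft}. Equivalence of the two minimization problems then follows at once, since both range over the same parameters \(\theta\) with the predictor \(f_\phi\) held fixed. Equal objective functions have the same minimizers and the same optimal value.

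The first step is to fix an instance \((x,y)\) and a realized trajectory \(a_0,\dots,a_{t_\theta(x)}\) generated by \(\pi_\theta\). On this trajectory I will show that \(G(x,y,\pi_\theta)=\ell(f_\phi(x(m_{t_\theta(x)})),y)+\alpha c(m_{t_\theta(x)})\). The last summand of \(G\) is the stopping cost \(C((m_{t_\theta(x)},x,y),d+1)=\ell_{t_\theta(x)}\), which is exactly the prediction loss term. For the remaining summands, I will argue by induction on \(t\) that \(c(m_t)=\sum_{s<t}c(a_s)\). The base case is \(c(m_0)=0\), because \(m_0=0\). For the inductive step, the update \(m_{t+1}=m_t+(1-m_t)\odot\mathrm{onehot}(a_t)\) adds exactly coordinate \(a_t\), since \(a_t\notin S_t\). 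This holds because the policy is forced to put zero mass on already observed features, so \(c(m_{t+1})=c(m_t)+c(a_t)\) by linearity of \(c(m)=\sum_a m_a c(a)\). Multiplying by \(\alpha\) gives \(\sum_{t<t_\theta(x)}\alpha c(a_t)=\alpha c(m_{t_\theta(x)})\).

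The second step is to take expectations over \((\mathbf{x},\mathbf{y})\sim p\) and over the action randomness of \(\pi_\theta\), with the forced stop at step \(k\) applied identically in both formulations. The trajectory-level identity then gives \(J(\pi_\theta)=\mathbb{E}_{\mathbf{x},\mathbf{y}}\mathbb{E}_{\pi_\theta}[\ell(f_\phi(\mathbf{x}(m_{t_\theta(\mathbf{x})})),\mathbf{y})+\alpha c(m_{t_\theta(\mathbf{x})})]\). The stopping time is a bounded random variable determined by the trajectory, and the costs are finite on a finite horizon. Hence no interchange issues arise, and the equality holds pointwise in \(\theta\).

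The main obstacle is bookkeeping rather than analysis. The one genuine point is that the per-step costs must telescope to the mask cost \(c(m_t)\), and this relies on features never being reacquired; otherwise \(G\) could exceed \(\alpha c(m_t)\). I will make explicit that the masking constraint \(\pi_\theta(a\mid x(m_t))=0\) for \(a\in S_t\) rules this out. I will also state that both problems use the same fixed \(\phi\) and the same convention that the stop action at step \(t\) predicts from \(x(m_t)\).
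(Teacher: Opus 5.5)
Your proposal is correct and follows the same route as the paper's proof: rewrite \(G\) via the one-step costs, identify \(\sum_{t<t_\theta(x)}\alpha c(a_t)\) with \(\alpha c(m_{t_\theta(x)})\), and take expectations over \((\mathbf{x},\mathbf{y})\) and the rollout. Your explicit induction, together with the appeal to the constraint \(\pi_\theta(a\mid x(m_t))=0\) for \(a\in S_t\), makes rigorous what the paper states in one line, namely that \(m_{t_\theta(x)}\) records exactly the features acquired before stopping.
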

All proofs are provided in Appendix~\ref{appendix:proofs}. A standard way to optimize \(J(\pi_\theta)\) in RL is via the \emph{policy gradient theorem} (PGT) \citep{sutton2018reinforcement}, which expresses the gradient \(\nabla_\theta J(\pi_\theta)\) using the score function \(\nabla_\theta \log \pi_\theta(a_t \mid x(m_t))\). We formally define the PGT w.r.t. \eqref{eq:return_and_objective} in Appendix \ref{appendix:pomdp}. The PGT provides an unbiased estimator of \(\nabla_\theta J(\pi_\theta)\) even for discrete actions and non-differentiable trajectories, since it does not require differentiating through sampled actions or state transitions. However, score-function estimators often suffer from high variance, making optimization difficult and typically requiring variance-reduction techniques \citep{JMLR:v21:19-346, GhoshLan2023DiFA, voelcker2026relative}. This approach is also common in prior AFA work, where \(\pi_\theta\) is optimized with standard RL methods that utilize the PGT, such as PPO \citep{pmlr-v139-li21p}. In this work, we propose an alternative method that enables direct pathwise gradients through a continuous relaxation of the AFA problem, which we introduce in the next subsection.

\subsection{Pathwise Policy Gradients via a Differentiable Relaxation of the AFA-POMDP} \label{sec:pathwise}

Since the acquisition actions in AFA are discrete, pathwise gradients are not directly available. To address this, we replace the hard observation mask \(m_t \in \{0,1\}^d\) by a soft mask \(\tilde m_t \in [0,1]^d\), where \(\tilde m_{t,j}\) represents the degree to which feature \(j\) has been acquired by step \(t\). At step \(t\), a neural network parameterized by \(\theta\) outputs logits \(z_t \triangleq z_\theta(x(\tilde m_t)) \in \mathbb{R}^{d+1}\), where \(z_{t,1:d}\) are the feature-acquisition logits and \(z_{t,d+1}\) is the stop logit; we use analogous indexing notation below for other action-indexed quantities. The corresponding policy is \(\pi_\theta(\cdot \mid x(\tilde m_t)) = \operatorname{softmax} (z_t/\tau_{\mathrm{hard}})\), where \(\tau_{\mathrm{hard}} > 0\). To obtain a differentiable sample from this policy, we use the Gumbel-Softmax reparameterization \citep{jang2017categorical, maddison2017the}:
\begin{equation} \label{eq:reparam}
    \tilde a_t
    = \operatorname{softmax}(
        (z_t/\tau_{\mathrm{hard}} + \varepsilon_t)/\tau_{\mathrm{soft}}
    )
    \in [0,1]^{d+1},
    \qquad
    \varepsilon_t \sim \operatorname{Gumbel}(0,1)^{d+1}.
\end{equation}
The relaxed feature-acquisition distribution conditioned on not stopping is therefore
\begin{equation}
\label{eq:conditional_feature_soft}
\tilde r_t
=
\operatorname{softmax}((z_{t,1:d}/\tau_{\mathrm{hard}}+\varepsilon_{t,1:d})/\tau_{\mathrm{soft}})
\in[0,1]^d.
\end{equation}
For finite \(\tau_{\mathrm{soft}}>0\), \(1-\tilde a_{t,d+1}>0\), and \(\tilde r_t=\tilde a_{t,1:d}/(1-\tilde a_{t,d+1})\). Thus, \(\tilde r_t\) can be interpreted as the relaxed feature-acquisition distribution conditioned on not stopping.
The corresponding hard feature action is sampled from the feature distribution conditioned on not stopping:
\begin{equation}
\label{eq:hard_conditional_feature_action}
r_t
=
\mathrm{onehot}(\operatorname{argmax}_{j\in[d]}(\tilde r_{t,j}))
\in\{0,1\}^d.
\end{equation}
By the Gumbel-Softmax reparameterization, obtaining the hard action in \eqref{eq:hard_conditional_feature_action} is equivalent to sampling \(r_t \sim \operatorname{softmax}(z_{t,1:d}/\tau_{\mathrm{hard}})\); see Appendix~\ref{appendix:relaxation}.
To match the discrete AFA process, if \(r_{t,j}=1\), feature \(j\) is treated as acquired and made unavailable in later steps by setting \(z_{t',j}=-\infty\) for all \(t'>t\). We also introduce a relaxed survival mass \(\tilde s_t\in[0,1]\), which represents the amount of relaxed mass that remains active at step \(t\). The relaxed acquisition process then evolves according to
\begin{equation}
\label{eq:relaxed_dynamics}
\begin{aligned}
    \tilde m_{t+1}
    &=
    \tilde m_t
    +
    (1 - \tilde m_t)
    \odot
    \tilde r_t,
    \quad
    \tilde s_{t+1} = \tilde s_t (1 - \tilde a_{t,d+1}) = \prod_{i=0}^{t} (1-\tilde a_{i,d+1}),
\end{aligned}
\end{equation}
where \(t=0,\ldots,k-1\) and \(\tilde s_0 = 1\). The updated mask $\tilde{m}_{t+1}$ describes the mass that continues after step \(t\), and is therefore updated using \(\tilde r_t\). Let \(\tilde\ell_t\triangleq \ell(f_\phi(x(\tilde m_t)), y)\) and \(c(\tilde{r}_t) \triangleq \sum_{j=1}^d \tilde r_{t,j} c(j)\). The relaxed trajectory cost \(\tilde G(x,y,\theta,\varepsilon)\) and corresponding objective \(\tilde J(\theta)\) are then
\begin{equation}
\label{eq:relaxed_return_and_objective}
\begin{aligned}
\tilde{G}(x,y,\theta,\varepsilon)
&\triangleq
\sum_{t=0}^{k-1} \tilde s_t(\alpha(1-\tilde a_{t,d+1})c(\tilde{r}_t)
+\tilde a_{t,d+1}\, \tilde \ell_t)
+ \tilde{s}_k \tilde \ell_{k},
\quad
\tilde{J}(\theta)
\triangleq
\mathbb{E}_{\mathbf{x},\mathbf{y}}
\mathbb{E}_{\varepsilon}
[\tilde{G}(\mathbf{x},\mathbf{y},\theta,\varepsilon)].
\end{aligned}
\end{equation}
Here, \(\tilde s_t\) is the survival mass that reaches step \(t\). A fraction \(\tilde a_{t,d+1}\) of this mass stops and incurs prediction loss, while the remaining fraction \(1-\tilde a_{t,d+1}\) continues after acquiring a feature according to \(\tilde r_t\). In this sense, the finite-temperature process is a differentiable soft-branching relaxation of the discrete AFA dynamics. See Appendix~\ref{appendix:relaxation} for details. The final term \(\tilde{s}_k \tilde \ell_k\) represents forced terminal prediction at step \(k\), which matches the discrete problem defined in Section~\ref{sec:problemformulation}. Moreover, the following theorem provides important properties of the relaxation:
\begin{theorem}\label{theorem:relaxed_to_discrete}
The following claims hold for the continuous relaxation of the AFA-POMDP in \eqref{eq:relaxed_return_and_objective}: (i) \(\tilde m_t \in [0,1]^d\) for all \(t\); (ii) the induced relaxed stopping weights form a normalized distribution over stopping steps, i.e. \(\sum_{t=0}^{k-1} \tilde{s}_t \tilde{a}_{t,d+1} + \tilde{s}_k = 1\) for any \(k\); (iii) if each \(\tilde a_t\) is hard, that is, \(\tilde a_t \in \{0,1\}^{d+1}\) for all \(t\), then the relaxed trajectory cost in \eqref{eq:relaxed_return_and_objective} equals the discrete trajectory cost in \eqref{eq:return_and_objective}; and (iv) as \(\tau_{\mathrm{soft}} \to 0\), the actions \(\tilde a_t\) become hard almost surely over the Gumbel noise.
\end{theorem}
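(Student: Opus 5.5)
We prove the four claims separately, in order, since each is elementary once the recursions in \eqref{eq:relaxed_dynamics} are written out.

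For (i) we use induction on \(t\). The base case is \(\tilde m_0 = 0 \in [0,1]^d\). For the step, fix a coordinate \(j\) and note that \(\tilde r_{t,j}\in[0,1]\) because \(\tilde r_t\) is a softmax output. Then \(\tilde m_{t+1,j} = \tilde m_{t,j} + (1-\tilde m_{t,j})\tilde r_{t,j}\) is a convex combination of \(\tilde m_{t,j}\) and \(1\). Equivalently, \(1-\tilde m_{t+1,j} = (1-\tilde m_{t,j})(1-\tilde r_{t,j})\in[0,1]\), so \(\tilde m_{t+1,j}\in[0,1]\).

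For (ii) we telescope. From \(\tilde s_{t+1} = \tilde s_t(1-\tilde a_{t,d+1})\) we get \(\tilde s_t\tilde a_{t,d+1} = \tilde s_t - \tilde s_{t+1}\). Summing over \(t=0,\dots,k-1\) gives \(\tilde s_0 - \tilde s_k = 1-\tilde s_k\), and adding \(\tilde s_k\) yields \(1\) for every \(k\). All weights are nonnegative because \(\tilde a_{t,d+1}\in[0,1]\), so the weights form a distribution.

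For (iii), suppose every \(\tilde a_t\) is one-hot, and let \(t^\ast\) be the first step at which \(\tilde a_{t^\ast,d+1}=1\), with \(t^\ast=k\) if there is none. Then \(\tilde s_t = 1\) for \(t\le t^\ast\) and \(\tilde s_t = 0\) for \(t>t^\ast\).
\begin{itemize}
\item For \(t<t^\ast\), the stop entry is \(0\). Hence \(\tilde a_{t,1:d}\) is one-hot, and \(\tilde r_t = \tilde a_{t,1:d}/(1-0)\) equals the hard action \(r_t = \mathrm{onehot}(a_t)\). The summand at step \(t\) therefore reduces to \(\alpha c(a_t)\).
\item Induction on \(t\) shows \(\tilde m_t\) coincides with the hard mask \(m_t\), using that \(m_{t+1}=m_t+(1-m_t)\odot\mathrm{onehot}(a_t)\) is the same update. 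Consequently \(\tilde\ell_t=\ell_t\).
\item At \(t=t^\ast<k\), the stop term contributes \(\ell_{t^\ast}\), while the acquisition term vanishes because of the factor \(1-\tilde a_{t^\ast,d+1}=0\). All later terms vanish since \(\tilde s_t=0\), and in particular \(\tilde s_k\tilde\ell_k=0\).
\item If \(t^\ast=k\), the only stopping contribution is the forced term \(\tilde s_k\tilde\ell_k=\ell_k\).
\end{itemize}
Identifying \(t^\ast\) with \(t_\theta(x)\), the relaxed sum equals \(\sum_{t<t_\theta(x)}\alpha c(a_t)+\ell_{t_\theta(x)}\), which is \(G\) in \eqref{eq:return_and_objective}. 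One subtlety is that \(\tilde r_t\) must be computed over the non-masked features. Since already acquired features have logit \(-\infty\), the one-hot never selects them, so \(c(\tilde r_t)\) matches \(c(a_t)\) and no feature is double-counted.

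For (iv), write \(g_t = z_t/\tau_{\mathrm{hard}}+\varepsilon_t\). Since the Gumbel distribution is continuous and the \(\varepsilon_{t,j}\) are independent, with probability one the finite entries of \(g_t\) have a unique maximizer. The masked entries equal \(-\infty\) and are never the maximizer. On this event, \(\operatorname{softmax}(g_t/\tau_{\mathrm{soft}})\) converges to the one-hot vector of the argmax as \(\tau_{\mathrm{soft}}\to0\): each non-maximal ratio \(\exp((g_{t,j}-g_{t,\max})/\tau_{\mathrm{soft}})\) tends to \(0\), because the gap \(g_{t,j}-g_{t,\max}\) is strictly negative.

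The main obstacle is that \(z_t\) depends on \(\tilde m_t\), which itself depends on \(\tau_{\mathrm{soft}}\). We handle this by induction over \(t\), using continuity of \(z_\theta\) and of \(f_\phi\). On the almost-sure event of strict gaps, \(\tilde m_t\) converges to the hard mask, so \(z_t\) converges, and by continuity the strict-gap ordering is preserved for all small \(\tau_{\mathrm{soft}}\). This makes \(\tilde a_t\) converge to a one-hot vector. A countable intersection over steps \(t\le k\) of probability-one events keeps the statement almost sure. Combining this with (iii) then shows that the relaxed cost converges to the discrete cost.
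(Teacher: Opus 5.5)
Your proposal is correct and follows essentially the same route as the paper. You use coordinatewise induction for (i), the telescoping identity \(\tilde s_t\tilde a_{t,d+1}=\tilde s_t-\tilde s_{t+1}\) for (ii), and a first-stop-time case analysis with \(\tilde m_t=m_t\) up to stopping for (iii). For (iv) you argue step by step, using continuity of \(z_\theta\) together with an almost surely unique Gumbel maximizer. Your treatment of (iv) is slightly more explicit than the paper's, since you note that both the strict gap and the hard blocking pattern stabilize for all sufficiently small \(\tau_{\mathrm{soft}}\).
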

Property (ii) of Theorem~\ref{theorem:relaxed_to_discrete} makes the loss terms in \eqref{eq:relaxed_return_and_objective} a natural soft analogue of the terminal loss in the discrete AFA process. Although the relaxed process evaluates the prediction loss at every step, the loss at step \(t\) is weighted by the relaxed mass assigned to stopping at that step, \(\tilde s_t\tilde a_{t,d+1}\), with the remaining survival mass assigned to forced stopping at \(k\). Thus, the relaxation defines a normalized soft weighting over stopping steps.
Overall, the relaxed formulation preserves the branching structure of the discrete problem while providing a natural differentiable surrogate for optimization, and properties (iii) and (iv) of Theorem~\ref{theorem:relaxed_to_discrete} suggest annealing \(\tau_{\mathrm{soft}}\) toward zero during training to gradually align the relaxed trajectory cost with the corresponding discrete cost. See Appendix~\ref{appendix:relaxation} for details about the continuous relaxation. Because the relaxed AFA formulation has continuous actions and a deterministic policy via the reparameterization in \eqref{eq:reparam}, the \emph{deterministic policy gradient} (DPG) theorem \citep{pmlr-v32-silver14, pmlr-v80-haarnoja18b, voelcker2026relative} can be used. We define the DPG theorem for the relaxed AFA formulation in Appendix~\ref{appendix:dpg}. However, our relaxed problem yields another alternative due to additional structure in the relaxation. Beyond \eqref{eq:reparam}, the relaxed state dynamics in \eqref{eq:relaxed_dynamics} and the relaxed trajectory cost in \eqref{eq:relaxed_return_and_objective} are both differentiable. Therefore, the relaxed rollout defines a differentiable computation graph with respect to \(\theta\). As a result, the relaxed objective in \eqref{eq:relaxed_return_and_objective} can be optimized directly by differentiating through the entire trajectory, i.e., \(\nabla_\theta \tilde{J}(\theta)=\mathbb{E}_{\mathbf{x},\mathbf{y}}\mathbb{E}_{\varepsilon}[\nabla_\theta \tilde{G}(\mathbf{x},\mathbf{y},\theta,\varepsilon)]\).

\subsection{Reducing the Relaxation Gap via Straight-Through Rollouts} \label{sec:relaxation}
The objective in \eqref{eq:relaxed_return_and_objective} is optimized under a continuous process, while at inference we operate on the discrete AFA process in \eqref{eq:return_and_objective}. To reduce this gap, we use a straight-through (ST) rollout \citep{bengio2013estimating, jang2017categorical}. At step \(t\), let \(\tilde r_t\) be the conditional feature-acquisition distribution and \(r_t\) the corresponding hard feature action from Section~\ref{sec:pathwise}. We construct ST variables for the conditional feature action and mask:
\begin{equation}
\bar r_t = r_t-\mathrm{sg}(\tilde r_t)+\tilde r_t,\quad
\bar m_{t+1}=m_{t+1}-\mathrm{sg}(\tilde m_{t+1})+\tilde m_{t+1}.
\end{equation}
Here, \(\mathrm{sg}(\cdot)\) denotes stop-gradient, i.e., \(\mathrm{sg}(u)=u\) and \(\partial\,\mathrm{sg}(u)/\partial u=0\). The hard mask follows \(m_{t+1}=m_t+(1-m_t)\odot r_t\), while the relaxed mask follows \eqref{eq:relaxed_dynamics}. Consequently, in the forward pass, \(\bar r_t=r_t\) and \(\bar m_t=m_t\), so the predictor and policy are evaluated on hard masks. In the backward pass, gradients pass through the relaxed variables, since \(\partial \bar r_t/\partial \tilde r_t=I\) and \(\partial \bar m_t/\partial \tilde m_t=I\). Let \(\bar\ell_t\triangleq \ell(f_\phi(x(\bar m_t)),y)\). We define the ST trajectory objective as
\begin{equation}
\label{eq:st_nostop_return}
\bar G(x,y,\theta,\varepsilon)
\triangleq
\sum_{t=0}^{k-1}
\tilde s_t
(
\alpha(1-\tilde a_{t,d+1})c(\bar r_t)
+\tilde a_{t,d+1}\,\bar\ell_t
)
+\tilde s_k \bar\ell_k,
\quad
\bar J(\theta)
\triangleq
\mathbb{E}_{\mathbf{x},\mathbf{y}}
\mathbb{E}_{\varepsilon}
[\bar G(\mathbf{x},\mathbf{y},\theta,\varepsilon)].
\end{equation}
Because \(\bar r_t\) and \(\bar m_t\) use stop-gradient operations, automatic differentiation gives a straight-through surrogate gradient rather than the exact derivative of the hard-forward objective. We write this estimator as \(\nabla^{\mathrm{ST}}_\theta \bar J(\theta)\triangleq\mathbb{E}_{\mathbf{x},\mathbf{y}}\mathbb{E}_{\varepsilon}[\nabla^{\mathrm{ST}}_\theta \bar G(\mathbf{x},\mathbf{y},\theta,\varepsilon)]\). The following proposition gives \(\nabla^{\mathrm{ST}}_\theta \bar G\) for a fixed rollout.
\begin{proposition}\label{prop:st-rollout-gradient}
For a fixed rollout \((x,y,\varepsilon)\), the ST surrogate gradient of \(\bar G\) in \eqref{eq:st_nostop_return} expands as
\begin{equation}
\label{eq:st_rollout_gradient}
\begin{aligned}
\nabla^{\mathrm{ST}}_\theta \bar G(x,y,\theta,\varepsilon)
=
\sum_{t=0}^{k}
\left(
\frac{\partial \bar G}{\partial \bar r_t}
\frac{\partial \bar r_t}{\partial \tilde r_t}
\frac{\partial \tilde r_t}{\partial \theta}
+
\frac{\partial \bar G}{\partial \tilde a_{t,d+1}}
\frac{\partial \tilde a_{t,d+1}}{\partial \theta}
+
\frac{\partial \bar G}{\partial \bar m_t}
\frac{\partial \bar m_t}{\partial \tilde m_t}
\frac{\partial \tilde m_t}{\partial \theta}
+
\frac{\partial \bar G}{\partial \tilde s_t}
\frac{\partial \tilde s_t}{\partial \theta}
\right).
\end{aligned}
\end{equation}
Here, terms involving \(\bar r_k\) and \(\tilde a_{k,d+1}\) are zero, since no action is taken at step \(k\).
\end{proposition}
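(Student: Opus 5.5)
The plan is to read $\bar G$ as the output of a computation graph and apply the multivariable chain rule once, with total derivatives taken at the level of the per-step \emph{leaf-facing} variables. For fixed $(x,y,\varepsilon)$, the explicit arguments of $\bar G$ in \eqref{eq:st_nostop_return} are $\{\bar r_t,\tilde a_{t,d+1}\}_{t=0}^{k-1}$, $\{\bar m_t\}_{t=0}^{k}$ (through $\bar\ell_t$) and $\{\tilde s_t\}_{t=0}^{k}$. All dependence on $\theta$ flows through these variables, together with the parameters inside the logits $z_t=z_\theta(x(\bar m_t))$. We fix $\phi$, as in the objective.

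\textbf{Step 1: define the partials.} Write $\bar G=\Phi(\{\bar r_t\},\{\tilde a_{t,d+1}\},\{\bar m_t\},\{\tilde s_t\})$ with $\Phi$ given explicitly by \eqref{eq:st_nostop_return}. We take $\partial\bar G/\partial(\cdot)$ to be the partials of $\Phi$ with respect to these slots, holding the other slots fixed. Note that $\bar r_k$ and $\tilde a_{k,d+1}$ do not appear in $\Phi$, which gives the vanishing of those terms.

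\textbf{Step 2: chain rule.} The chain rule gives $\nabla_\theta\bar G=\sum_t\big(\partial_{\bar r_t}\Phi\, d\bar r_t/d\theta+\partial_{\tilde a_{t,d+1}}\Phi\, d\tilde a_{t,d+1}/d\theta+\partial_{\bar m_t}\Phi\, d\bar m_t/d\theta+\partial_{\tilde s_t}\Phi\, d\tilde s_t/d\theta\big)$.

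\textbf{Step 3: apply the ST convention.} Since $\bar r_t=r_t-\mathrm{sg}(\tilde r_t)+\tilde r_t$, and $r_t$ is piecewise constant, hence carrying zero (surrogate) gradient, we get $d\bar r_t/d\theta=(\partial\bar r_t/\partial\tilde r_t)\,d\tilde r_t/d\theta$ with $\partial\bar r_t/\partial\tilde r_t=I$. The identical argument for $\bar m_t$ yields the third term. Substituting gives exactly \eqref{eq:st_rollout_gradient}.

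The main obstacle is bookkeeping, to avoid double counting. The factors $\partial\tilde r_t/\partial\theta$, $\partial\tilde m_t/\partial\theta$, $\partial\tilde s_t/\partial\theta$ and $\partial\tilde a_{t,d+1}/\partial\theta$ must be read as \emph{total} surrogate derivatives. They include the recursive dependence of $\tilde r_t,\tilde a_t$ on earlier $\bar m_{t'}$ via $z_t$, of $\tilde m_{t+1}$ on $\tilde m_t,\tilde r_t$ via \eqref{eq:relaxed_dynamics}, and of $\tilde s_t$ on $\tilde a_{i,d+1}$ for $i<t$. Meanwhile, the $\partial\bar G$ factors are only the direct partials of $\Phi$. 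One must check that these two levels are consistent:
\begin{itemize}
\item every path from $\theta$ to $\bar G$ enters $\Phi$ through exactly one of the listed slots, so it is counted exactly once;
\item inner paths, e.g.\ $\theta\to\bar m_t\to z_t\to\tilde r_t$, are absorbed into the total derivatives and not listed separately.
\end{itemize}
Stated this way, the proposition is the chain rule on the ST graph. One further point needs a remark: $\bar m_0=m_0=\tilde m_0=0$ is constant, so its term is zero. This is consistent with the formula.
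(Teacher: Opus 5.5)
Your proposal is correct and follows essentially the same route as the paper. Both write $\bar G$ as a scalar function (the paper's $F$, your $\Phi$) of the per-step variables $(\bar r_t,\bar m_t,\tilde a_{t,d+1},\tilde s_t)_{t=0}^{k}$, apply the multivariate chain rule with explicit partials of $\bar G$ and total derivatives of those variables (so the recursive coupling is absorbed into the latter rather than double counted), and then substitute $\partial\bar r_t/\partial\tilde r_t=I$ and $\partial\bar m_t/\partial\tilde m_t=I$ from the stop-gradient definitions.
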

Appendix~\ref{appendix:proof-st-gradient} provides the proof of Proposition~\ref{prop:st-rollout-gradient} and further discussion of each term in \eqref{eq:st_rollout_gradient}, while Appendix~\ref{appendix:full-derivation-gradients} gives additional details on the benefit of the ST procedure. The surrogate-gradient expression gives useful intuition about how gradient information flows through actions, masks, stop masses, and survival weights. In practice, however, we simply optimize \(\bar J\) by automatic differentiation, which targets the ST surrogate gradient \(\nabla^{\mathrm{ST}}_\theta \bar J\). This ST estimator is biased relative to the exact gradient of the hard-forward objective, but typically lower variance and more stable than score-function estimators \citep{bengio2013estimating, jang2017categorical}. The resulting rollout trains policy and predictor on discrete masks, which better aligns optimization with deployment and reduces the relaxation gap, while still enabling pathwise gradients through the soft stopping and mask dynamics.





\subsection{Training Procedure} \label{sec:alg}

Policy collapse, where the policy becomes prematurely overly deterministic and exploration deteriorates, is a common challenge in RL and is often addressed with entropy regularization, as in \emph{maximum entropy} RL \citep{ziebart2008maximum, pmlr-v80-haarnoja18b}. We follow this approach and add an entropy bonus to the ST objective. Let \(\mathcal{H}_t(\theta)\triangleq -\sum_{a\in\mathcal{A}}\pi_\theta(a\mid x(\bar m_t))\log \pi_\theta(a\mid x(\bar m_t))\). For a single rollout, define \(\bar H(x,y,\theta,\varepsilon)\triangleq \sum_{t=0}^{k-1}\tilde s_t \mathcal{H}_t(\theta)\). The final policy objective minimized by NM-PPG is
\begin{equation}\label{eq:jaux}
\bar{J}_{\mathrm{ent}}(\theta)
\triangleq
\mathbb{E}_{\mathbf{x},\mathbf{y}}\mathbb{E}_{\varepsilon}
\bigl[
\bar{G}(\mathbf{x},\mathbf{y},\theta,\varepsilon)
-
\lambda_{\mathrm{ent}}\bar H(\mathbf{x},\mathbf{y},\theta,\varepsilon)
\bigr],
\end{equation}
where \(\lambda_{\mathrm{ent}}\ge 0\) controls the strength of entropy regularization. The entropy term encourages exploration during training, while the cost--loss trade-off itself is still determined by \(\bar G\), which targets the standard AFA objective in \eqref{eq:afa_obj_soft}.

\begin{algorithm}[t]
\caption{NM-PPG Training Procedure}
\label{alg:nmppg-batch}
\begin{algorithmic}[1]
\STATE \textbf{Input:} Fully observed dataset $\mathcal{D}=\{(x^{(i)},y^{(i)})\}_{i=1}^n$, initial parameters $(\theta, \phi)$, trade-off parameter $\alpha$, truncation horizon $k$, hard temperature $\tau_{\mathrm{hard}}$, soft-temperature schedule $(\tau_{\mathrm{soft}}^{(1)},\ldots,\tau_{\mathrm{soft}}^{(R)})$, entropy coefficient $\lambda_{\mathrm{ent}}$, step sizes $(\eta_\theta,\eta_\phi)$, maximum epochs per stage $E$, batch size $B$.
\STATE Split $\mathcal{D}$ into training and validation sets $\mathcal{D}_{\mathrm{train}}$ and $\mathcal{D}_{\mathrm{val}}$.
\STATE Train predictor $f_\phi$ on $\mathcal{D}_{\mathrm{train}}$ using random feature masks, see \eqref{eq:afa_obj_soft}.
\STATE $(\theta^\star,\phi^\star)\leftarrow(\theta,\phi)$.
\FOR{$j=1,\ldots,R$}
    \STATE Set $\tau_{\mathrm{soft}}\leftarrow \tau_{\mathrm{soft}}^{(j)}$.
	    \FOR{$e=1,\ldots,E$}
	        \FOR{each minibatch $\mathcal{B}=\{(x^{(i)},y^{(i)})\}_{i=1}^{|\mathcal{B}|}\subset\mathcal{D}_{\mathrm{train}}$ of size $B$}
		        \FOR[Computed in parallel in practice.]{$i=1,\ldots,B$}
		                \STATE $(\bar{G}^{(i)},\bar H^{(i)},L_{\mathrm{pred}}^{(i)}) \leftarrow \text{\hyperref[alg:nmppg-rollout-single]{\textsc{ST-Rollout}}}(x^{(i)},y^{(i)},\theta,\phi,\alpha,k,\tau_{\mathrm{hard}},\tau_{\mathrm{soft}})$. \COMMENT{Alg.~\ref{alg:nmppg-rollout-single}} \label{line:alg1-rollout}
            \ENDFOR
	            \STATE $\bar{J}_{\mathrm{ent}} \leftarrow \tfrac{1}{|\mathcal{B}|}\sum_{i=1}^{|\mathcal{B}|}\!\bigl(\bar{G}^{(i)} - \lambda_{\mathrm{ent}}\bar H^{(i)}\bigr)$, $L_{\mathrm{pred}} \leftarrow \tfrac{1}{|\mathcal{B}|}\sum_{i=1}^{|\mathcal{B}|}L_{\mathrm{pred}}^{(i)}$.
	            \STATE $\theta \leftarrow \theta - \eta_\theta \nabla_\theta \bar{J}_{\mathrm{ent}}$, $\phi \leftarrow \phi - \eta_\phi \nabla_\phi L_{\mathrm{pred}}$. \label{line:grad}
		        \ENDFOR
        \STATE $L_{\mathrm{val}}(\theta,\phi) \leftarrow \frac{1}{|\mathcal{D}_{\mathrm{val}}|}\sum_{(x,y)\in\mathcal{D}_{\mathrm{val}}}\bigl[\ell(f_\phi(x(m_{t_\theta(x)})),y)+\alpha c(m_{t_\theta(x)})\bigr]$. \label{line:alg1-val-loss}
	        \STATE If $L_{\mathrm{val}}(\theta,\phi)<L_{\mathrm{val}}(\theta^\star,\phi^\star)$, update $(\theta^\star,\phi^\star)\leftarrow(\theta,\phi)$.
		        \STATE Perform early-stopping check for stage $j$ based on $L_{\mathrm{val}}$ (see main text).
			    \ENDFOR
	    \STATE Set $(\theta,\phi)\leftarrow(\theta^\star,\phi^\star)$.
\ENDFOR
\STATE \textbf{Return} $(\theta^\star,\phi^\star)$.
\end{algorithmic}
\end{algorithm}

\begin{wrapfigure}[27]{R}{0.55\textwidth}
\raisebox{1.2em}[\height][\depth]{%
\begin{minipage}{\linewidth}
\hrule\vspace{0.6ex}
\refstepcounter{algorithm}\label{alg:nmppg-rollout-single}
\noindent\textbf{Algorithm~\thealgorithm:} ST Rollout for One Instance $(x,y).$
\vspace{0.4ex}\hrule\vspace{0.6ex}
\begin{algorithmic}[1]
\STATE \textbf{Input:} $x,y,\theta,\phi,\alpha,k,\tau_{\mathrm{hard}},\tau_{\mathrm{soft}}$.
\STATE Initialize $m_0=\mathbf{0}$, $\tilde m_0=\mathbf{0}$, $\bar m_0=\mathbf{0}$, and $\tilde s_0=1$.
\STATE Initialize $\bar{G}\leftarrow 0$, $\bar H\leftarrow 0$, and $L_{\mathrm{pred}}\leftarrow 0$.
\FOR{$t=0,\ldots,k-1$}
    \STATE $z_t \leftarrow z_\theta(x(\bar m_t))$.
    \STATE $z_{t,j}\leftarrow -\infty$ for all $j\in[d]$ with $m_{t,j}=1$.
    \STATE $\pi_\theta(\cdot \mid x(\bar m_t)) \leftarrow \mathrm{softmax}(z_t/\tau_{\mathrm{hard}})$.
    \STATE $\varepsilon_t \sim \operatorname{Gumbel}(0,1)^{d+1}$.
    \STATE $\tilde a_t \leftarrow \mathrm{softmax}(((z_t/\tau_{\mathrm{hard}})+\varepsilon_t)/\tau_{\mathrm{soft}})$.
    \STATE $\tilde r_t\leftarrow \mathrm{softmax}(((z_{t,1:d}/\tau_{\mathrm{hard}})+\varepsilon_{t,1:d})/\tau_{\mathrm{soft}})$.
    \STATE $r_t \leftarrow \operatorname{onehot}(\operatorname{argmax}_{j\in[d]}(\tilde r_{t,j}))$.
    \STATE $\bar r_t \leftarrow r_t - \mathrm{sg}(\tilde r_t) + \tilde r_t$, $\ell_t \leftarrow \ell(f_\phi(x(\bar m_t)),y)$.
    \STATE $\Delta_t \leftarrow \alpha(1-\tilde a_{t,d+1})\sum_{j=1}^{d}\bar r_{t,j}c(j)+\tilde a_{t,d+1}\ell_t$.
    \STATE $\bar{G} \leftarrow \bar{G} + \tilde s_t\Delta_t$, $L_{\mathrm{pred}} \leftarrow L_{\mathrm{pred}} + \ell_t$.
    \STATE $\bar H \leftarrow \bar H + \tilde s_t\,\mathcal{H}(\pi_\theta(\cdot\mid x(\bar m_t)))$.
    \STATE $\tilde m_{t+1}\leftarrow \tilde m_t + (1-\tilde m_t)\odot \tilde r_t$.
    \STATE $m_{t+1}\leftarrow m_t + (1-m_t)\odot r_t$.
    \STATE $\bar m_{t+1}\leftarrow m_{t+1} - \mathrm{sg}(\tilde m_{t+1}) + \tilde m_{t+1}$.
    \STATE $\tilde s_{t+1}\leftarrow \tilde s_t(1-\tilde a_{t,d+1})$.
\ENDFOR
\STATE $\ell_k \leftarrow \ell(f_\phi(x(\bar m_k)),y)$.
\STATE $\bar{G}\leftarrow \bar{G}+\tilde s_k\ell_k$, $L_{\mathrm{pred}}\leftarrow (L_{\mathrm{pred}}+\ell_k)/(k+1)$.
\STATE \textbf{Return} $(\bar{G},\bar H,L_{\mathrm{pred}})$.
\end{algorithmic}
\vspace{0.4ex}\hrule
\end{minipage}%
}
\end{wrapfigure}
Alg.~\ref{alg:nmppg-batch} summarizes training for \emph{non-myopic pathwise policy gradients} (NM-PPG), while Alg.~\ref{alg:nmppg-rollout-single} gives the single-instance ST rollout from Section~\ref{sec:relaxation}. As described in Section~\ref{sec:intro}, NM-PPG is model-free: it does not learn transition dynamics, but optimizes the acquisition policy directly from observed rollouts (see Appendix~\ref{appendix:model-based}). Training uses a fixed soft-temperature schedule. At each \(\tau_{\mathrm{soft}}\), we train for at most \(E\) epochs and stop early when the deterministic validation loss \(L_{\mathrm{val}}\) in line~\ref{line:alg1-val-loss} has not improved for 100 epochs. We keep the globally best checkpoint \((\theta^\star,\phi^\star)\) across all stages, restore it before moving to the next temperature, and return it as the final model. This avoids tuning both \(\tau_{\mathrm{soft}}\) and the number of epochs for each dataset. We first train \(f_\phi\) on random masks in accordance with \eqref{eq:afa_obj_soft}. We then refine it on rollout-visited masks using \(L_{\mathrm{pred}}\), keeping the predictor aligned with the policy-induced state distribution. Further details on training and inference are given in Appendix~\ref{appendix:training-procedure}.


\section{Experiments} \label{sec:experiments}
In this section, we present our experimental setup and results.
Additional details and extended results are in Appendix~\ref{appendix:results}.

\textbf{Datasets.}
We evaluate on 12 datasets commonly used in previous work on AFA, spanning synthetic, real-world tabular, real-world medical, and real-world image settings. Synthetic datasets include two \textbf{Cube-NM} variants with \(n_c=5\) and \(\sigma\in\{0.1,0.2\}\) \citep{schütz2025afabenchgenericframeworkbenchmarking}, and \textbf{Syn1} and \textbf{Syn3} (context-dependent synthetic benchmarks) \citep{norcliffe2025stochasticencodingsactivefeature}. Cube-NM, Syn1, and Syn3 are discussed in more detail below. Real-world tabular datasets are \textbf{Connect4} (game outcome prediction) \citep{uci_connect4}, \textbf{Splice} (splice-junction classification) \citep{uci_splice}, and \textbf{EngineFaultDB} (engine fault diagnosis) \citep{enginefaultdb_repo}. Real-world medical datasets are \textbf{Metabric} (breast cancer subtype prediction) \citep{curtis2012genomic, pereira2016somatic}, \textbf{Mortality} (mortality prediction) \citep{cdc_nhanes}, and \textbf{Diabetes} (diabetes diagnosis) \citep{cdc_nhanes}. Real-world image datasets are \textbf{MNIST} (digit image classification) \citep{lecun1998gradient} and \textbf{Fashion-MNIST} (clothing image classification) \citep{xiao2017fashionmnist}. Non-uniform feature costs are used on Cube-NM, Mortality, and Diabetes; the remaining datasets use uniform unit feature costs. Table~\ref{tab:dataset-summary} in Appendix~\ref{appendix:datasets} summarizes information about the datasets. Additional dataset details, including a motivation for their suitability in AFA where features are assumed costly, are provided in Appendix~\ref{appendix:datasets}.


\textbf{Baselines.}
We compare our proposed method, \textbf{NM-PPG}, against the following baselines. Three myopic baselines: \textbf{DiFA} \citep{GhoshLan2023DiFA}, \textbf{GDFS} \citep{covert2023learning}, and \textbf{DIME} \citep{gadgil2024estimating}. We explain myopic methods in the next subsection, and in more detail in Appendix \ref{appendix:baselines}. Two methods from the non-myopic AFA literature that do not use explicit RL: \textbf{AACO} \citep{DBLP:conf/icml/ValanciusLO24} and \textbf{SEFA} \citep{norcliffe2025stochasticencodingsactivefeature}. Two non-myopic RL methods: \textbf{GSMRL} \citep{pmlr-v139-li21p} and \textbf{OL} \citep{kachuee2018opportunistic}. GSMRL is model-based and uses PPO \citep{schulman2017ppo} as its RL optimizer, while OL is model-free and uses DQN \citep{mnih2015human}. See Appendix~\ref{appendix:model-based} for how we use the term model-based in the AFA-POMDP. Full details for all baselines are provided in Appendix~\ref{appendix:baselines}.

\textbf{Hyperparameters.}
For NM-PPG (Alg.~\ref{alg:nmppg-batch}), we use horizon \(k=\min(d,30)\) for all datasets. For fair comparison, we use the same horizon for all baselines where relevant; see Appendix~\ref{appendix:baselines} for details. We set \(\tau_{\mathrm{hard}}=1.0\) and train in stages with \(\tau_{\mathrm{soft}}\in\{0.8,0.5,0.2,0.05,0.02\}\) across all datasets. We set the maximum number of epochs per stage to \(E=2000\), but stop a stage early if the validation loss has not improved for 100 epochs. We set the policy learning rate to \(10^{-3}\), the predictor learning rate to \(10^{-4}\), and the entropy coefficient to \(\lambda_{\mathrm{ent}}=0.5\). Both the predictor $f_{\phi}$ and policy $\pi_{\theta}$ (and other models for baselines) are implemented as neural networks. Remaining hyperparameters for all methods, including neural network architectures, are deferred to Appendix~\ref{appendix:hyperparams}.

\textbf{Training and Evaluation.}
For each dataset, we split the data into training and test sets. For NM-PPG, training follows Alg.~\ref{alg:nmppg-batch} and inference on test instances follows Alg.~\ref{alg:nmppg-inference-single} (in Appendix \ref{appendix:training-procedure}). See Appendix \ref{appendix:baselines} for training/evaluation details of baselines. Each experiment is repeated five times with different random seeds, and we report the mean with one-standard-deviation shading across runs. We report average acquisition cost and predictive performance on the test set. We use accuracy as the primary metric on balanced datasets and F1-score on imbalanced datasets. Each point for each method in the performance plots corresponds to a different value of the trade-off parameter \(\alpha\), which yields a different trade-off between feature acquisition cost and predictive performance. Appendix \ref{appendix:hyperparams} explains how we choose \(\alpha\) values, and Appendix~\ref{appendix:baselines} clarifies how each baseline uses \(\alpha\).

\begin{figure*}[t]
\centering
\includegraphics[width=\textwidth]{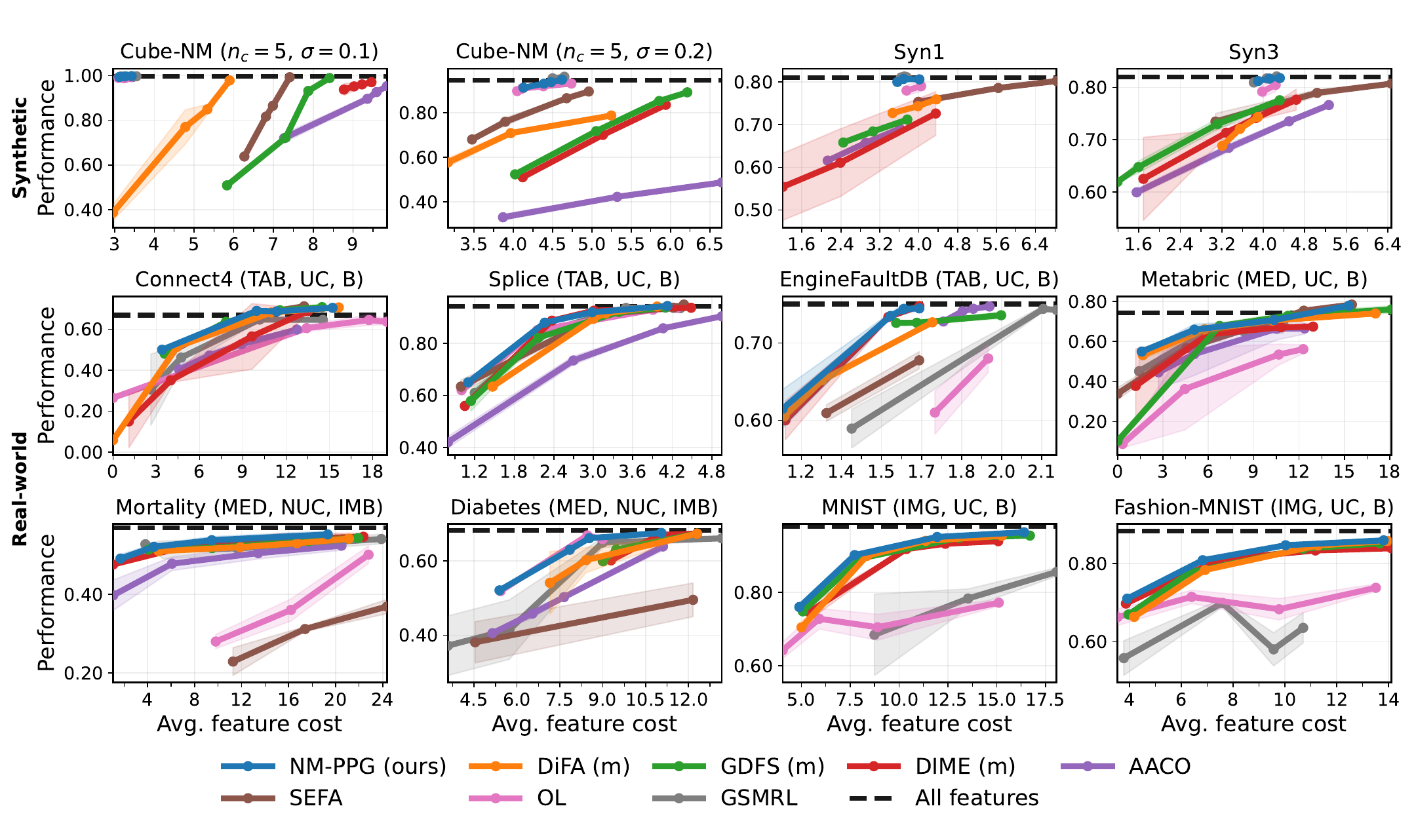}
\caption{Results for all methods across synthetic (top row) and real-world (2 bottom rows) datasets. The row labels separate synthetic and real-world datasets. Performance is measured by accuracy for balanced datasets and F1-score for imbalanced datasets. The suffix (m) in the legend denotes myopic baselines. For real-world datasets, titles use the following acronyms: TAB/MED/IMG (tabular/medical/image), NUC/UC (non-uniform-cost/uniform-cost), and IMB/B (imbalanced/balanced).}
\label{fig:main-results-grid}
\end{figure*}

\subsection{Results and Discussion} \label{sec:results}

\textbf{Myopic vs.\ Non-Myopic AFA.} A myopic AFA policy uses a local one-step lookahead: at each state, it compares stopping immediately with acquiring one additional feature and then stopping. This does not mean that the deployed policy is limited to one acquisition overall; the one-step rule can be applied repeatedly until the policy stops or reaches the maximum acquisition horizon. In contrast, a non-myopic policy optimizes over a longer future horizon, so an acquisition can be useful not because of its immediate predictive value, but because it changes which future acquisitions become informative and when the policy should stop. In our \(k\)-truncated objective, \(k\) is the maximum number of acquisitions. A non-myopic policy optimizes over the remaining \(k-t\) steps at step \(t\), whereas myopic methods use the same acquisition cap \(k\) but plan only one step ahead at each decision. Non-myopic planning is particularly important when features are jointly informative about the label but only weakly informative on their own (see concrete example below, and further discussion in Appendix~\ref{appendix:myopic}). Planning over a longer horizon naturally makes optimization more difficult, and NM-PPG can be trained with arbitrary horizons to balance this trade-off. Appendix~\ref{appendix:baselines} explains how this distinction relates to the considered baselines.

Cube-NM was designed to expose this difference. In each instance, a hidden context determines which feature block contains label information. The context features identify the relevant block, but are themselves uninformative about the label. The parameter \(n_c\) denotes the number of possible contexts, and \(\sigma\) controls the noise level in the generated feature values; Appendix~\ref{appendix:datasets} gives the full construction. Therefore, an optimal policy must first acquire the context features, whereas a myopic policy would not select them because they provide no immediate label information. This non-myopic structure is described in Appendix~\ref{appendix:datasets}, and Theorem~\ref{thm:cube-nm} (adapted from \citep{schütz2025afabenchgenericframeworkbenchmarking}) shows that, on this dataset, non-myopic selection can require substantially fewer acquisitions than a myopic policy. Syn1 and Syn3 have also been used in prior AFA work to evaluate non-myopic acquisition methods \citep{norcliffe2025stochasticencodingsactivefeature}. In both datasets, the feature \(x_{11}\) acts as a context variable: it is not directly predictive of the label, but it determines which later interaction block is informative. Thus, a non-myopic policy should acquire \(x_{11}\) early and then choose different follow-up features depending on its value, while a myopic policy is biased toward features with more immediate label information. These synthetic datasets highlight that \emph{the advantage of using a longer planning horizon is dataset dependent}. When informative features contribute largely additively, or are immediately useful for predicting the label, a one-step lookahead policy may already be optimal. Thus, a useful non-myopic method should not only exploit long-horizon structure when it is present, but also recover the effectively myopic policy when that is optimal. As discussed below, NM-PPG does this more effectively than existing non-myopic baselines.

\textbf{Non-Myopic Baselines.} Existing non-myopic methods are known to be highly unstable, and are often outperformed by myopic baselines such as GDFS and DIME \citep{schütz2025afabenchgenericframeworkbenchmarking, covert2023learning, gadgil2024estimating}. This applies broadly across non-myopic AFA methods, although for different reasons. RL-based methods directly optimize the sequential AFA-POMDP, but can suffer from high-variance policy-gradient learning. Non-RL non-myopic methods such as AACO and SEFA can be more stable because they do not optimize the full POMDP directly. Instead, they form acquisition policies by exploiting the AFA-specific observation that non-myopic selection requires reasoning about jointly informative features; see Appendix~\ref{appendix:baselines} for details. However, this also makes them approximate and biased relative to long-term cost minimization in the AFA-POMDP, and they scale poorly to high-dimensional datasets. NM-PPG is designed to address these combined limitations: it uses the structure of AFA to obtain stable pathwise gradients while still targeting long-term cost minimization in the AFA-POMDP. As a result, it exploits non-myopic structure more effectively than existing methods, while remaining consistent with myopic baselines on many real-world datasets where myopic selection appears sufficient. No other method performs consistently well across all datasets.

\textbf{Results on Synthetic Datasets.} The first row of Figure~\ref{fig:main-results-grid} shows the results on the synthetic datasets. All features correspond to the performance of a predictor trained and evaluated with all features available on the respective dataset. NM-PPG is either consistent with or significantly better than existing non-myopic AFA methods and myopic baselines. All synthetic datasets contain explicit non-myopic structure, meaning that a non-myopic policy is required for optimal performance (see details about these datasets above). The strong performance on these datasets shows that NM-PPG can exploit non-myopic structure when it is present. Appendix~\ref{appendix:acquisition-paths} confirms this by visualizing that NM-PPG recovers the intended context-first acquisition paths on Cube-NM, Syn1, and Syn3.

\textbf{Results on Real-World Datasets.} The second and third rows of Figure~\ref{fig:main-results-grid} show the results on the real-world datasets. AACO and SEFA are not included on image datasets, as they do not scale to high-dimensional datasets. NM-PPG is either consistent with or significantly better than existing non-myopic AFA methods. This is particularly clear on the high-dimensional image datasets MNIST and Fashion-MNIST, where NM-PPG remains stable while prior RL-based non-myopic methods are much less reliable, addressing a well-known limitation of earlier non-myopic AFA approaches. On some of the real-world datasets, NM-PPG has similar performance to the best performing myopic baselines. A likely explanation is that these datasets do not exhibit strong non-myopic structure, so a myopic acquisition policy is close to optimal; in such cases, NM-PPG recovers the stability of myopic selection rather than forcing unnecessary long-horizon behavior. On some medical datasets, however, we observe a clear benefit of NM-PPG over myopic baselines. Appendix~\ref{appendix:acquisition-paths} shows that NM-PPG indeed learns different acquisition policies from the myopic methods on these datasets, requiring non-myopic reasoning, which explains the better performance.

\textbf{Additional Experiments.} Appendix~\ref{appendix:ablations} reports ablation studies that investigate the benefit of the ST rollout procedure from Section~\ref{sec:relaxation} and the role of entropy regularization from Section~\ref{sec:alg}. Appendix~\ref{appendix:acquisition-paths} visualizes acquisition trajectories, showing that NM-PPG recovers the optimal non-myopic acquisition trajectories on synthetic datasets and finds non-myopic acquisition policies on two medical datasets that myopic policies do not recover. Appendix~\ref{appendix:runtime} reports runtime results, Appendix~\ref{appendix:training-dynamics} analyzes training dynamics for NM-PPG, and Appendix~\ref{appendix:full-results} separates the main results into comparisons against myopic and non-myopic baselines.

\section{Conclusion}
We introduced NM-PPG, a non-myopic AFA method for long-term cost minimization in the AFA-POMDP. Like RL-based non-myopic AFA methods, NM-PPG directly optimizes this long-term objective, but avoids generic RL estimators such as value-function learning or score-function policy gradients. Instead, it exploits the structure of AFA through a continuous relaxation of the acquisition process, which enables pathwise gradients through full acquisition trajectories (Section~\ref{sec:pathwise}). We then reduce the gap between relaxed training and discrete deployment using an ST rollout (Section~\ref{sec:relaxation}), and stabilize optimization with entropy regularization and staged temperature sharpening (Section~\ref{sec:alg}). NM-PPG also avoids the main limitation of non-RL non-myopic AFA methods, which replace the sequential decision problem with an approximate proxy based on non-adaptive joint informativeness. The experiments in Section~\ref{sec:results} show that NM-PPG (i) is more stable than existing non-myopic baselines, (ii) remains consistent with strong myopic baselines on datasets where myopic one-step acquisition is sufficient, and (iii) outperforms them when the dataset exhibits genuine non-myopic structure. Overall, NM-PPG provides a practical way to obtain non-myopic AFA policies while retaining much of the robustness that makes myopic methods attractive.

\section*{Acknowledgments}

This work was partially supported by the Wallenberg AI, Autonomous Systems and Software Program (WASP) funded by the Knut and Alice Wallenberg Foundation. The computations and data handling was enabled by resources provided by the National Academic Infrastructure for Supercomputing in Sweden (NAISS), partially funded by the Swedish Research Council through grant agreement no. 2022-06725.

\bibliographystyle{unsrtnat}  
\bibliography{references}


\appendix

\section{Proofs} \label{appendix:proofs}

\subsection{Proof of Theorem~\ref{theorem:mdp}}
By construction of the AFA-POMDP in Section~\ref{sec:pomdpinstantiation}, the one-step cost is
\begin{equation}
C((m_t,x,y),a)=
\begin{cases}
\alpha c(a), & a\in[d],\\
\ell(f_\phi(x(m_t)),y), & a=d+1.
\end{cases}
\end{equation}
Let \(t_\theta(x)\) be the policy-induced first stopping step, so \(a_{t_\theta(x)}=d+1\). The trajectory return is
\begin{equation}
G(x,y,\pi_\theta)
=
\sum_{t=0}^{t_\theta(x)-1}\alpha c(a_t)
+
\ell(f_\phi(x(m_{t_\theta(x)})),y).
\end{equation}
Because \(m_{t_\theta(x)}\) records exactly the features acquired before stopping, \(c(m_{t_\theta(x)})=\sum_{t=0}^{t_\theta(x)-1}c(a_t)\). Hence
\begin{equation}
G(x,y,\pi_\theta)
=
\alpha c(m_{t_\theta(x)})
+
\ell(f_\phi(x(m_{t_\theta(x)})),y).
\end{equation}
Taking expectations over \((\mathbf{x},\mathbf{y})\) and policy rollouts gives
\begin{equation}
J(\pi_\theta)
=
\mathbb{E}_{\mathbf{x},\mathbf{y}}\mathbb{E}_{\pi_\theta}
\left[
\ell(f_\phi(\mathbf{x}(m_{t_\theta(\mathbf{x})})),\mathbf{y})
+
\alpha c(m_{t_\theta(\mathbf{x})})
\right],
\end{equation}
which is exactly the standard AFA objective in \eqref{eq:afa_obj_soft}. \(\square\)

\subsection{Proof of Theorem~\ref{theorem:relaxed_to_discrete}}
We now prove property (i). We show that \(\tilde m_t \in [0,1]^d\) for all \(t\). We prove this coordinate-wise by induction. At \(t=0\), we initialize \(\tilde m_0=\mathbf{0}\), so every coordinate lies in \([0,1]\). Now assume \(\tilde m_{t,j}\in[0,1]\) for some step \(t\) and coordinate \(j\). Since \(\tilde r_t\) is a softmax over feature logits, \(\tilde r_{t,j}\in[0,1]\). The mask update in \eqref{eq:relaxed_dynamics} gives
\begin{equation}
\tilde m_{t+1,j}
=
\tilde m_{t,j} + (1-\tilde m_{t,j})\tilde r_{t,j}.
\end{equation}
Because \(1-\tilde m_{t,j}\ge 0\) and \(\tilde r_{t,j}\ge 0\), it follows that \(\tilde m_{t+1,j}\ge \tilde m_{t,j}\ge 0\). For the upper bound,
\begin{equation}
\tilde m_{t+1,j}
=
\tilde m_{t,j} + (1-\tilde m_{t,j})\tilde r_{t,j}
\le
\tilde m_{t,j} + (1-\tilde m_{t,j})\cdot 1
=
1.
\end{equation}
Hence \(\tilde m_{t+1,j}\in[0,1]\). By induction, every coordinate of \(\tilde m_t\) remains in \([0,1]\) for all \(t\), and therefore \(\tilde m_t\in[0,1]^d\) for all \(t\).

We now prove property (ii). From the recursion \(\tilde s_{t+1}=\tilde s_t(1-\tilde a_{t,d+1})\), we have
\begin{equation}
\tilde s_t \tilde a_{t,d+1}
=
\tilde s_t-\tilde s_t(1-\tilde a_{t,d+1})
=
\tilde s_t-\tilde s_{t+1}.
\end{equation}
Therefore, the stop-time masses satisfy
\begin{equation}
\sum_{t=0}^{k-1}\tilde s_t \tilde a_{t,d+1} + \tilde s_k
=
\sum_{t=0}^{k-1}(\tilde s_t-\tilde s_{t+1}) + \tilde s_k
=
\bigl((\tilde s_0-\tilde s_1)+(\tilde s_1-\tilde s_2)+\cdots+(\tilde s_{k-1}-\tilde s_k)\bigr)+\tilde s_k
=
\tilde s_0
=
1,
\end{equation}
where the intermediate terms cancel telescopically, and the last equality uses the initialization \(\tilde s_0=1\) from \eqref{eq:relaxed_dynamics}. Thus the induced stop-time masses form a valid distribution over the stopping step.

We now prove property (iii). Assume \(\tilde a_t\in\{e_1,\ldots,e_{d+1}\}\) for each \(t\). Let \(t_\theta(x)\) be the first step \(t<k\) such that \(\tilde a_{t,d+1}=1\), with \(t_\theta(x)=k\) if no such step occurs. For every continuation step \(t<t_\theta(x)\), the hard action must be a feature acquisition. We denote this feature by \(a_t\in[d]\). Then \(\tilde a_{t,d+1}=0\), \(1-\tilde a_{t,d+1}=1\), and the conditional feature-acquisition distribution satisfies \(\tilde r_t=e_{a_t}\). Because acquired feature logits are blocked, no feature action can be selected more than once. Hence the mask update in \eqref{eq:relaxed_dynamics} reduces to
\begin{equation}
\tilde m_{t+1}
=
\tilde m_t + (1-\tilde m_t)\odot e_{a_t},
\end{equation}
which is exactly the discrete feature-acquisition update. Therefore, by induction, \(\tilde m_t=m_t\) for all \(t\le t_\theta(x)\). If \(t_\theta(x)\le k-1\), then at the stopping step \(\tilde a_{t_\theta(x),d+1}=1\), so \(\tilde s_{t_\theta(x)+1}=0\). The feature distribution at the stopping step is immaterial: the acquisition-cost multiplier \(1-\tilde a_{t_\theta(x),d+1}\) is zero. If the recursion is formally unrolled after this hard stop, all later terms in \eqref{eq:relaxed_return_and_objective} have zero survival mass and therefore do not affect \(\tilde G\). If no earlier stop occurs, then \(t_\theta(x)=k\), and the same argument gives \(\tilde m_t=m_t\) for all \(t\le k\). The survival variables satisfy
\begin{equation}
\tilde s_t=
\begin{cases}
1, & t\le t_\theta(x),\\
0, & t>t_\theta(x),
\end{cases}
\end{equation}
In the hard case, this stop-time distribution is degenerate: exactly one stop mass is equal to \(1\). If stopping occurs at some \(t_\theta(x)\le k-1\), then \(\tilde s_{t_\theta(x)}=1\), \(\tilde a_{t_\theta(x),d+1}=1\), and all remaining masses are zero. If no earlier stop occurs, then \(\tilde s_k=1\).

Therefore, the acquisition-cost part in \eqref{eq:relaxed_return_and_objective} becomes
\begin{equation}
\sum_{t=0}^{k-1}\tilde s_t\alpha(1-\tilde a_{t,d+1})c(\tilde r_t)
\;=\;
\sum_{t=0}^{t_\theta(x)-1} \alpha c(a_t),
\end{equation}
since before stopping \(\tilde a_{t,d+1}=0\) and \(\tilde r_t=e_{a_t}\), at the stopping step \(1-\tilde a_{t,d+1}=0\), and after stopping \(\tilde s_t=0\). When \(\tilde a_{t,d+1}=1\), the conditional feature distribution \(\tilde r_t\) is immaterial because the continuation branch has zero mass.
For the terminal-loss part,
\begin{equation}
\sum_{t=0}^{k-1}\tilde s_t \tilde a_{t,d+1}\tilde\ell_t+\tilde s_k\tilde\ell_k
\;=\;
\tilde\ell_{t_\theta(x)},
\end{equation}
because exactly one stop mass is active: either at some \(t_\theta(x)\le k-1\) via \(\tilde s_{t_\theta(x)}\tilde a_{t_\theta(x),d+1}=1\), or at \(k\) via \(\tilde s_k=1\) if no earlier stop occurs. Since \(\tilde m_{t_\theta(x)}=m_{t_\theta(x)}\), we also have \(\tilde\ell_{t_\theta(x)}=\ell(f_\phi(x(m_{t_\theta(x)})),y)\). Hence
\begin{equation}
\tilde G(x,y,\theta,\varepsilon)
=
\ell\!\bigl(f_\phi(x(m_{t_\theta(x)})),y\bigr)
+\sum_{t=0}^{t_\theta(x)-1} \alpha c(a_t)
=
G(x,y,\pi_\theta),
\end{equation}
Taking expectation over \((\mathbf{x},\mathbf{y})\) and policy randomness gives \(\tilde J(\theta)=J(\pi_\theta)\).

We now prove property (iv). Fix \((x,y)\) and a realization of the Gumbel noise. We prove convergence by induction over rollout steps. The base case is immediate because \(\tilde m_0\) is fixed. Assume that \(\tilde m_t(\tau_{\mathrm{soft}})\) converges as \(\tau_{\mathrm{soft}}\to0\). Since the masked input map and policy network are continuous in \(\tilde m_t\), the logits \(z_t(\tau_{\mathrm{soft}})=z_\theta(x(\tilde m_t(\tau_{\mathrm{soft}})))\) also converge. Define \(v_t(\tau_{\mathrm{soft}})\triangleq z_t(\tau_{\mathrm{soft}})/\tau_{\mathrm{hard}}+\varepsilon_t\), and let \(v_t^0\) be its limit. Because the Gumbel distribution has a continuous density, \(v_t^0\) has a unique maximizer almost surely. Let \(i_t^\star=\arg\max_i v_{t,i}^0\). Then the softmax defining \(\tilde a_t\) concentrates on this unique maximizer:
\begin{equation}
\tilde a_t(\tau_{\mathrm{soft}})
\to
e_{i_t^\star}
\qquad\text{almost surely as }\tau_{\mathrm{soft}}\to0.
\end{equation}
The same argument applied to the feature-only logits \(z_{t,1:d}(\tau_{\mathrm{soft}})/\tau_{\mathrm{hard}}+\varepsilon_{t,1:d}\) shows that \(\tilde r_t\) converges almost surely to a one-hot feature action. The recursions in \eqref{eq:relaxed_dynamics} are continuous in \(\tilde m_t\), \(\tilde r_t\), and \(\tilde a_{t,d+1}\), so \(\tilde m_{t+1}\) and \(\tilde s_{t+1}\) also converge. This completes the induction. Therefore the relaxed actions become hard almost surely as \(\tau_{\mathrm{soft}}\to0\).
\(\square\)

\subsection{Proof of Proposition~\ref{prop:st-rollout-gradient} and Further Gradient Details} \label{appendix:proof-st-gradient}
\begin{proof}
We derive the fixed-rollout ST surrogate gradient in Proposition~\ref{prop:st-rollout-gradient} by applying the multivariate chain rule to the no-stop straight-through rollout computation graph used by NM-PPG. This is the derivative computed through the stop-gradient graph, not the exact derivative of the hard-forward objective. We use the same compact notation as in Section~\ref{sec:relaxation}; vector-Jacobian contractions are left implicit.

Fix one realization \((x,y,\varepsilon)\), and define
\begin{equation}
g_{\mathrm{ST}}(\theta)
\triangleq
\bar G(x,y,\theta,\varepsilon).
\end{equation}
Unlike the discrete deployment policy, the training rollout does not sample a hard stop action. Instead, at every step \(t<k\), it samples a hard feature action \(r_t\) from the feature logits conditioned on not stopping. The relaxed stop masses \(\tilde a_{t,d+1}\) and survival weights \(\tilde s_t\) then determine how much each stopping time contributes to the objective. Thus, for the ST pathwise derivative, \(g_{\mathrm{ST}}(\theta)\) is treated as a function of the variables
\begin{equation}
\bigl(\bar r_t,\bar m_t,\tilde a_{t,d+1},\tilde s_t\bigr)_{t=0}^{k},
\end{equation}
where the terms involving \(\bar r_k\) and \(\tilde a_{k,d+1}\) are dummy terminal terms whose derivatives are zero. The dependence on \(\theta\) enters through the policy logits \(z_t=z_\theta(x(\bar m_t))\), the Gumbel-Softmax samples \(\tilde a_t\), the conditional feature distributions \(\tilde r_t\), and the relaxed mask and survival recursions.

Equivalently, there is a scalar function \(F\) such that
\begin{equation}
g_{\mathrm{ST}}(\theta)
=
F\bigl((\bar r_t(\theta),\bar m_t(\theta),\tilde a_{t,d+1}(\theta),\tilde s_t(\theta))_{t=0}^{k}\bigr).
\end{equation}
This representation is useful because \(F\) has no additional explicit dependence on \(\theta\) once \((x,y,\varepsilon)\) is fixed: all dependence on \(\theta\) is mediated by the rollout variables. Importantly, this does not assume that the variables \(u_0,\ldots,u_k\) are independent. They may be coupled through the rollout recursion; the chain rule only treats them as formal arguments of \(F\), while their coupling is captured by the total derivatives \(\partial u_t/\partial\theta\). Thus, for any scalar composition \(h(\theta)=F(u_0(\theta),\ldots,u_k(\theta))\), the multivariate chain rule gives \(\nabla_\theta h(\theta)=\sum_{t=0}^{k}(\partial F/\partial u_t)(\partial u_t/\partial\theta)\). Taking \(u_t=(\bar r_t,\bar m_t,\tilde a_{t,d+1},\tilde s_t)\) and expanding this vector into its components gives
\begin{equation}
\begin{aligned}
\nabla^{\mathrm{ST}}_\theta g_{\mathrm{ST}}(\theta)
=
\sum_{t=0}^{k}
\Bigg(
\frac{\partial \bar G}{\partial \bar r_t}
\frac{\partial \bar r_t}{\partial \theta}
+
\frac{\partial \bar G}{\partial \tilde a_{t,d+1}}
\frac{\partial \tilde a_{t,d+1}}{\partial \theta}
+
\frac{\partial \bar G}{\partial \bar m_t}
\frac{\partial \bar m_t}{\partial \theta}
+
\frac{\partial \bar G}{\partial \tilde s_t}
\frac{\partial \tilde s_t}{\partial \theta}
\Bigg).
\end{aligned}
\end{equation}
Here, each partial derivative of \(\bar G\) is taken with respect to one explicit argument of the scalar rollout objective in \eqref{eq:st_nostop_return}, while holding the other arguments fixed. Dependencies across time are therefore not omitted: for example, the fact that \(\bar m_{t+1}\) depends on earlier acquisitions affects \(\partial \bar m_{t+1}/\partial\theta\), not the partial derivative \(\partial \bar G/\partial \bar m_{t+1}\). The same applies to \(\bar r_t\), \(\tilde a_{t,d+1}\), and \(\tilde s_t\), since these variables are all produced by the recursively unrolled rollout. The terminal mask and survival contributions are included through the \(t=k\) terms; the terminal action terms are zero because no feature is acquired and no stop probability is sampled at step \(k\).

The straight-through definitions are
\begin{equation}
\bar r_t=r_t-\mathrm{sg}(\tilde r_t)+\tilde r_t,
\qquad
\bar m_t=m_t-\mathrm{sg}(\tilde m_t)+\tilde m_t.
\end{equation}
Since \(\mathrm{sg}(\cdot)\) has zero derivative, the backward pass satisfies
\begin{equation}
\frac{\partial \bar r_t}{\partial \tilde r_t}=I,
\qquad
\frac{\partial \bar m_t}{\partial \tilde m_t}=I,
\end{equation}
and therefore
\begin{equation}
\frac{\partial \bar r_t}{\partial \theta}
=
\frac{\partial \bar r_t}{\partial \tilde r_t}
\frac{\partial \tilde r_t}{\partial \theta},
\qquad
\frac{\partial \bar m_t}{\partial \theta}
=
\frac{\partial \bar m_t}{\partial \tilde m_t}
\frac{\partial \tilde m_t}{\partial \theta}.
\end{equation}
Substituting these ST identities into the total derivative gives
\begin{equation}
\begin{aligned}
\nabla^{\mathrm{ST}}_\theta g_{\mathrm{ST}}(\theta)
=
\sum_{t=0}^{k}
\Bigg(
\frac{\partial \bar G}{\partial \bar r_t}
\frac{\partial \bar r_t}{\partial \tilde r_t}
\frac{\partial \tilde r_t}{\partial \theta}
+
\frac{\partial \bar G}{\partial \tilde a_{t,d+1}}
\frac{\partial \tilde a_{t,d+1}}{\partial \theta}
+
\frac{\partial \bar G}{\partial \bar m_t}
\frac{\partial \bar m_t}{\partial \tilde m_t}
\frac{\partial \tilde m_t}{\partial \theta}
+
\frac{\partial \bar G}{\partial \tilde s_t}
\frac{\partial \tilde s_t}{\partial \theta}
\Bigg).
\end{aligned}
\end{equation}
This is exactly the ST surrogate-gradient expression in Proposition~\ref{prop:st-rollout-gradient}. The forward sampling rule for \(r_t\) is described in Section~\ref{sec:pathwise} and Appendix~\ref{appendix:relaxation}; it is not differentiated through in the ST gradient.

The remaining derivatives are determined by the relaxed dynamics:
\begin{equation}
\tilde m_{t+1}
=
\tilde m_t+(1-\tilde m_t)\odot \tilde r_t,
\qquad
\tilde s_{t+1}=\tilde s_t(1-\tilde a_{t,d+1}),
\end{equation}
with \(\tilde m_0=0\), \(\tilde s_0=1\), and \(\tilde a_t\) given by the reparameterized sample in \eqref{eq:reparam}. Hence
\begin{equation}
\frac{\partial \tilde m_{t+1}}{\partial \theta}
=
\left(I-\operatorname{Diag}(\tilde r_t)\right)
\frac{\partial \tilde m_t}{\partial \theta}
+
\operatorname{Diag}(1-\tilde m_t)
\frac{\partial \tilde r_t}{\partial \theta},
\end{equation}
\begin{equation}
\frac{\partial \tilde s_{t+1}}{\partial \theta}
=
(1-\tilde a_{t,d+1})
\frac{\partial \tilde s_t}{\partial \theta}
-
\tilde s_t
\frac{\partial \tilde a_{t,d+1}}{\partial \theta}.
\end{equation}
These recursions show how an acquisition decision at an early step affects all later masks, stop masses, survival weights, costs, and terminal-loss terms. In practice, the expression above is evaluated by automatic differentiation through Alg.~\ref{alg:nmppg-rollout-single}.
\end{proof}

\section{Training and Inference Procedure} \label{appendix:training-procedure}

The overall NM-PPG training procedure and single-instance ST rollout are given in Alg.~\ref{alg:nmppg-batch} and Alg.~\ref{alg:nmppg-rollout-single} in Section~\ref{sec:alg}. As described in Section~\ref{sec:intro}, NM-PPG belongs to the model-free category of AFA methods: it does not learn a model of the transition dynamics, but instead optimizes the acquisition policy directly from observed rollouts (see Appendix~\ref{appendix:model-based}). We train NM-PPG in stages over a fixed soft-temperature schedule. For each value of \(\tau_{\mathrm{soft}}\), we train for at most \(E\) epochs, but stop the stage early if the validation loss \(L_{\mathrm{val}}\) computed in line~\ref{line:alg1-val-loss} has not improved for 100 epochs. The validation loss is computed using the deterministic policy rollout, analogous to the inference procedure in Alg.~\ref{alg:nmppg-inference-single}, which is also used for test evaluation. Throughout all stages, we maintain a single global best checkpoint \((\theta^\star,\phi^\star)\), defined as the policy and predictor with the smallest validation loss observed so far. After each stage, we restore this checkpoint before continuing with the next temperature, and the same checkpoint is returned as the final model. This avoids having to tune both \(\tau_{\mathrm{soft}}\) and the number of epochs separately for each dataset. The procedure begins by training the predictor \(f_{\phi}\) on randomly sampled masks, in accordance with the optimization problem in \eqref{eq:afa_obj_soft}. For each minibatch, we perform ST rollouts for all instances in parallel and update \((\theta,\phi)\) accordingly. For each instance \(i\), the rollout in line~\ref{line:alg1-rollout} returns the ST trajectory objective \(\bar G^{(i)}\), the entropy term \(\bar H^{(i)}\), and the predictor loss \(L_{\mathrm{pred}}^{(i)}=\frac{1}{k+1}\sum_{t=0}^{k}\ell(f_\phi(x^{(i)}(\bar m_t)),y^{(i)})\). This second stage of predictor training refines \(f_{\phi}\) on masks encountered during policy rollouts, keeping it aligned with the policy-induced state distribution rather than only with uniformly sampled masks.

Alg.~\ref{alg:nmppg-inference-single} gives the corresponding inference procedure for a single test instance. It applies the trained policy greedily, blocks already acquired features, and outputs the predictor label once the stop action is selected or the horizon is reached.

\begin{algorithm}[t]
\caption{NM-PPG Inference for One Test Instance $x$.}
\label{alg:nmppg-inference-single}
\begin{algorithmic}[1]
\STATE \textbf{Input:} test instance $x$, trained predictor $f_\phi$, trained policy parameters $\theta$, horizon $k$.
\STATE \textbf{Output:} prediction $\hat y$.
\STATE Initialize $m_0=\mathbf{0}$.
\FOR{$t=0,\ldots,k-1$}
    \STATE $z_t \leftarrow z_\theta(x(m_t))$.
    \STATE $z_{t,j}\leftarrow -\infty$ for all feature indices $j\in[d]$ with $m_{t,j}=1$.
    \STATE $a_t \leftarrow \arg\max_{i\in[d+1]} \operatorname{softmax}(z_t)_i$.
    \IF{$a_t=d+1$}
        \STATE $\hat y \leftarrow \arg\max f_\phi(x(m_t))$.
        \STATE \textbf{return} $\hat y$.
    \ENDIF
    \STATE $m_{t+1}\leftarrow m_t + (1-m_t)\odot \mathrm{onehot}(a_t)$.
\ENDFOR
\STATE \texttt{// Forced terminal prediction at horizon $k$.}
\STATE $\hat y \leftarrow \arg\max f_\phi(x(m_k))$.
\STATE \textbf{return} $\hat y$.
\end{algorithmic}
\end{algorithm}

\section{Experiments: More Details and Further Results} \label{appendix:results}

Experiments were conducted on shared compute clusters with NVIDIA A40 GPUs (48GB memory), although the full 48GB memory was not required for the runs reported here. GPU-based runs used one A40 GPU and corresponded to a single choice of method, dataset, trade-off parameter, and random seed. AACO was run on CPU workers because it is non-parametric and does not use GPU acceleration. Many runs were executed in parallel on the clusters.

\subsection{Description of Baseline Methods} \label{appendix:baselines}

For all baselines, we use the publicly available implementations released by the original authors.

There are two common experimental settings in AFA. In the \emph{hard-budget} setting, a fixed per-instance acquisition budget \(\kappa>0\) is chosen in advance, and each method acquires features for each instance until this budget is reached. The predictor is then evaluated at the resulting mask. This corresponds to minimizing prediction loss subject to a per-instance acquisition-cost constraint:
\begin{equation}
\label{eq:hard_budget_afa}
\min_{\theta}
\mathbb{E}_{\mathbf{x},\mathbf{y}}
\mathbb{E}_{\pi_{\theta}}
\bigl[
\ell(f_{\phi}(\mathbf{x}(m_{t_\theta(\mathbf{x})})),\mathbf{y})
\bigr]
\quad
\mathrm{s.t.}
\quad
c(m_{t_\theta(x)})\le \kappa
\;\text{for each instance }x.
\end{equation}
In the uniform-cost case, this corresponds to acquiring a fixed number of features for every instance. In the \emph{soft-budget} setting used in this paper, the method may stop at different times for different instances and is evaluated by the cost--loss objective in \eqref{eq:afa_obj_soft}. Thus, the number of acquired features is part of the policy decision, and \(\alpha\) controls the cost penalty in the same objective as prediction loss. This lets us compare full cost--performance curves rather than predictive performance at a fixed acquisition count.

Following prior AFA comparisons, and in particular the recommendation of the recent AFA benchmark \citep{schütz2025afabenchgenericframeworkbenchmarking}, we adapt all methods to the soft-budget setting in \eqref{eq:afa_obj_soft}, which is the setting we focus on in this paper, rather than evaluating some methods with a fixed hard budget and others with \eqref{eq:afa_obj_soft}. Mixing these settings would conflate differences between algorithms with differences between evaluation objectives. For baselines that were originally proposed for the hard-budget setting, we also checked the performance of these methods using their original hard-budget formulation, but found that the adaptation to \eqref{eq:afa_obj_soft} performed better on the validation split. A likely reason is that the adapted variant does not force the method to exhaust the full budget for every instance, but can stop early when the predictor is sufficiently confident about the label; we therefore use the adapted version in all experiments. Below, we explain how each baseline method is adapted to \eqref{eq:afa_obj_soft}, when adaptation is needed, and hence how \(\alpha\) is used for each method.

\subsubsection{Myopic Baselines}
The AFA-POMDP used in this paper is formally defined in Appendix~\ref{appendix:pomdp}. The immediate costs are chosen so that the induced expected return matches the standard AFA objective in \eqref{eq:afa_obj_soft}. In particular, \eqref{eq:appendix_q1} gives the corresponding \(1\)-step truncated value function used to describe myopic baselines.

Myopic methods are common in AFA because the full non-myopic problem is generally intractable: exact planning must reason over all future feature subsets and all possible future observations. They avoid this by asking only whether the policy should stop now or acquire one additional feature and then stop. The one-step state-action value is defined over all feasible actions \(\mathcal{A}(S_t)=U_t\cup\{d+1\}\), including the stop action. For a candidate acquisition \(a\in U_t\), let \(m_{t+1}=m_t+(1-m_t)\odot \mathrm{onehot}(a)\). Then, for \eqref{eq:afa_obj_soft}, this gives
\begin{equation}
\label{eq:appendix_baseline_myopic_q1}
\begin{aligned}
Q_1(x(m_t),d+1)
&=
\mathbb{E}_{\mathbf{y}\mid x_{S_t}}
\bigl[
\ell(f_\phi(x(m_t)),\mathbf{y})
\bigr],\\
Q_1(x(m_t),a)
&=
\alpha c(a)
+
\mathbb{E}_{\mathbf{x}_a,\mathbf{y}\mid x_{S_t}}
\bigl[
\ell(f_\phi(\mathbf{x}(m_{t+1})),\mathbf{y})
\bigr],
\qquad a\in U_t.
\end{aligned}
\end{equation}
Thus, stopping is evaluated by the expected prediction loss under the current conditional distribution of the label, while acquiring feature \(a\) is evaluated by the acquisition cost plus the expected prediction loss after observing feature \(a\) and then stopping. The ideal one-step policy is
\begin{equation}
\label{eq:appendix_baseline_myopic_policy}
a_t^*
\in
\arg\min_{a\in \mathcal{A}(S_t)}
Q_1(x(m_t),a).
\end{equation}
To make the stopping rule explicit, define the expected one-step loss reduction as
\begin{equation}
\label{eq:appendix_baseline_myopic_delta}
\Delta_t(a)
\triangleq
\mathbb{E}_{\mathbf{y}\mid x_{S_t}}
\bigl[
\ell(f_\phi(x(m_t)),\mathbf{y})
\bigr]
-
\mathbb{E}_{\mathbf{x}_a,\mathbf{y}\mid x_{S_t}}
\bigl[
\ell(f_\phi(\mathbf{x}(m_{t+1})),\mathbf{y})
\bigr].
\end{equation}
From \eqref{eq:appendix_baseline_myopic_q1}, acquiring feature \(a\) is better than stopping immediately exactly when
\begin{equation}
\label{eq:appendix_baseline_myopic_gain_rule}
\Delta_t(a)>\alpha c(a).
\end{equation}
Thus, the myopic policy acquires the feature with the largest cost-adjusted expected loss reduction \(\Delta_t(a)-\alpha c(a)\) if this quantity is positive, and otherwise selects \(d+1\), the stopping action. Equivalently, it stops when \(\Delta_t(a)\le \alpha c(a)\) for every available feature \(a\in U_t\), i.e., when no available feature has expected one-step loss reduction exceeding its cost penalty.
Here, ``myopic'' refers to the planning rule used to choose the next action: it evaluates only the value of acquiring one more feature and then stopping. This is distinct from the maximum acquisition horizon \(k\). A myopic method can still be run sequentially for up to \(k\) acquisitions by repeatedly applying the one-step rule, whereas NM-PPG optimizes a \(k\)-step truncated objective and therefore propagates learning signals through the remaining \(k-t\) future acquisition decisions. Thus, \(k\) acts as a shared maximum rollout horizon, while the distinction between myopic and non-myopic methods is whether the action score itself reasons beyond one step.
Equation~\eqref{eq:appendix_baseline_myopic_gain_rule} also shows that, for myopic methods, feature costs do not necessarily need to enter the acquisition-score training objective explicitly. In principle, one can train a model to estimate \(\Delta_t(a)\) across relevant states and then compare this estimate against \(\alpha c(a)\) only at inference. This is important for interpreting baselines such as DiFA, GDFS, and SEFA, which were introduced for uniform-cost hard-budget settings and therefore do not train cost-aware acquisition scores.
The methods below differ mainly in how they approximate or learn the one-step score.

\textbf{DiFA.}
DiFA \citep{GhoshLan2023DiFA} was originally proposed for the hard-budget setting. Following prior work on myopic AFA \citep{gadgil2024estimating}, we adapt it to \eqref{eq:afa_obj_soft} by using its learned acquisition scores together with an entropy-based stopping heuristic, so that the policy may stop before the horizon \(k\). DiFA learns a differentiable approximation to the one-step acquisition rule. Let \(a_t\) denote the feature selected by the policy at the current mask, and let \(m_{t+1}\) be the mask after adding this feature. For the acquisition update, DiFA trains the policy so that the selected one-step successor mask has low prediction loss,
\begin{equation}
\label{eq:appendix_difa_successor_loss}
\min_{\theta}
\mathbb{E}_{\mathbf{x},\mathbf{y}}
\mathbb{E}_{\pi_\theta}
\bigl[
\ell(f_\phi(\mathbf{x}(m_{t+1})),\mathbf{y})
\bigr],
\qquad
m_{t+1}=m_t+(1-m_t)\odot \mathrm{onehot}(a_t).
\end{equation}
Since the current loss \(\ell(f_\phi(\mathbf{x}(m_t)),\mathbf{y})\) is fixed with respect to the candidate action at step \(t\), minimizing the one-step successor loss is equivalent to maximizing the empirical loss decrease \(\ell(f_\phi(\mathbf{x}(m_t)),\mathbf{y})-\ell(f_\phi(\mathbf{x}(m_{t+1})),\mathbf{y})\). DiFA uses a straight-through Gumbel-Softmax acquisition during training: the forward pass uses a hard one-feature acquisition, while the backward pass uses the soft relaxation to propagate gradients to the policy. However, DiFA does not learn an estimate of the expected one-step loss reduction; it only learns which action is expected to produce a useful one-step successor, not the numerical value of the reduction. Therefore, it cannot directly apply the cost-adjusted expectation-based stopping rule in \eqref{eq:appendix_baseline_myopic_gain_rule}, which requires comparing an expected reduction \(\Delta_t(a)\) to the feature-cost penalty \(\alpha c(a)\). At inference, it greedily acquires the highest-scoring available feature until the predictive label entropy of the masked predictor falls below the threshold \(\alpha\), or until the horizon \(k\) is reached. Entropy-threshold stopping has also been considered in prior AFA work \citep{gadgil2024estimating}. This stopping rule assumes uniform feature costs, which is consistent with the hard-budget setting for which DiFA was originally introduced, where each feature has unit cost. Key hyperparameters are the entropy threshold \(\alpha\), the horizon \(k\), the Gumbel temperature schedule \((\tau_{\mathrm{start}},\tau_{\mathrm{end}})\), the policy and predictor learning rates, and architecture hyperparameters of the policy and predictor neural networks.

\textbf{GDFS.}
GDFS \citep{covert2023learning} was also originally proposed for the uniform-cost hard-budget setting. We adapt it to the standard AFA objective in \eqref{eq:afa_obj_soft} by using the learned acquisition scores with the same entropy-threshold stopping rule as for DiFA. GDFS also targets the same myopic loss-reduction principle, but uses a fully soft differentiable acquisition during training. At step \(t\), it samples a relaxed acquisition \(\tilde a_t\in\Delta^d\) from the policy over currently available features and constructs a soft next mask
\begin{equation}
\label{eq:appendix_gdfs_soft_mask}
\tilde m_{t+1}
=
m_t+(1-m_t)\odot \tilde a_t,
\end{equation}
and optimizes the one-step successor prediction loss
\begin{equation}
\label{eq:appendix_gdfs_successor_loss}
\min_{\theta}
\mathbb{E}_{\mathbf{x},\mathbf{y}}
\mathbb{E}_{\pi_\theta}
\bigl[
\ell(f_\phi(\mathbf{x}(\tilde m_{t+1})),\mathbf{y})
\bigr].
\end{equation}
Thus, as in \eqref{eq:appendix_baseline_myopic_q1}, the learned policy is encouraged to choose features that most improve the terminal prediction after one additional acquisition. Unlike DiFA, the predictor in GDFS is trained on fractional masks induced by the relaxed acquisition. Like DiFA, however, GDFS does not learn an expected one-step loss reduction; it learns acquisition logits for choosing a high-improvement action, not the numerical value of the reduction. Therefore, the cost-adjusted one-step stopping rule in \eqref{eq:appendix_baseline_myopic_gain_rule} is not directly available, since that rule requires comparing an expected reduction \(\Delta_t(a)\) to the feature-cost penalty \(\alpha c(a)\). At test time, the learned policy is made deterministic by selecting the largest available feature logit, while stopping is handled by predicting once the predictive label entropy is below \(\alpha\), or once the horizon \(k\) is reached. As for DiFA, this entropy-threshold stopping rule assumes uniform feature costs, matching the original hard-budget setting where each feature has unit cost. Key hyperparameters are \(\alpha\), the horizon \(k\), the Gumbel temperature and learning-rate schedules, the policy/predictor learning rates, and architecture hyperparameters of the policy and predictor neural networks.

\textbf{DIME.}
DIME \citep{gadgil2024estimating} makes the one-step value approximation explicit. Let \(m_{t+1}=m_t+(1-m_t)\odot\mathrm{onehot}(a_t)\). It trains a value network \(q_\xi(x(m_t),a_t)\) to predict the empirical one-step loss decrease from acquiring feature \(a_t\):
\begin{equation}
\label{eq:appendix_dime_gain_estimate}
q_\xi(\mathbf{x}(m_t),a_t)
\approx
\widehat{g}_{a_t}(\mathbf{x}(m_t))
\triangleq
\ell(f_\phi(\mathbf{x}(m_t)),\mathbf{y})
-
\ell(f_\phi(\mathbf{x}(m_{t+1})),\mathbf{y}).
\end{equation}
Training minimizes a squared Bellman-style one-step regression loss,
\begin{equation}
\label{eq:appendix_dime_regression_loss}
\min_{\xi}
\mathbb{E}_{\mathbf{x},\mathbf{y}}
\mathbb{E}_{\pi_\theta}
\left[
\left(
q_\xi(\mathbf{x}(m_t),a_t)-\widehat{g}_{a_t}(\mathbf{x}(m_t))
\right)^2
\right],
\end{equation}
while the predictor is trained on the masks encountered during the same acquisition process. Since DIME explicitly estimates the gain, one could directly implement the stopping rule in \eqref{eq:appendix_baseline_myopic_gain_rule} by acquiring \(\arg\max_{a\in U_t}(q_\xi(x(m_t),a)-\alpha c(a))\) when this maximum is positive, and stopping otherwise. The original DIME paper instead uses a cost-normalized rule: it acquires the feature with largest value of \(q_\xi(x(m_t),a)/c(a)\), and stops when this ratio is below the threshold \(\alpha\). This is closely related to the one-step values in \eqref{eq:appendix_baseline_myopic_q1}, but uses a gain-per-cost threshold rather than the additive cost-adjusted rule above. Key hyperparameters are \(\alpha\), the horizon \(k\), the \(\epsilon\)-greedy exploration schedule, the value and predictor learning rates, and architecture hyperparameters of the value and predictor neural networks.

\subsubsection{Non-Myopic Baselines}
The remaining baselines are non-myopic in the sense that they attempt to account for the effect of current acquisitions on later decisions, either through explicit lookahead, latent-variable acquisition scoring, or RL over the sequential acquisition process.
AACO and SEFA represent a different approach from directly optimizing long-term cost minimization in the AFA-POMDP. They exploit the structure of AFA by recognizing that non-myopic acquisition requires reasoning about feature groups that are jointly informative about the label, even if the individual features in the group are not immediately useful on their own. Their acquisition rules therefore try to identify such jointly informative groups and then acquire one feature from the selected group. This can make the methods more stable than generic RL, but it also introduces bias relative to the full AFA-POMDP: the joint informativeness is assessed by a heuristic, non-adaptive criterion conditioned on the currently observed features, rather than by optimizing over the full adaptive future policy that can react to the values of newly acquired features.

\textbf{AACO.}
AACO \citep{DBLP:conf/icml/ValanciusLO24} is a non-myopic AFA baseline that performs lookahead through candidate subsets rather than through an explicit RL policy. For a current partial observation, AACO first finds nearby training examples under the currently observed features. It then samples many candidate subsets of remaining actions, fills the candidate-acquired features using the corresponding values from the local neighbors, and evaluates the masked predictor loss for each candidate subset. The selected subset minimizes a cost-regularized objective of the form estimated prediction loss plus \(\alpha\) times acquisition cost; the method then acquires one feature from that selected subset and repeats. Intuitively, AACO solves a cost-sensitive non-myopic, but static and non-adaptive, feature-selection problem conditioned on the features acquired so far. The AACO paper proves that this objective is a lower bound on the optimal value function in the AFA-POMDP. This lower-bound interpretation is useful, but it also shows that AACO optimizes a surrogate rather than exact long-term cost minimization in the AFA-POMDP, and is therefore biased relative to the full sequential problem. Because this acquisition rule is non-parametric, AACO has no separate parametric acquisition-policy training stage. In our implementation, however, we still roll out the AACO acquisition rule on the training data and use the visited masks to further train the masked predictor. This aligns the predictor with the policy-induced state distribution and performed better than using only the predictor pretrained on random masks. Key hyperparameters are the trade-off parameter \(\alpha\), the maximum rollout horizon \(k\), the number of nearest neighbors, the number of candidate subsets, and architecture hyperparameters of the masked predictor.

\textbf{SEFA.}
SEFA \citep{norcliffe2025stochasticencodingsactivefeature} was also originally proposed for the uniform-cost hard-budget setting. Its sensitivity-based acquisition score is therefore not cost-adjusted and does not natively support non-uniform feature costs. SEFA trains a stochastic encoding model for partially observed inputs. The model maps each observed masked input to a latent distribution, predicts the label from Monte Carlo latent samples, and is trained with a negative log-likelihood term plus a KL-style information-bottleneck penalty. At inference, SEFA computes acquisition scores from the sensitivity of the predictive distribution to each feature's latent representation, then greedily acquires the highest-scoring available feature. To adapt SEFA to \eqref{eq:afa_obj_soft}, we use the same entropy-threshold stopping heuristic as for DiFA and GDFS: the method stops once normalized predictive entropy falls below the threshold \(\alpha\), or once the horizon \(k\) is reached. Key hyperparameters are \(\alpha\), the horizon \(k\), the latent dimension, the bottleneck weight \(\beta\), the numbers of Monte Carlo samples used for training, prediction, and acquisition scoring, and architecture hyperparameters of the encoder and predictor neural networks.

\textbf{GSMRL.}
GSMRL \citep{pmlr-v139-li21p} is a model-based non-myopic RL baseline targeting the standard AFA objective in \eqref{eq:afa_obj_soft}. It augments the acquisition policy with a learned ACFlow-style surrogate model that estimates both unobserved feature distributions and label probabilities from the current partial observation. The policy is optimized with PPO over a discrete action space containing all feature acquisitions and \texttt{STOP}. Appendix~\ref{appendix:pomdp-definition} gives the formal AFA-POMDP instantiation of \eqref{eq:afa_obj_soft}, with immediate costs specified in \eqref{eq:appendix_standard_afa_cost}: acquiring feature \(a\) incurs cost \(\alpha c(a)\), while stopping incurs the prediction loss at the stopping mask. GSMRL additionally uses model-based information-gain terms from the surrogate model to shape acquisition decisions. The masked predictor is also updated on hard masks visited by the policy. Key hyperparameters are \(\alpha\), the horizon \(k\), PPO parameters such as the clipping parameter, number of PPO epochs, minibatch size, discount factor \(\gamma\), critic weight, and entropy coefficient, as well as ACFlow pretraining hyperparameters and architecture hyperparameters of the policy, critic, predictor, and ACFlow neural networks.

\textbf{OL.}
OL \citep{kachuee2018opportunistic} is a model-free non-myopic RL baseline based on DQN. It was originally proposed for the hard-budget setting. We adapt OL to the standard AFA objective in \eqref{eq:afa_obj_soft} by modifying the reward according to the AFA-POMDP costs in \eqref{eq:appendix_standard_afa_cost}: acquiring feature \(a\) incurs cost \(\alpha c(a)\), while stopping incurs the prediction loss of the \(P\)-branch at the current mask. The method uses a joint \(P/Q\) network: the \(P\)-branch predicts the label from the current masked input, while the \(Q\)-branch predicts action values for feature acquisitions and stopping, using detached hidden activations from the \(P\)-branch as side information. During training, transitions are stored in a replay buffer and a target network is updated by Polyak averaging. At inference, OL greedily selects the largest masked Q-value until \texttt{STOP} or the horizon. Key hyperparameters are \(\alpha\), the horizon \(k\), discount factor \(\gamma\), \(\epsilon\)-greedy exploration schedule, replay-buffer size and minimum replay size, Q-minibatch size, number of Q updates per episode, target-update rate, and architecture hyperparameters of the \(P/Q\) neural network.

\subsubsection{Use of the horizon \(k\)}
Table~\ref{tab:baseline-k-use} summarizes how \(k\) is used for all methods. In all cases, \(k\) is a maximum acquisition horizon or rollout cap, not necessarily the planning depth of the method. In particular, the myopic baselines can be trained and evaluated over trajectories of length up to \(k\), but their action scores remain one-step scores.

\begin{table}[t]
\centering
\caption{Use of the horizon \(k\) in all methods.}
\label{tab:baseline-k-use}
\begin{tabularx}{\textwidth}{@{}lXX@{}}
\toprule
Method & Use of \(k\) during training & Use of \(k\) during inference \\
\midrule
NM-PPG & Truncation horizon for the ST rollout objective in Alg.~\ref{alg:nmppg-rollout-single}. & Maximum number of feature acquisitions before forced prediction. \\
DiFA & Caps sequential training rollouts; each update optimizes a one-step successor loss. & Maximum number of feature acquisitions before forced prediction. \\
GDFS & Caps sequential training rollouts; each update optimizes a one-step soft successor loss. & Maximum number of feature acquisitions before forced prediction. \\
DIME & Caps rollouts used to collect states and one-step gain targets for value regression. & Maximum number of feature acquisitions before forced prediction. \\
AACO & Caps training-data rollouts used only to collect masks for predictor alignment. & Maximum number of feature acquisitions before forced prediction. \\
SEFA & Not used in the gradient training loss; used only for validation acquisition AUC/model selection. & Maximum number of feature acquisitions before forced prediction. \\
GSMRL & PPO rollout horizon for collecting transitions and computing policy/value updates. & Maximum number of feature acquisitions before forced prediction. \\
OL & DQN episode horizon for collecting replay transitions and training \(P/Q\) networks. & Maximum number of feature acquisitions before forced prediction. \\
\bottomrule
\end{tabularx}
\end{table}

\subsection{Datasets} \label{appendix:datasets}
This subsection provides additional details for each benchmark dataset used in our experiments.
Table~\ref{tab:dataset-summary} provides summary information about each dataset. The \# Features column reports the number of acquisition feature groups used by the AFA policies. Some processed features, such as one-hot encoded categorical variables, are treated as a single acquisition because they effectively correspond to the same underlying feature; this avoids forcing the policy to learn to acquire all components of such a group separately. We specify below how feature groups are formed for each dataset.

\begin{table}[t]
\centering
\small
\caption{Summary of benchmark datasets. For Cube-NM we use $n_c = 5$.}
\label{tab:dataset-summary}
\resizebox{\textwidth}{!}{%
\begin{tabular}{llrrrrrll}
\toprule
Dataset & Type & Train & Validation & Test & \# Features & \# Classes & Imbalanced & Non-uniform cost \\
\midrule
Cube-NM ($\sigma=0.1$) & Synthetic & 7,000 & 1,500 & 1,500 & 55 & 8 & No & Yes \\
Cube-NM ($\sigma=0.2$) & Synthetic & 7,000 & 1,500 & 1,500 & 55 & 8 & No & Yes \\
Syn1 & Synthetic & 60,000 & 10,000 & 10,000 & 11 & 2 & No & No \\
Syn3 & Synthetic & 60,000 & 10,000 & 10,000 & 11 & 2 & No & No \\
Connect4 & Real-world tabular & 47,290 & 10,133 & 10,134 & 42 & 3 & Yes & No \\
Splice & Real-world tabular & 2,233 & 478 & 479 & 60 & 3 & No & No \\
EngineFaultDB & Real-world tabular & 39,200 & 8,400 & 8,399 & 14 & 4 & No & No \\
Metabric & Real-world medical & 1,329 & 285 & 284 & 662 & 6 & No & No \\
Mortality & Real-world medical & 9,409 & 2,016 & 2,017 & 26 & 2 & Yes & Yes \\
Diabetes & Real-world medical & 64,443 & 13,809 & 13,810 & 33 & 3 & Yes & Yes \\
MNIST & Real-world image & 50,000 & 10,000 & 10,000 & 784 & 10 & No & No \\
Fashion-MNIST & Real-world image & 50,000 & 10,000 & 10,000 & 784 & 10 & No & No \\
\bottomrule
\end{tabular}%
}
\end{table}

\begin{figure}[!t]
\centering
\includegraphics[width=0.95\textwidth]{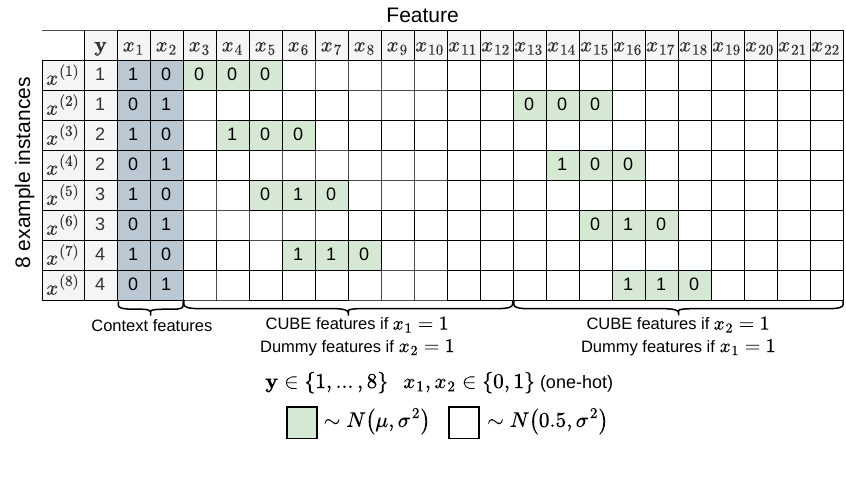}
\caption{Illustration of the Cube-NM context mechanism for \(n_c=2\). The first two features \(x_1,x_2\) form the one-hot context. If \(x_1=1\), features \(x_3,\ldots,x_{12}\) are the CUBE block and \(x_{13},\ldots,x_{22}\) are dummy features; if \(x_2=1\), the roles are reversed. Blue entries are sampled from \(\mathcal{N}(\mu,\sigma^2)\), where \(\mu\) is the corresponding coordinate of the class prototype \(p_y\) for label \(y\). White entries are sampled from the dummy distribution \(\mathcal{N}(0.5,\sigma^2)\). The experiments use \(n_c=5\) and \(\sigma\in\{0.1,0.2\}\).}
\label{fig:cube-nm-context}
\end{figure}

\textbf{Cube-NM.}
Cube-NM \citep{schütz2025afabenchgenericframeworkbenchmarking} is a synthetic AFA dataset designed to isolate the benefit of non-myopic acquisition. Each instance has a latent context that selects which one of several feature blocks contains label information; the context features reveal this block identity but do not directly reveal the label. We use \(x_i\) to denote the scalar value of feature \(i\). Formally, with \(n_c\) possible contexts, the first \(n_c\) features, \(x_1,\ldots,x_{n_c}\), are one-hot context indicators. The remaining features are split into \(n_c\) feature blocks of size \(10\): the block for context \(j\) consists of \(x_{n_c+10(j-1)+1},\ldots,x_{n_c+10j}\). In Figure~\ref{fig:cube-nm-context}, we illustrate this construction for \(n_c=2\): \(x_1,x_2\) are context features, the first candidate block is \(x_3,\ldots,x_{12}\), and the second candidate block is \(x_{13},\ldots,x_{22}\).

An instance is generated by sampling a label \(y\in\{1,\ldots,8\}\) and an active context \(r\in\{1,\ldots,n_c\}\). The context features are set to \(x_r=1\) and \(x_j=0\) for \(j\le n_c\), \(j\neq r\). Because \(r\) is sampled independently of \(y\), these context features do not directly predict the label; they only reveal which feature block of size \(10\) contains label information.

The label controls the mean of the active block through a class prototype \(p_y\). Here \(p_y\in\mathbb{R}^{10}\) is not an observed feature; it is the length-\(10\) mean vector used to generate the informative block for class \(y\). To define it, set \(q=y-1\), write \(q\) in binary as \((q_0,q_1,q_2)\), initialize all ten entries of \(p_y\) to \(0.5\), and replace entries \(q+1,q+2,q+3\) by \((q_0,q_1,q_2)\). For example, if \(y=6\), then \(q=5\), the bits are \((1,0,1)\), and \(p_y\) has entries \(1,0,1\) in positions \(6,7,8\) and \(0.5\) elsewhere.

Finally, features are sampled. If \(r\) is the active context, then for local coordinate \(\ell=1,\ldots,10\), the feature \(x_{n_c+10(r-1)+\ell}\) in the active block is sampled from \(\mathcal{N}(p_y^{(\ell)},\sigma^2)\). For every inactive context \(j\neq r\), the dummy-block feature \(x_{n_c+10(j-1)+\ell}\) is sampled from \(\mathcal{N}(0.5,\sigma^2)\). Therefore, only the active block has a label-dependent mean pattern; all dummy blocks are centered at \(0.5\) and carry no label information. In Figure~\ref{fig:cube-nm-context}, this means that if \(x_1=1\), then \(x_3,\ldots,x_{12}\) are sampled from the class prototype and \(x_{13},\ldots,x_{22}\) are dummy features; if \(x_2=1\), the roles are reversed.

In our experiments we use \(n_c=5\), noise levels \(\sigma\in\{0.1,0.2\}\), context-feature cost \(0.2\), and non-context feature cost \(1\), giving \(5+5\cdot 10=55\) processed features. This creates the non-myopic structure: a context feature can have little immediate predictive value, but it determines which expensive block should be queried next. The paper that introduced Cube-NM proves that, in the noiseless setting, non-myopic selection can require substantially fewer acquisitions than a myopic policy:

\begin{theorem}[Informal Cube-NM result {\citep{schütz2025afabenchgenericframeworkbenchmarking}}]
\label{thm:cube-nm}
Consider the noiseless Cube-NM dataset with \(n_c\) contexts and \(\sigma=0\), so that \(100\%\) prediction accuracy is achievable. Then a myopic policy requires, in expectation over instances, \(13(2n_c+1)/16\) feature acquisitions to achieve \(100\%\) accuracy. In contrast, there exists an optimal non-myopic policy that achieves \(100\%\) accuracy after acquiring only \(\mathbf{1}\{n_c\ge 2\}+9/4\) features in expectation.
\end{theorem}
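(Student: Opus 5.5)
The plan is to treat the two claims separately, both in the noiseless regime \(\sigma=0\). There every feature value is deterministic given \((r,y)\), so the posterior after any set of acquisitions is uniform over the \((r,y)\) pairs consistent with the observed values. Accuracy \(100\%\) is reached exactly when this consistent set determines \(y\). The whole argument therefore reduces to counting queries in a deterministic search problem over \(n_c\cdot 8\) equally likely hypotheses. One caveat up front: for the \(\mathbf{1}\{n_c\ge 2\}\) term to be correct, the context must be revealed by a single acquisition. I will therefore follow the convention of \citep{schütz2025afabenchgenericframeworkbenchmarking}, in which the one-hot context is one acquisition group, consistent with the grouping described before Table~\ref{tab:dataset-summary}. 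With \(n_c=1\) the context is known a priori and is never acquired.

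\textbf{Non-myopic upper bound.} I would exhibit an explicit policy and then argue it is optimal. First acquire the context group, which costs one acquisition if \(n_c\ge 2\). This identifies the active block. Then solve the within-block problem: identify \(q\in\{0,\dots,7\}\) when coordinate \(\ell\) of \(p_y\) equals \(0.5\) unless \(\ell\in\{q+1,q+2,q+3\}\), in which case it is a bit of \(q\). I would write down a decision tree over the ten coordinates explicitly. For instance, querying a middle coordinate such as \(\ell=4\) or \(5\) splits the hypotheses into "\(0.5\)" versus a revealed bit, and each revealed bit pins down \(q\)'s offset. I would then verify by enumerating the eight leaves that the average depth is \(9/4\) (total depth \(18\)). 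For optimality I would do a short exhaustive or counting argument: each query has at most three outcomes (\(0,1,0.5\)), and with the actual outcome partitions depth-\(2\) trees cannot separate enough classes. I would present this as a finite check rather than a general bound.

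\textbf{Myopic count.} First, using the one-step rule \eqref{eq:appendix_baseline_myopic_gain_rule}, I would show that the context group has zero expected loss reduction, since \(r\perp y\). Meanwhile every unqueried block coordinate has strictly positive one-step gain whenever \(y\) is not yet determined. So the myopic policy never acquires the context and instead queries block coordinates. Next, by symmetry across blocks, the greedy score is identical for corresponding coordinates in different unexplored blocks. Under uniform tie-breaking, the policy probes blocks in a uniformly random order until it hits the active one. The expected count then decomposes as \(c_1+c_0\,(n_c-1)/2\). Here \(c_0\) is the number of greedy queries spent on a dummy block before it is recognized as dummy (all observed values \(0.5\), hence eliminated). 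The quantity \(c_1\) is the expected number of greedy queries to decode \(q\) in the active block, which also accounts for the ambiguity that a \(0.5\) reading shares with dummy blocks. Matching \(13(2n_c+1)/16\) requires \(c_0=13/4\) and \(c_1=39/16\). I would obtain these by simulating the greedy gain computation step by step on the \(8\)-hypothesis (plus dummy) posterior.

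\textbf{Main obstacle.} The main obstacle is the myopic analysis. The greedy choice depends on the evolving mixed posterior over "which block is active" and "which \(q\)", and ties must be broken in a specified way. Interleaving between blocks may also occur, which would break the clean block-by-block decomposition. I would first try to prove that greedy finishes or eliminates the current block before moving on: a block with partial \(0.5\) readings keeps higher gain than a fresh block. If that fails, I would fall back on an exact dynamic-programming enumeration for small \(n_c\) and induct on \(n_c\).
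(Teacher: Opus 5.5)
Since the paper only cites this result and gives no proof, I judged your plan on its own. The non-myopic half is sound in outline. The myopic half, however, rests on a block-by-block mechanism that the greedy rule does not follow.

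\textbf{Where it breaks.} Take the cross-entropy (information-gain) form of the one-step rule. The gain of querying coordinate \(\ell\) of block \(j\) is \(H(x_{j,\ell})-\frac{1}{M}\sum_{q\in W}g(n_q)\), where:
\begin{itemize}
\item \(M\) is the number of surviving hypotheses \((r,q)\);
\item \(W\) is the set of classes still consistent with block \(j\) whose window contains \(\ell\);
\item \(n_q\) is the number of blocks in which class \(q\) is still consistent;
\item \(g(n)=n\,H_b(1/n)\), which is increasing with \(g(1)=0\).
\end{itemize}
The entropy term depends only on how many surviving hypotheses produce each non-\(0.5\) value.

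After a \(0.5\) reading at coordinate \(\ell\) of one block, the same coordinate in a fresh block keeps the largest possible entropy term, a \((1,2)\) split of three hypotheses. Its penalty drops to \(3g(n_c-1)\), because its window classes now survive in one block fewer. Every continuation inside the explored block still pays \(g(n_c)\) per surviving window class.

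Concretely, take \(n_c=2\) and a \(0.5\) at coordinate \(4\) of block \(1\). Coordinate \(4\) of block \(2\) has gain \(\approx 0.99\) bits, while the best continuation inside block \(1\) has only \(\approx 0.53\).

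Several consequences follow:
\begin{itemize}
\item The lemma you would try first is false in the opposite direction. Greedy sweeps each query of the single-block tree across all blocks, in random order, until a non-\(0.5\) value reveals \(r\). There is no per-dummy-block cost \(c_0\).
\item Your zero-gain argument for the context also fails. \(r\perp y\) holds only a priori; after that same reading the context has gain \(\approx 0.19\) bits. Excluding it needs a comparison: its gain vanishes at the symmetric states between sweeps and is dominated by the same-coordinate query during a sweep.
\item Your \(c_0=13/4\) and \(c_1=39/16\) are back-solved from the target, so simulating the gains would not confirm them.
\item Enumerating small \(n_c\) gives nothing to induct on without this sweep structure.
\end{itemize}

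\textbf{What does work.} Follow the single-block greedy tree and charge each class as follows:
\begin{itemize}
\item \(n_c\) for every level at which the true class reads \(0.5\);
\item \((n_c+1)/2\) in expectation for the level where its first non-\(0.5\) value appears, since the active block sits at a uniform position in the tie-broken order;
\item \(1\) for every later query, all of which are made in the now-identified block.
\end{itemize}

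The trees split into two types, summed over the eight classes:
\begin{itemize}
\item Rooted at \(\ell\in\{4,7\}\), the second query is a \((1,2,2)\) split of the five remaining classes. These trees cost \(\tfrac{23}{2}n_c+\tfrac{15}{2}\).
\item Rooted at \(\ell\in\{5,6\}\), the second query is a \((1,1,3)\) split whose three-class remainder cannot be separated by one query. These trees cost \(\tfrac{29}{2}n_c+\tfrac{11}{2}\).
\end{itemize}
The four middle coordinates tie at the first step. Uniform tie-breaking then gives \(\tfrac18\bigl(13n_c+\tfrac{13}{2}\bigr)=13(2n_c+1)/16\).

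Later ties leave these totals unchanged. For \(n_c\ge 2\) you still must check that the three-class stage queries a coordinate isolating a single class. At \(n_c=1\) such a query ties with one revealing two classes, with the same total.

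At \(n_c=1\) the totals are \(19\) and \(20\), which is the source of \(39/16\). A fixed root such as \(\ell=4\) would instead give \((23n_c+15)/16\), so the constant genuinely depends on uniform tie-breaking among the middle coordinates.

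\textbf{Non-myopic side.} Only one middle coordinate is an optimal root: \(\ell=4\) if the bits are read most-significant-first, \(\ell=7\) otherwise. The other of the two gives at least \(19\), and \(\ell=5,6\) give at least \(20\). Also, a revealed bit need not pin down \(q\): at \(\ell=4\) one value is shared by two classes.

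Your finite check does give \(18\) as the minimum once \(r\) is known. Calling the context-first policy optimal, however, also requires ruling out policies that skip or postpone the context, and a within-block count does not address that.
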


The exact constants in Theorem~\ref{thm:cube-nm} assume that the context is acquired as a single categorical feature, equivalently as one acquisition group containing the one-hot context indicators. In our experiments, we do not group these context indicators, making the dataset slightly more challenging. The same qualitative non-myopic structure applies, but the exact expected acquisition counts differ.

Theorem~\ref{thm:cube-nm} also illustrates how the advantage of non-myopic selection depends on the number of contexts. As \(n_c\) increases, the myopic acquisition cost grows linearly in \(n_c\), while the non-myopic policy remains constant after \(n_c\ge 2\). This reflects the general principle that non-myopic AFA is most useful when early acquisitions reveal which later acquisitions are valuable; see Appendix~\ref{appendix:myopic} for a more detailed discussion.

\textbf{Syn1.}
Syn1 is one of the synthetic datasets used in previous AFA work to evaluate non-myopic acquisition methods \citep{norcliffe2025stochasticencodingsactivefeature}. These datasets are useful because they are constructed so that the optimal acquisition order is known. Each instance consists of \(11\) independent standard-normal features \(x_1,\ldots,x_{11}\), with \(x_{11}\) acting as a context feature that determines which feature interaction generates the label. If \(x_{11}<0\), the label probability is determined by the interaction \(\ell_1=4x_1x_2\), while if \(x_{11}\geq0\), it is determined by the quadratic block \(\ell_2=1.2\sum_{j=3}^{6}x_j^2-4.2\). In both cases, \(p(y=1)=1/(1+\exp(\ell))\), where \(\ell\) is the selected logit. Thus, \(x_{11}\) does not directly define the label by itself; instead, it indicates whether the useful information is contained in the pair \((x_1,x_2)\) or in the block \(x_3,\ldots,x_6\). This makes Syn1 a controlled test of whether an AFA method can acquire a context feature before selecting the features that are predictive under that context.

\textbf{Syn3.}
Syn3 is another synthetic dataset used in previous AFA work to evaluate non-myopic acquisition methods \citep{norcliffe2025stochasticencodingsactivefeature}. It uses the same \(11\)-feature construction and the same context variable \(x_{11}\), but switches between two higher-order feature blocks. If \(x_{11}<0\), the label is generated from the quadratic block \(\ell_2=1.2\sum_{j=3}^{6}x_j^2-4.2\). If \(x_{11}\geq0\), it is generated from the nonlinear block \(\ell_3=-10\sin(0.2x_7)+|x_8|+x_9+\exp(-x_{10})-2.4\). As in Syn1, \(p(y=1)=1/(1+\exp(\ell))\) for the selected logit. Syn3 is therefore a harder context-dependent benchmark with a known optimal acquisition order: the policy must first identify which branch applies, and then acquire features from either \(x_3,\ldots,x_6\) or \(x_7,\ldots,x_{10}\). All Syn1 and Syn3 features have unit acquisition cost.

\textbf{Connect4.}
Connect4 \citep{uci_connect4} is a board-state classification dataset. The prediction task is to classify the eventual game outcome, i.e., loss, draw, or win, from the current Connect Four board. The board has \(42\) cells, each taking one of three values: empty, player \(x\), or player \(o\). We one-hot encode each cell, giving \(126\) processed binary features, and treat each board cell as one acquisition group so that acquiring a cell reveals its full categorical state. This is useful for AFA because only a subset of board positions may be needed to determine the outcome, mimicking decision settings where an agent should inspect only the most informative parts of a structured state.

\textbf{Splice.}
Splice \citep{uci_splice} is a DNA splice-junction classification dataset. The prediction task is to classify a length-\(60\) DNA sequence as a non-splice example, an exon-intron junction, or an intron-exon junction. We one-hot encode the nucleotide at each sequence position, yielding \(480\) processed binary features, and treat each sequence position as one acquisition group. This is relevant for AFA because biological sequence assays can be costly, and the predictive signal may be concentrated in a small subset of positions around the junction.

\textbf{EngineFaultDB.}
EngineFaultDB \citep{enginefaultdb_repo} is a real-world tabular engine fault diagnosis dataset. The prediction task is to identify one of four engine fault classes from \(14\) numeric sensor measurements. We normalize the measurements using training-set statistics and treat each sensor value as one acquisition. This is a natural AFA setting because diagnostic systems may be able to query additional sensors or tests sequentially, but each measurement can consume time, energy, or hardware resources.

\textbf{Metabric.}
Metabric \citep{curtis2012genomic, pereira2016somatic} is a real-world medical breast cancer dataset. The prediction task is to classify the PAM50 plus claudin-low molecular subtype, giving six classes after removing invalid labels. We use the molecular feature block from the processed METABRIC table, resulting in \(662\) gene-expression and mutation features. Expression features are winsorized and standardized using training-set statistics, and each molecular measurement is treated as one acquisition. This is important for AFA because molecular profiling can be expensive, and an adaptive policy may reduce the number of assays needed for accurate subtype prediction.

\textbf{Mortality.}
Mortality is derived from the National Health and Nutrition Examination Survey (NHANES) \citep{cdc_nhanes}, and was processed into an outpatient mortality benchmark in \citep{erion2022costaware}. The prediction task is binary 10-year mortality prediction from demographic variables, laboratory measurements, examination results, and questionnaire-derived variables. The raw clinical variables are expanded into \(118\) processed columns, including continuous measurements, binary indicators, missingness indicators, test-status indicators such as unacceptable or test-not-done flags, and one-hot encodings for categorical measurements. In the AFA setup, acquisition is defined at the level of the underlying clinical variable or measurement panel: acquiring a group reveals all processed columns derived from that variable or panel. For example, a laboratory measurement group can include the measured value together with its missingness and test-status indicators, and categorical urine-test groups include all corresponding one-hot indicators. This gives \(26\) acquisition groups with non-uniform costs. The feature costs follow the CoAI setup, where costs were assigned to reflect monetary burden and patient inconvenience \citep{erion2022costaware}. We standardize numeric features and handle missing values using training-set statistics. This is an important AFA benchmark because clinical risk assessment often involves deciding which patient information or lab tests are worth collecting before making a prediction.

\textbf{Diabetes.}
Diabetes is derived from NHANES \citep{cdc_nhanes} in the OL paper \citep{kachuee2018opportunistic}, a prior AFA work that used this dataset as a benchmark. The feature set contains demographic variables, laboratory results, examination measurements, and questionnaire answers, including variables such as age, gender, ethnicity, total cholesterol, triglycerides, weight, height, smoking, alcohol use, and sleep habits. The prediction task is three-class diabetes status prediction, where fasting glucose values define the classes normal, pre-diabetes, and diabetes according to standard threshold values. The original benchmark contains \(92{,}062\) samples and \(45\) processed feature columns. In our AFA setup, one-hot encoded categorical variables are treated as acquisition groups, so acquiring the original categorical variable reveals all of its one-hot indicators; this gives \(33\) acquisition groups. We impute missing values using training-set statistics, winsorize and standardize continuous features, and keep binary features unchanged. The non-uniform feature costs follow the OL setup, where a medical expert assigned costs based on financial burden, patient privacy, and patient inconvenience. This dataset is relevant for AFA because diabetes screening combines cheap background variables with potentially more costly clinical measurements, making adaptive test selection practically meaningful.

\textbf{MNIST.}
MNIST \citep{lecun1998gradient} is a \(10\)-class handwritten digit recognition dataset. The prediction task is to classify the digit identity from a \(28\times 28\) grayscale image. As for Fashion-MNIST, we flatten each image into \(784\) normalized pixel features and treat pixels as individually acquirable features. This provides a canonical image benchmark for evaluating whether AFA methods can classify accurately while observing only a subset of pixels. The AFA motivation is again fast visual decision-making: in applications such as real-time detection or embedded recognition, acquiring or processing fewer image locations can reduce latency and computation \citep{ViolaJones2004RobustRealTimeFaceDetection}.

\textbf{Fashion-MNIST.}
Fashion-MNIST \citep{xiao2017fashionmnist} is a \(10\)-class image classification dataset of clothing items. The prediction task is to classify the object category from a \(28\times 28\) grayscale image. We flatten each image into \(784\) normalized pixel features and treat each pixel as one acquisition. In the AFA setting, this tests whether a method can identify an object while observing only a subset of pixels. This is motivated by real-time visual recognition settings, where fast inference can require focusing computation on informative image regions rather than processing every pixel, as in classical cascaded detection systems \citep{ViolaJones2004RobustRealTimeFaceDetection}.

\subsection{Hyperparameter Details} \label{appendix:hyperparams}
\textbf{Tuning of \(\alpha\).}
For all methods, we tune the trade-off parameter \(\alpha\). These values are chosen so that the resulting curves cover a reasonable range of acquisition costs and predictive performance. In particular, we avoid grids where most settings collapse to the same point, for example several values that all attain essentially the same best predictive performance while using different costs, or several values that all lead to zero acquisition. Thus, each reported curve is intended to show the relevant cost--performance trade-off for that method on that dataset (in Figure~\ref{fig:main-results-grid}).

\textbf{NM-PPG Hyperparameters.}
The main paper specifies the key NM-PPG hyperparameters. The remaining NM-PPG choices are standard across datasets, except that we tune NM-PPG hyperparameters on the validation split when required. The predictor is trained with cross-entropy loss, using class weights for imbalanced datasets. The policy and predictor are optimized with Adam \citep{kingma2015adam}. Before policy optimization, the predictor is warm-started on randomly sampled masks; during policy optimization, it is refined on rollout-visited masks through \(L_{\mathrm{pred}}\), as described in Appendix~\ref{appendix:training-procedure}. Model selection uses the deterministic validation loss induced by the current policy and predictor.

\textbf{Baseline Hyperparameters.}
For baseline methods, we use the hyperparameters and implementation details recommended in the original papers as a starting point. Appendix~\ref{appendix:baselines} lists the key hyperparameters for each baseline. We refer to the original papers and implementations of these baselines for the corresponding hyperparameter choices. We then tune from these starting points when required for different datasets. All such tuning is performed using training and validation data only, with the same evaluation protocol applied to all methods.

\textbf{Shared Architecture, Losses, and Model Selection.}
Across methods, we use the same train/validation/test splits, preprocessing pipeline, feature groups, feature costs, primary metric, and label loss for each dataset. Unless otherwise required by a baseline implementation, neural predictors are trained with cross-entropy loss, using class weights for imbalanced datasets, and optimized with Adam using the method-specific learning rates. We use consistent neural architectures across methods whenever the method permits it. The masked predictor and the fully observed predictor are both two-hidden-layer MLPs with ReLU activations and dropout \(0.3\) after each hidden layer. The hidden width is \(256\) for Cube-NM, Connect4, EngineFaultDB, Splice, MNIST, and Fashion-MNIST, and \(128\) for Metabric, Mortality, and Diabetes. Policy networks use two-hidden-layer ReLU MLPs without dropout, with the same dataset-dependent hidden widths for NM-PPG. For GSMRL, which uses PPO, the critic/value network uses the same two-hidden-layer ReLU architecture as the corresponding policy network. For OL, we use the original shared \(P/Q\) architecture with two hidden layers, the same dataset-dependent hidden width as above, and dropout \(0.1\) in the predictor branch during training. For SEFA, we use two hidden layers of width \(256\) for both the encoder and predictor, with batch normalization as in the original method. Method-specific auxiliary models, such as the ACFlow-style surrogate used by GSMRL, use the architecture prescribed by the corresponding baseline implementation. Model selection is performed only on the validation split, and test results are computed only after this validation-based selection.

\subsection{Ablation Studies} \label{appendix:ablations}
The ablation study isolates two key design choices in NM-PPG. First, to evaluate the ST rollout procedure from Section~\ref{sec:relaxation}, we compare NM-PPG to a soft-rollout variant that optimizes the fully relaxed objective in \eqref{eq:relaxed_return_and_objective} directly instead of the ST objective in \eqref{eq:st_nostop_return}. Second, to evaluate entropy regularization from Section~\ref{sec:alg}, we compare against a variant with \(\lambda_{\mathrm{ent}}=0\). Figure~\ref{fig:ablation-grid} shows that both components are useful: the ST rollout improves alignment with the discrete deployment policy, while entropy regularization helps maintain exploration during policy optimization.

\begin{figure*}[t]
\centering
\includegraphics[width=\textwidth]{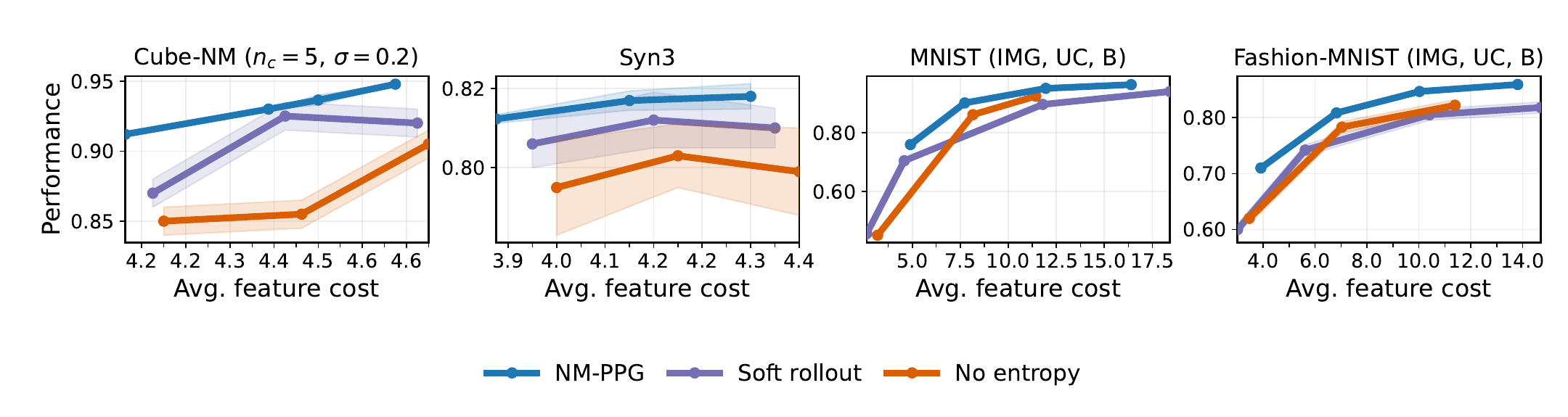}
\caption{Ablation study comparing NM-PPG with two variants: a soft-rollout variant that optimizes the relaxed objective in \eqref{eq:relaxed_return_and_objective} directly, and a no-entropy variant with \(\lambda_{\mathrm{ent}}=0\). Performance is measured by accuracy for balanced datasets and F1-score for imbalanced datasets.}
\label{fig:ablation-grid}
\end{figure*}

\subsection{Acquisition Paths} \label{appendix:acquisition-paths}
Figures~\ref{fig:acquisition-paths-cube-ctx5-sig01}--\ref{fig:acquisition-paths-clinical} visualize acquisition trajectories for NM-PPG and representative baselines. The synthetic plots compare against GDFS, GSMRL, and SEFA, while the clinical real-world plot compares against GDFS, GSMRL, and AACO. Each panel is a feature-by-step heatmap: the \(x\)-axis gives the acquisition step, the \(y\)-axis gives the feature, feature group, or STOP action, and the color indicates the percentage of test instances for which the method selects that row at that step. For synthetic datasets, each row shows a context-defined subset of test instances. For readability, each row shows the most frequently acquired feature groups across the displayed methods and the first eight acquisition steps. The plots are intended to show not only how many features each method acquires, but also whether the learned policy uses early acquisitions as context for later decisions.

\textbf{Cube-NM.}
Figures~\ref{fig:acquisition-paths-cube-ctx5-sig01} and~\ref{fig:acquisition-paths-cube-ctx5-sig02} show the two Cube-NM variants with \(n_c=5\). The first five features are context features: they are weak predictors by themselves, but they identify which later feature block is informative for the current instance. NM-PPG consistently acquires context features early and then follows up with context-dependent feature acquisitions, which is the intended non-myopic behavior. GDFS behaves differently. For \(\sigma=0.1\), it largely skips the context features and directly acquires features that look locally predictive, as expected for a myopic method. For \(\sigma=0.2\), GDFS sometimes acquires context features, but its later acquisitions do not cleanly follow the context; this is consistent with the noisier setting making context features appear locally useful without giving GDFS a mechanism to plan the follow-up sequence. GSMRL can sometimes recover a similar context-first behavior, but tends to use more acquisitions, while SEFA is more diffuse and less consistently aligned with the context structure.

\textbf{Syn1.}
Figure~\ref{fig:acquisition-paths-syn1} shows that NM-PPG first acquires \(x_{11}\), the context feature, for both \(x_{11}<0\) and \(x_{11}\geq0\). It then branches to different feature groups depending on the context value, for example primarily selecting \(x_1\) and \(x_2\) in one branch and \(x_3\) and \(x_4\) in the other. GDFS instead tends to select the same locally useful features before observing the context, so its policy is less instance-adaptive. GSMRL also often discovers the context-first policy on Syn1, while SEFA partially uses \(x_{11}\) but produces a less concentrated acquisition pattern.

\textbf{Syn3.}
Figure~\ref{fig:acquisition-paths-syn3} shows a similar pattern on Syn3. NM-PPG again uses \(x_{11}\) as the first acquisition and then changes the later feature sequence depending on whether \(x_{11}<0\) or \(x_{11}\geq0\). In contrast, GDFS typically starts with a non-context feature and only acquires \(x_{11}\) later, if at all, which means the context cannot guide the earliest acquisition decisions. GSMRL also finds an early-context policy on this dataset, whereas SEFA is less stable and often starts from a locally predictive feature rather than the context feature.

\textbf{NHANES Mortality.}
The first row of Figure~\ref{fig:acquisition-paths-clinical} shows NHANES Mortality. The main groups selected by NM-PPG are \(x_{17}\), serum protein, followed by \(x_{12}\), potassium. GDFS instead starts from \(x_{19}\), sodium, and usually stops after this single acquisition. Thus, NM-PPG learns a two-step biochemical screening policy, while the myopic baseline prefers a single locally predictive electrolyte measurement. GSMRL is closer to NM-PPG and often uses the same early sequence, although with additional later acquisitions, while AACO is more variable across instances.

\textbf{NHANES Diabetes.}
The second row of Figure~\ref{fig:acquisition-paths-clinical} shows NHANES Diabetes. Here \(x_2\) is age (\texttt{RIDAGEYR}), \(x_{21}\) is triglycerides (\texttt{LBXTR}), and \(x_{23}\) is LDL cholesterol (\texttt{LBDLDL}). NM-PPG first acquires age and then acquires triglycerides only for a subset of instances. GDFS instead acquires triglycerides immediately and then usually stops, which is a myopic policy because it pays for the expensive laboratory feature before using cheap demographic context. GSMRL also discovers the age-then-triglycerides pattern, whereas AACO more often starts directly from triglycerides or LDL cholesterol. This provides a real-world example where NM-PPG and GSMRL identify a context-first policy that GDFS does not recover.

\begin{figure*}[htbp]
\centering
\includegraphics[width=\textwidth]{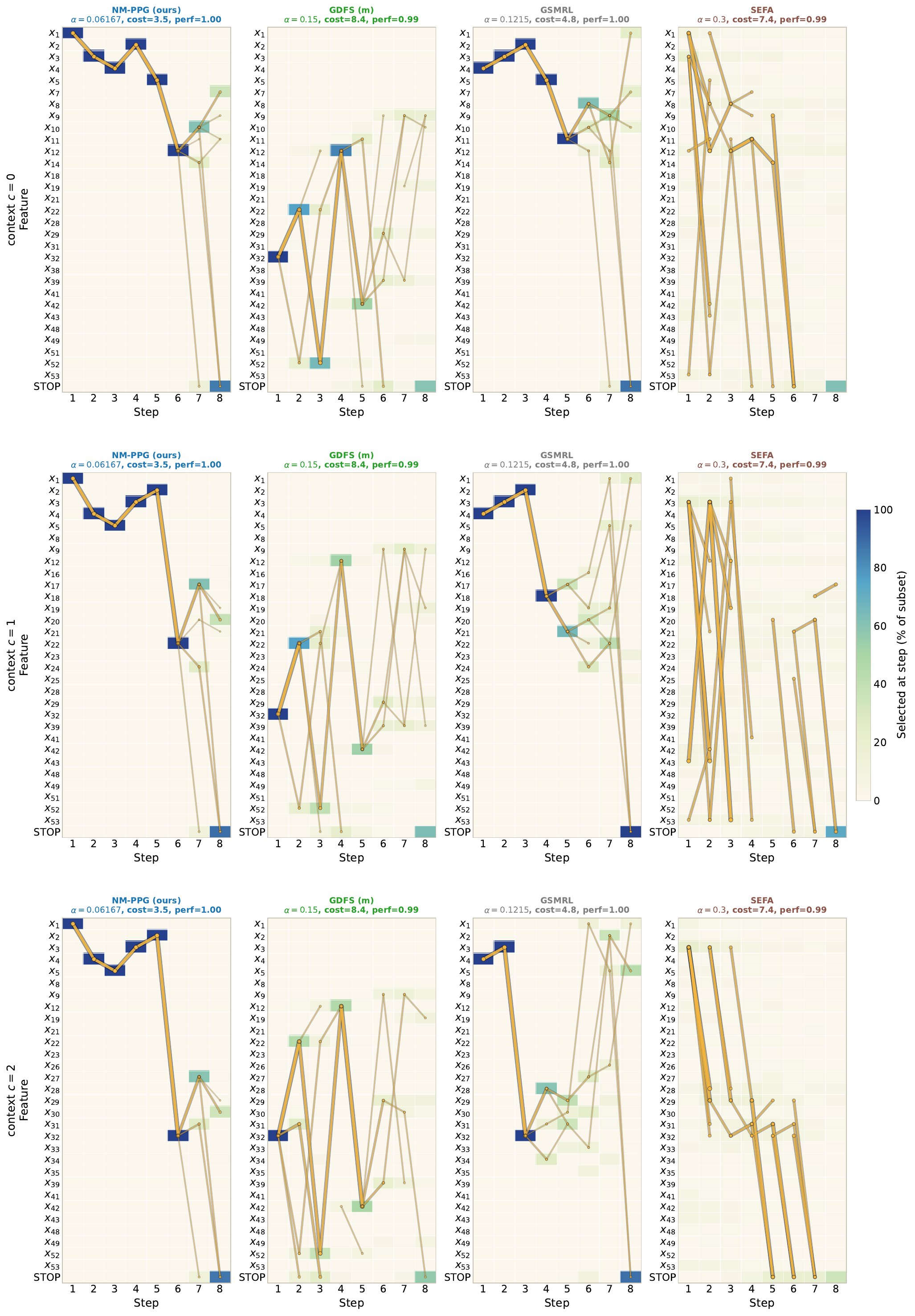}
\caption{Acquisition trajectories on Cube-NM with \(n_c=5\) and \(\sigma=0.1\). Rows show three different context values. NM-PPG is compared with GDFS, GSMRL, and SEFA using feature-by-step acquisition-frequency heatmaps.}
\label{fig:acquisition-paths-cube-ctx5-sig01}
\end{figure*}

\begin{figure*}[htbp]
\centering
\includegraphics[width=\textwidth]{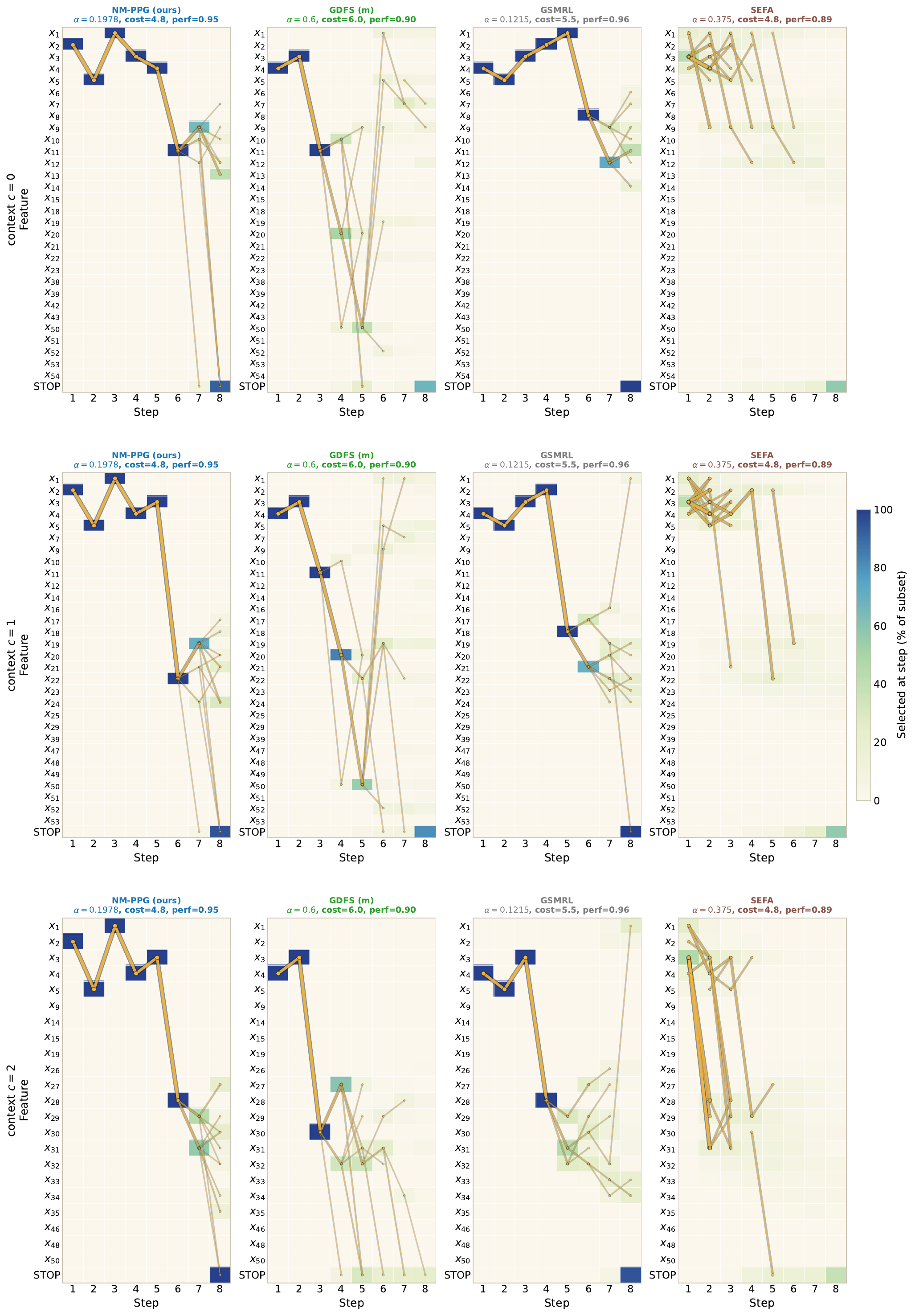}
\caption{Acquisition trajectories on Cube-NM with \(n_c=5\) and \(\sigma=0.2\). Rows show three different context values. NM-PPG is compared with GDFS, GSMRL, and SEFA using feature-by-step acquisition-frequency heatmaps.}
\label{fig:acquisition-paths-cube-ctx5-sig02}
\end{figure*}

\begin{figure*}[htbp]
\centering
\includegraphics[width=\textwidth]{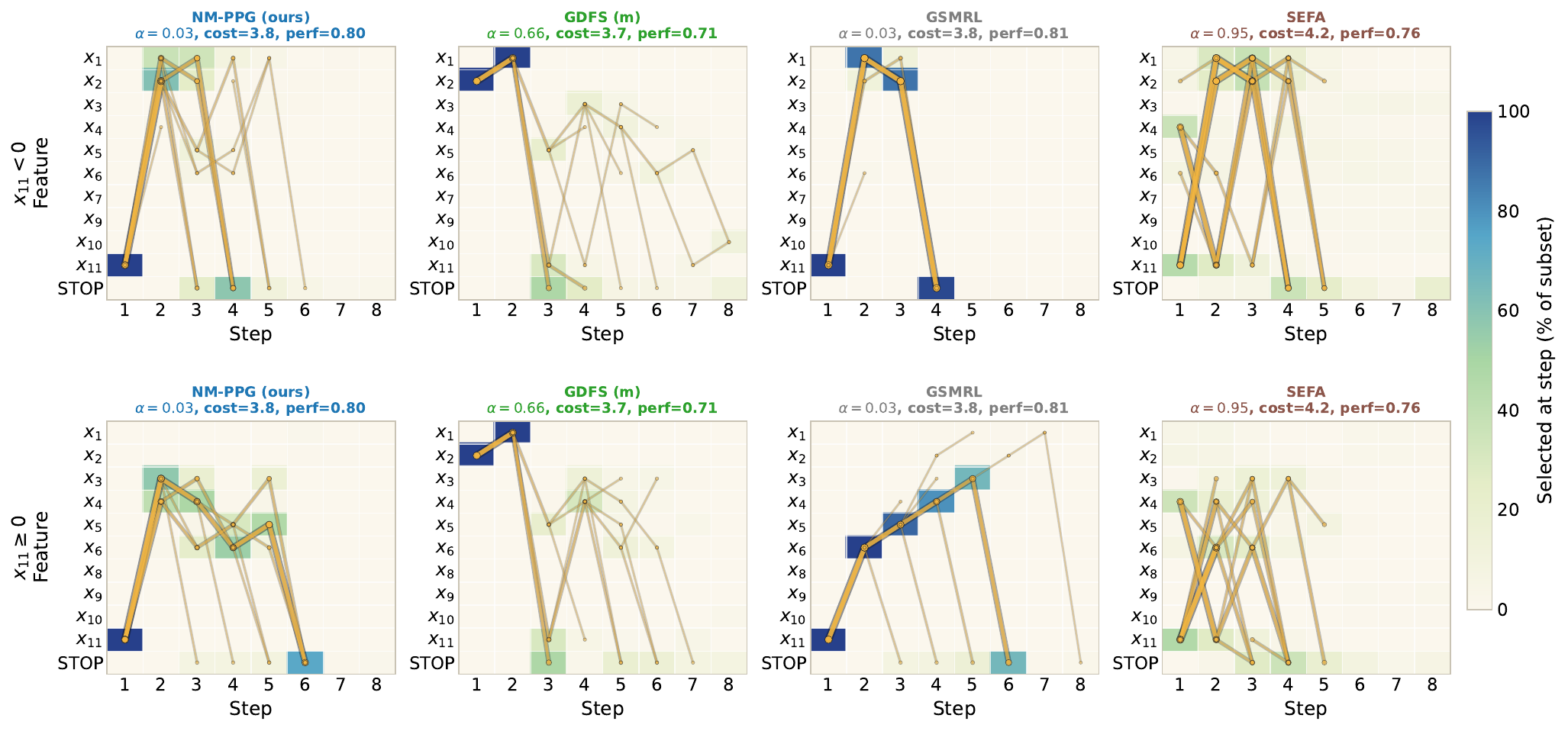}
\caption{Acquisition trajectories on Syn1. Rows show instances with \(x_{11}<0\) and instances with \(x_{11}\geq 0\). NM-PPG is compared with GDFS, GSMRL, and SEFA using feature-by-step acquisition-frequency heatmaps.}
\label{fig:acquisition-paths-syn1}
\end{figure*}

\begin{figure*}[htbp]
\centering
\includegraphics[width=\textwidth]{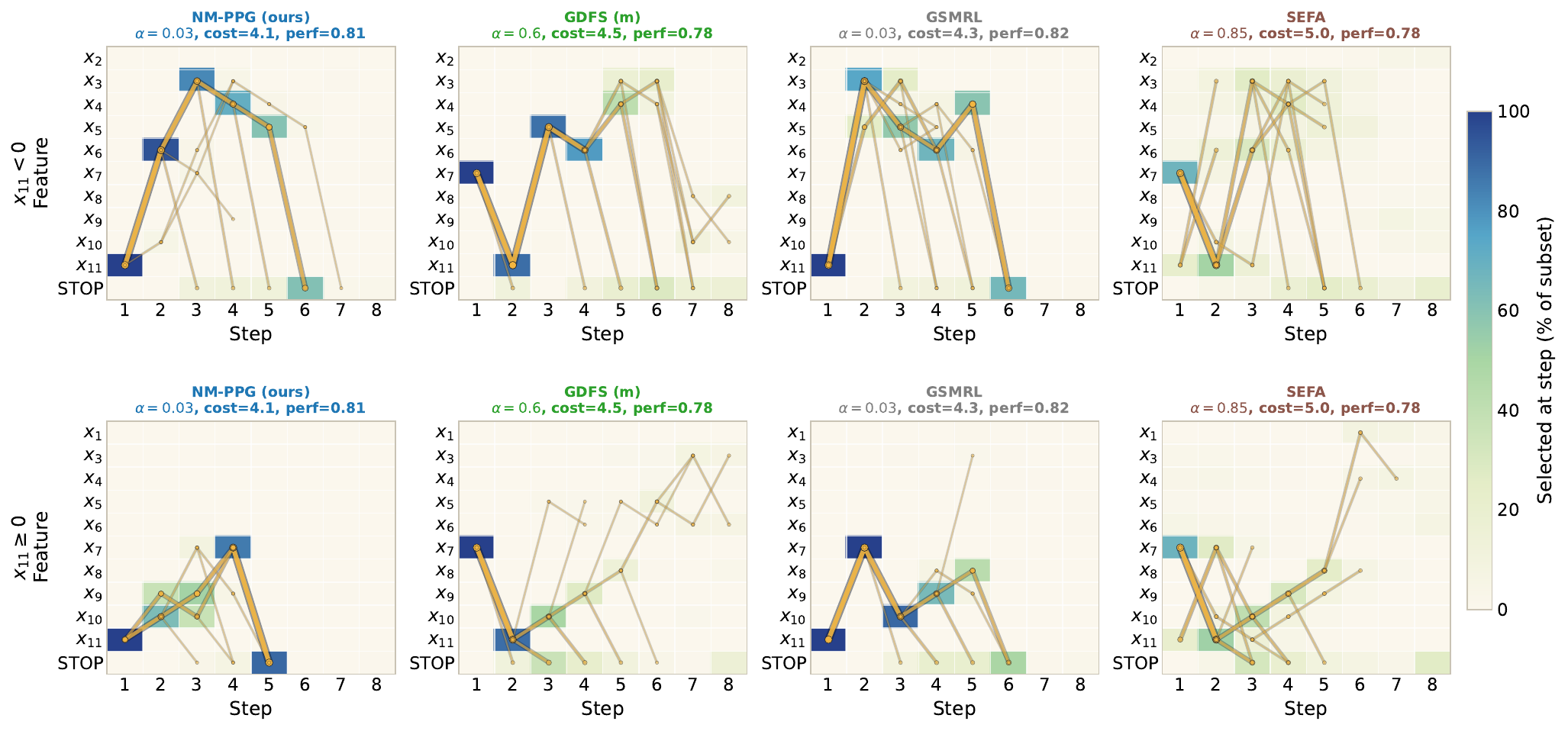}
\caption{Acquisition trajectories on Syn3. Rows show instances with \(x_{11}<0\) and instances with \(x_{11}\geq 0\). NM-PPG is compared with GDFS, GSMRL, and SEFA using feature-by-step acquisition-frequency heatmaps.}
\label{fig:acquisition-paths-syn3}
\end{figure*}

\begin{figure*}[htbp]
\centering
\includegraphics[width=\textwidth]{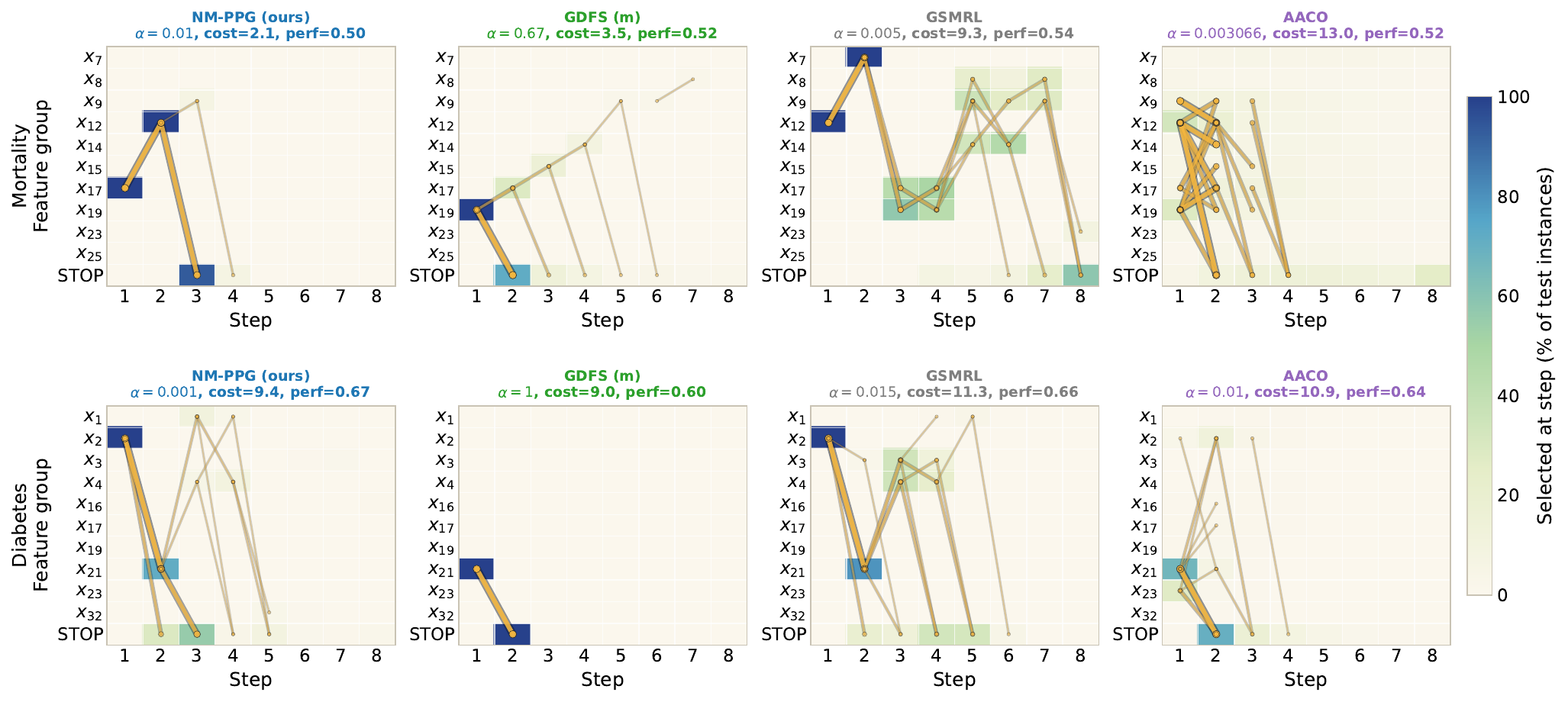}
\caption{Acquisition trajectories on NHANES Mortality and NHANES Diabetes. NM-PPG is compared with GDFS, GSMRL, and AACO using feature-by-step acquisition-frequency heatmaps.}
\label{fig:acquisition-paths-clinical}
\end{figure*}

\FloatBarrier
\subsection{Runtime} \label{appendix:runtime}
Tables~\ref{tab:runtime-train} and~\ref{tab:runtime-eval} report the training and evaluation runtime for each method and dataset, excluding shared pretraining time.

Table~\ref{tab:runtime-train} shows that NM-PPG has training runtime that is close to the myopic methods, although it is larger overall because it optimizes a non-myopic rollout rather than a one-step acquisition rule. This additional cost is moderate relative to the gain in expressivity: NM-PPG can identify non-myopic structure while remaining comparable in runtime to RL-based non-myopic methods such as GSMRL and OL. AACO and SEFA, which are also non-myopic, are often cheaper to train, but they do not optimize the full AFA-POMDP in the same sense as NM-PPG and RL methods. Instead, they use approximations tailored to the AFA structure, exploiting that non-myopic feature acquisition requires reasoning about jointly informative feature sets. Their weaker overall performance indicates that these approximations do not fully capture long-term cost minimization in the underlying POMDP.

Table~\ref{tab:runtime-eval} shows that NM-PPG is very fast at evaluation time, with runtime similar to the other learned policies. This is important because deployment only requires a forward pass of the policy and predictor along the acquired feature path. AACO is non-parametric and does not train a separate acquisition policy, but under our protocol it still performs training-data rollouts for predictor alignment; its nearest-neighbor and candidate-subset computations also explain the larger evaluation runtime in Table~\ref{tab:runtime-eval}.
\begin{table}[t]
\centering
\caption{Training runtime in seconds for each dataset and method. Values are averaged across values of \(\alpha\) and seeds. A dash indicates that the method is unsupported for the dataset or that no measured runtime is available.}
\label{tab:runtime-train}
\scriptsize
\setlength{\tabcolsep}{2.4pt}
\resizebox{\textwidth}{!}{%
\begin{tabular}{lcccccccc}
\toprule
Dataset & NM-PPG & DiFA (m) & DIME (m) & GDFS (m) & AACO & SEFA & GSMRL & OL \\
\midrule
Cube-NM ($n_c=5$, $\sigma=0.1$) & 3328 & 426 & 466 & 1288 & 4123 & 1491 & 3876 & 1146 \\
Cube-NM ($n_c=5$, $\sigma=0.2$) & 3337 & 428 & 450 & 515 & 2604 & 1483 & 6751 & 813 \\
Syn1 & 7583 & 5587 & 3473 & 5499 & 7616 & 2594 & 15616 & 5061 \\
Syn3 & 8624 & 3262 & 2671 & 2864 & 8691 & 2778 & 13766 & 5740 \\
Connect4 & 6875 & 2520 & 2367 & 3041 & 18000 & 5302 & 18000 & 3083 \\
Splice & 1246 & 306 & 271 & 294 & 1190 & 762 & 766 & 758 \\
EngineFaultDB & 7663 & 1250 & 2319 & 2667 & 4806 & 1979 & 7344 & 1184 \\
Metabric & 1214 & 267 & 182 & 225 & 1028 & 1167 & 1437 & 326 \\
Mortality & 3084 & 528 & 443 & 524 & 2910 & 1274 & 5326 & 763 \\
Diabetes & 10286 & 2277 & 1796 & 2151 & 7579 & 4896 & 7726 & 6038 \\
MNIST & 11334 & 7803 & 8011 & 9776 & -- & -- & 18000 & 12451 \\
Fashion-MNIST & 10499 & 9062 & 8091 & 10084 & -- & -- & 15387 & 12512 \\
\bottomrule
\end{tabular}%
}
\end{table}

\begin{table}[t]
\centering
\caption{Evaluation runtime in seconds for each dataset and method. Values are averaged across values of \(\alpha\) and seeds. A dash indicates that the method is unsupported for the dataset.}
\label{tab:runtime-eval}
\scriptsize
\setlength{\tabcolsep}{2.4pt}
\resizebox{\textwidth}{!}{%
\begin{tabular}{lcccccccc}
\toprule
Dataset & NM-PPG & DiFA (m) & DIME (m) & GDFS (m) & AACO & SEFA & GSMRL & OL \\
\midrule
Cube-NM ($n_c=5$, $\sigma=0.1$) & 12.6 & 9.1 & 10.4 & 10.3 & 241 & 159 & 17.9 & 8.7 \\
Cube-NM ($n_c=5$, $\sigma=0.2$) & 15.3 & 9.2 & 9.3 & 10.1 & 166 & 130 & 19.5 & 9.7 \\
Syn1 & 44.2 & 38.8 & 31.1 & 31.9 & 333 & 316 & 55.0 & 78.1 \\
Syn3 & 59.9 & 35.9 & 35.9 & 30.7 & 371 & 281 & 59.1 & 77.4 \\
Connect4 & 92.3 & 67.3 & 62.1 & 78.3 & 1653 & 460 & 110 & 105 \\
Splice & 2.1 & 2.1 & 1.8 & 2.0 & 53.6 & 15.8 & 3.7 & 2.4 \\
EngineFaultDB & 22.6 & 17.6 & 16.6 & 18.6 & 260 & 133 & 38.3 & 12.4 \\
Metabric & 2.7 & 2.6 & 2.1 & 3.0 & 63.9 & 106 & 3.2 & 2.3 \\
Mortality & 13.0 & 8.1 & 7.3 & 7.8 & 146 & 74.9 & 19.1 & 3.3 \\
Diabetes & 38.7 & 31.2 & 28.9 & 30.4 & 398 & 211 & 65.3 & 35.8 \\
MNIST & 98.2 & 93.3 & 79.6 & 88.7 & -- & -- & 174 & 69.1 \\
Fashion-MNIST & 86.9 & 83.3 & 65.4 & 78.7 & -- & -- & 120 & 73.6 \\
\bottomrule
\end{tabular}%
}
\end{table}

\subsection{Training Dynamics} \label{appendix:training-dynamics}
Figure~\ref{fig:training-dynamics} shows representative training dynamics for NM-PPG on the same datasets as Figure~\ref{fig:main-results-grid}. Each row corresponds to one dataset and one value of the trade-off parameter \(\alpha\), while the columns show prediction loss, acquisition cost, and the full cost--loss objective. Each panel contains both training and validation curves. The validation objective is the value of \eqref{eq:afa_obj_soft} evaluated on the validation split with the deterministic policy, while the training objective is the analogous quantity computed on the training rollout. For readability, each curve is shown as a moving average of the per-epoch values saved during training. Dashed vertical lines indicate transitions between the staged values of \(\tau_{\mathrm{soft}}\) used in Alg.~1, and the solid black line marks the epoch selected by validation model selection.

The curves show that NM-PPG does not simply improve prediction by acquiring all features. Instead, prediction loss, acquisition cost, and the combined objective evolve jointly on both the training and validation splits. The stage transitions also show that the optimization remains stable as \(\tau_{\mathrm{soft}}\) is annealed, which supports the staged training procedure in Alg.~1. This is useful in practice because it avoids tuning a dataset-specific temperature schedule while still allowing the relaxed policy to become increasingly close to the discrete deployment policy.

\begin{figure*}[t]
\centering
\includegraphics[width=\textwidth,height=0.90\textheight,keepaspectratio]{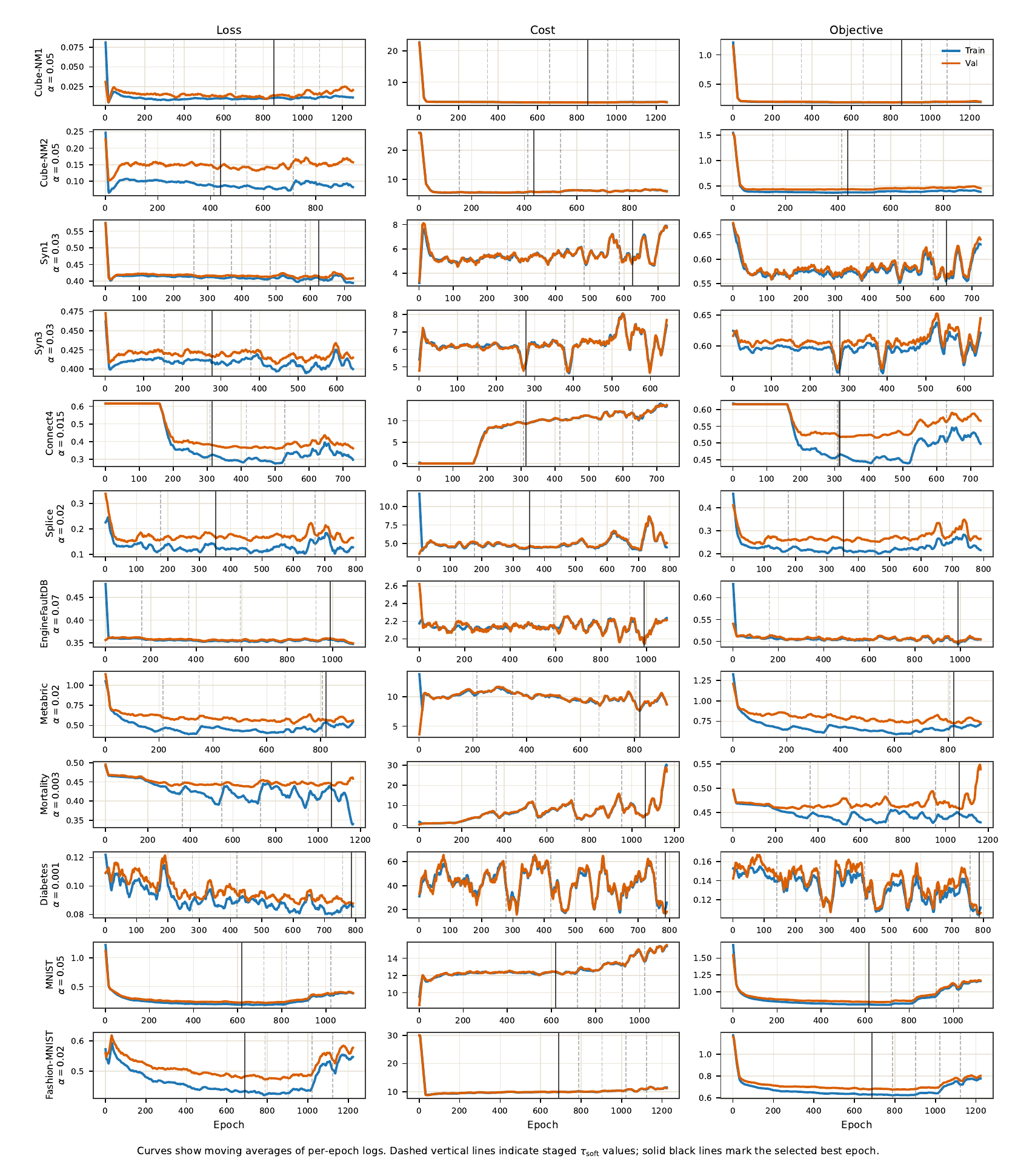}
\caption{Training dynamics for NM-PPG. Cube-NM1 denotes Cube-NM with \(n_c=5\) and \(\sigma=0.1\), while Cube-NM2 denotes Cube-NM with \(n_c=5\) and \(\sigma=0.2\). Each row corresponds to one dataset and value of \(\alpha\), and columns show prediction loss, acquisition cost, and the full cost--loss objective. Blue curves show training values, orange curves show validation values, and all curves are moving averages of per-epoch logs. Dashed vertical lines indicate transitions between staged values of \(\tau_{\mathrm{soft}}\), while solid black lines mark the epoch selected by validation model selection.}
\label{fig:training-dynamics}
\end{figure*}

\subsection{Method-Group Result Grids} \label{appendix:full-results}
Figures~\ref{fig:full-results-grid} and~\ref{fig:nonmyopic-results-grid} provide larger versions of the main result grid, separated by baseline class. Figure~\ref{fig:full-results-grid} compares NM-PPG with the myopic baselines, while Figure~\ref{fig:nonmyopic-results-grid} compares NM-PPG with the non-myopic baselines. Both figures use the same datasets, metrics, and styling as Figure~\ref{fig:main-results-grid}.

\begin{figure*}[t]
\centering
\includegraphics[width=\textwidth,height=0.95\textheight,keepaspectratio]{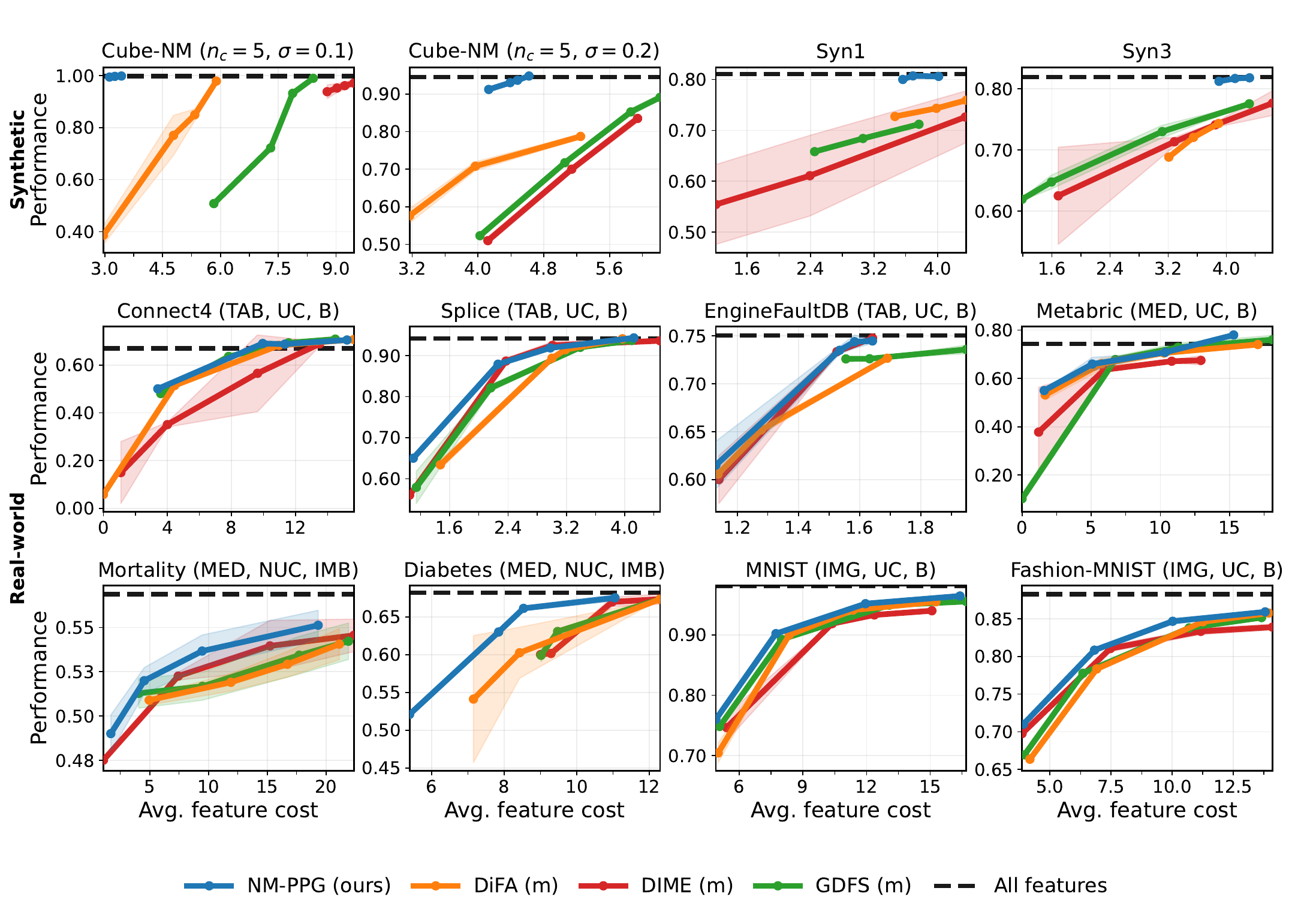}
\caption{Results for NM-PPG, myopic baselines, and the all-features reference across synthetic (top row) and real-world (2 bottom rows) datasets. The row labels separate synthetic and real-world datasets. Performance is measured by accuracy for balanced datasets and F1-score for imbalanced datasets. The suffix (m) in the legend denotes myopic baselines. For real-world datasets, titles use the following acronyms: TAB/MED/IMG (tabular/medical/image), NUC/UC (non-uniform-cost/uniform-cost), and IMB/B (imbalanced/balanced).}
\label{fig:full-results-grid}
\end{figure*}

\begin{figure*}[t]
\centering
\includegraphics[width=\textwidth,height=0.95\textheight,keepaspectratio]{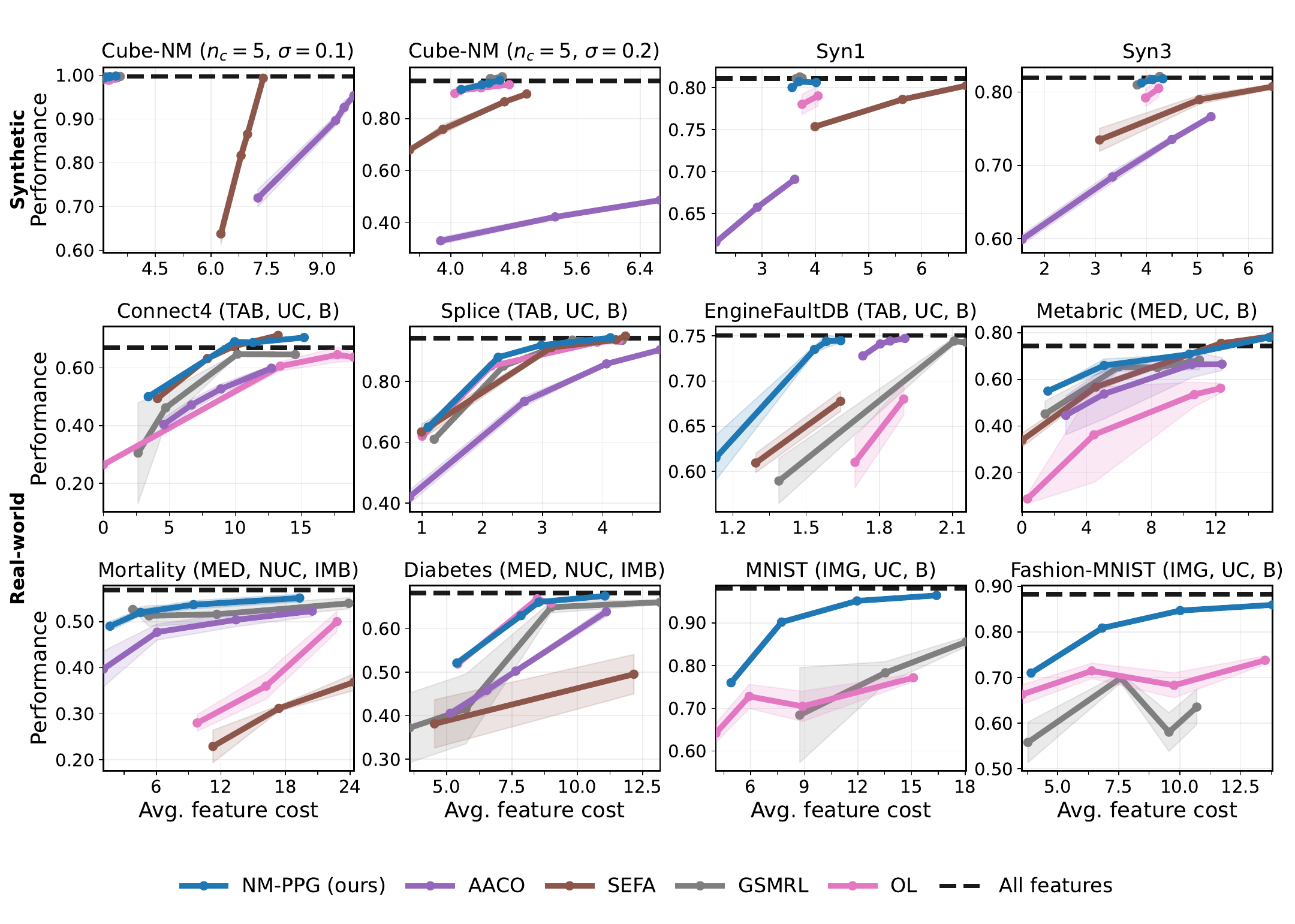}
\caption{Results for NM-PPG, non-myopic baselines, and the all-features reference across synthetic (top row) and real-world (2 bottom rows) datasets. The row labels separate synthetic and real-world datasets. Performance is measured by accuracy for balanced datasets and F1-score for imbalanced datasets. For real-world datasets, titles use the following acronyms: TAB/MED/IMG (tabular/medical/image), NUC/UC (non-uniform-cost/uniform-cost), and IMB/B (imbalanced/balanced).}
\label{fig:nonmyopic-results-grid}
\end{figure*}

\section{Additional Details on the Continuous Relaxation} \label{appendix:relaxation}

We briefly clarify several aspects of the relaxation beyond the structural claims in Theorem~\ref{theorem:relaxed_to_discrete}. First, the relaxed sample \(\tilde a_t\in[0,1]^{d+1}\) induces two different quantities: the relaxed stop mass \(\tilde a_{t,d+1}\) and the conditional feature-acquisition distribution \(\tilde r_t\), defined as
\begin{equation}
\tilde r_t
=
\operatorname{softmax}((z_{t,1:d}/\tau_{\mathrm{hard}}+\varepsilon_{t,1:d})/\tau_{\mathrm{soft}}),
\end{equation}
so \(\tilde r_t\) is the Gumbel-Softmax relaxation of the feature distribution conditioned on not stopping. For finite \(\tau_{\mathrm{soft}}>0\), this is equivalent to \(\tilde r_t=\tilde a_{t,1:d}/(1-\tilde a_{t,d+1})\). The associated hard feature action is \(r_t=\operatorname{onehot}(\arg\max_{j\in[d]}\tilde r_{t,j})\), which is equivalent to sampling \(r_t\sim \operatorname{softmax}(z_{t,1:d}/\tau_{\mathrm{hard}})\) by the Gumbel-max trick. Thus, the hard feature path is sampled conditioned on continuation, while stopping remains represented by the soft stop mass and survival weights.
To see this, note that the softmax transformation and the division by \(\tau_{\mathrm{soft}}>0\) do not change the maximizer, so
\begin{equation}
\arg\max_{j\in[d]}\tilde r_{t,j}
=
\arg\max_{j\in[d]}(z_{t,j}/\tau_{\mathrm{hard}}+\varepsilon_{t,j}),
\quad
\mathbb{P}(r_t=e_j)
=
\frac{\exp(z_{t,j}/\tau_{\mathrm{hard}})}
{\sum_{\ell=1}^d \exp(z_{t,\ell}/\tau_{\mathrm{hard}})}.
\end{equation}
The last equality is exactly the Gumbel-max trick applied only to the feature logits \(z_{t,1:d}\), which gives \(r_t\sim \operatorname{softmax}(z_{t,1:d}/\tau_{\mathrm{hard}})\).

Second, the two temperatures in \eqref{eq:reparam} play different roles. The parameter \(\tau_{\mathrm{hard}}\) controls the sampling distribution induced by the Gumbel perturbations: it affects both the stop-versus-continue relaxed masses in \(\tilde a_t\) and the conditional feature distribution from which \(r_t\) is sampled. By contrast, \(\tau_{\mathrm{soft}}\) controls the sharpness of the relaxed samples used in the differentiable computation graph. As \(\tau_{\mathrm{soft}}\) decreases, \(\tilde a_t\) and \(\tilde r_t\) become closer to one-hot vectors, so the relaxed trajectory approaches the corresponding discrete process more closely, but the resulting gradients also become sharper.

Third, feature availability is enforced by blocking logits based on the hard feature path. If \(r_{t,j}=1\), feature \(j\) is treated as acquired and its logit is set to \(-\infty\) in all later steps before forming both \(\tilde a_{t'}\) and \(r_{t'}\). This blocking is based on the hard acquisition \(r_t\), not on whether the relaxed vector \(\tilde r_t\) assigns positive mass to a feature. Consequently, at every later step, \(\tilde r_{t'}\) is a distribution over the feature actions that remain available after the hard acquisition prefix.

Fourth, it is important to distinguish the relaxed stop mass \(\tilde a_{t,d+1}\) from the marginal probability of actually stopping at step \(t\) under the original discrete policy. Under an auxiliary relaxed branching interpretation for a fixed Gumbel realization, let \(E_t^{\mathrm{alive}}\) denote the event that the relaxed trajectory is still active at step \(t\), and let \(E_t^{\mathrm{stop}}\) denote the event that it stops at step \(t\). Then
\begin{equation}
\mathbb{P}(E_t^{\mathrm{stop}})=\mathbb{P}(E_t^{\mathrm{alive}})\,\mathbb{P}(\mathrm{STOP}\text{ at }t\mid E_t^{\mathrm{alive}}).
\end{equation}
In this auxiliary interpretation, \(\mathbb{P}(E_t^{\mathrm{alive}})=\tilde s_t\) and \(\mathbb{P}(\mathrm{STOP}\text{ at }t\mid E_t^{\mathrm{alive}})=\tilde a_{t,d+1}\). By unrolling the recursion in \eqref{eq:relaxed_dynamics}, we have \(\tilde s_t = \prod_{i=0}^{t-1}(1-\tilde a_{i,d+1})\) with \(\tilde s_0=1\). Therefore, \(\tilde s_t \tilde a_{t,d+1}\) is the relaxed stopping weight assigned to step \(t\). These quantities are relaxed random weights, not marginal stopping probabilities under the original discrete AFA policy. This is why the terminal-loss part of \eqref{eq:relaxed_return_and_objective} weights \(\tilde\ell_t\) by \(\tilde s_t \tilde a_{t,d+1}\) rather than by \(\tilde a_{t,d+1}\) alone.

Finally, the terminal term \(\tilde s_k \tilde\ell_k\) in \eqref{eq:relaxed_return_and_objective} is needed to enforce the finite-horizon structure of the \(k\)-truncated relaxation. After at most \(k\) acquisition opportunities, the process must terminate. The term \(\tilde s_k \tilde\ell_k\) therefore collects the remaining survival mass at step \(k\) and charges the terminal prediction loss there, ensuring that the relaxed process matches the truncated optimization problem used in training. Together, the terms \(\tilde s_t \tilde a_{t,d+1}\) for \(t=0,\ldots,k-1\) and the residual mass \(\tilde s_k\) define the full relaxed weighting over the stopping step.

The continuous relaxation enables pathwise gradients through the full truncated rollout. After reparameterization, \(\tilde a_t\) and \(\tilde r_t\) are differentiable functions of \((z_t,\varepsilon_t)\), and the state updates in \eqref{eq:relaxed_dynamics} are also differentiable for a fixed hard blocking pattern. Consequently, for fixed \((x,y,\varepsilon)\), the full \(k\)-step relaxed trajectory defines a differentiable computation graph with respect to \(\theta\). This is why the relaxed objective can be optimized by backpropagating through the entire rollout, as discussed in Section~\ref{sec:pathwise}.

\section{Further Details on the Straight-Through Rollouts} \label{appendix:full-derivation-gradients}

The main benefit of the ST rollout is that it changes the \emph{forward} optimization target so that prediction losses and feature costs are evaluated on hard masks, while retaining low-variance pathwise gradients through the soft relaxation. In a fully relaxed rollout, the predictor loss at step \(t\) is evaluated at the fractional state \(x(\tilde m_t)\), so the policy can be updated using states that never occur at deployment. Under the ST rollout, the forward pass instead uses \(\bar m_t=m_t\), while the backward pass still satisfies \(\partial \bar m_t/\partial \tilde m_t=I\). Hence, for the loss term \(\bar\ell_t=\ell(f_\phi(x(\bar m_t)),y)\), the surrogate derivative through the mask has the form
\begin{equation}
\nabla_{\theta}^{\mathrm{ST}}\bar\ell_t
=
\frac{\partial \ell(f_\phi(x(m)),y)}{\partial m}\bigg|_{m=m_t}
\frac{\partial \tilde m_t}{\partial \theta},
\label{eq:appendix_st_loss_gradient_hard_mask}
\end{equation}
where vector-Jacobian contractions are implicit. Thus, the loss derivative is evaluated at the hard mask \(m_t\) actually visited by the sampled feature path, but the gradient is propagated through the relaxed mask dynamics that produced \(\tilde m_t\).

The no-stop ST rollout used by NM-PPG samples a hard feature action at every step \(t<k\), conditioned on not stopping, and therefore keeps the hard feature path alive for all \(k\) acquisition steps. The purpose is to separate two sources of discreteness. For feature acquisition, we want the forward pass to use hard masks, because the predictor and policy will only see hard masks at deployment and repeated training on fractional masks can create a relaxation gap. For stopping, however, a hard sampled stop would truncate the trajectory and remove all later feature-choice gradients. We therefore keep the stopping branch soft: the loss at step \(t\) is weighted by \(\tilde s_t \tilde a_{t,d+1}\), the feature cost at step \(t\) is weighted by \(\tilde s_t(1-\tilde a_{t,d+1})\), and the forced terminal loss is weighted by \(\tilde s_k\). Consequently, if the policy assigns high stop mass at an early step, later terms receive little survival mass; if it assigns low stop mass, gradients continue to shape later acquisitions. This gives feature-choice gradients at deployment-like hard states, while the stop masses receive smooth pathwise gradients that determine how much each future step should matter.

Blocking already acquired feature logits means that the step-\(t\) gradient compares the features still available after the hard acquisition prefix. Therefore, later-step gradients answer which remaining feature should be acquired next, conditional on the hard features already selected. This matches the discrete AFA constraint that a feature cannot be acquired twice, while the recursive soft dynamics still propagate the effect of earlier choices through all later masks and stopping weights.

The estimator remains biased relative to the exact gradient of the hard-forward objective, because the backward pass differentiates through a continuous relaxation rather than through the true discrete sampling process. However, compared with score-function policy gradients, it provides a lower-variance pathwise signal, and compared with a fully relaxed rollout, it evaluates the forward trajectory on hard masks that are aligned with deployment.

\section{AFA As a POMDP} \label{appendix:pomdp}

To formalize the discussion in Section~\ref{sec:pomdpinstantiation}, we define the standard AFA problem in \eqref{eq:afa_obj_soft} as a finite-horizon, undiscounted POMDP
\begin{equation}
\mathcal{M}
=
(\mathcal{S},\mathcal{A},\mathcal{O},P,O,C,\rho_0,H).
\end{equation}
Here, \(\mathcal{S}\) is the latent state space, \(\mathcal{A}\) is the action space, \(\mathcal{O}\) is the observation space, \(P\) is the state-transition kernel, \(O\) is the observation kernel, \(C\) is the immediate cost function, \(\rho_0\) is the initial state distribution, and \(H\) is the finite horizon, instantiated below by the truncation horizon \(k\). We work with costs rather than rewards in order to align directly with \eqref{eq:return_and_objective}.

\subsection{AFA-POMDP Definition} \label{appendix:pomdp-definition}

\textbf{State space.}
A nonterminal state at step \(t\) is
\begin{equation}
\omega_t = (S_t,x,y) \in 2^{[d]} \times \mathcal{X} \times \mathcal{Y},
\end{equation}
where \(S_t\subseteq[d]\) is the set of observed feature indices, \(U_t=[d]\setminus S_t\) is the set of unobserved indices, and \(x\) and \(y\) are the latent fully observed instance and label. This is equivalent to the main-text state representation \((m_t,x,y)\), since the binary mask \(m_t\) encodes the observed set via \(m_{t,j}=\mathbf{1}\{j\in S_t\}\). We also include an absorbing terminal state \(\omega_\bot\). The initial state distribution is induced by the data distribution \(p(\mathbf{x},\mathbf{y})\) and the initial observed set,
\begin{equation}
\rho_0(S,x,y) = \mathbf{1}\{S=S_0\}\,p(\mathbf{x}=x,\mathbf{y}=y).
\end{equation}
Throughout the paper we take \(S_0=\emptyset\), equivalently \(m_0=\mathbf{0}\). The horizon is finite because we fix a truncation horizon \(k \le d\): if the stop action has not been selected earlier, the process is forced to stop at step \(k\).

\textbf{Observations and action space.}
The agent does not observe \((x,y)\) directly. Instead, at state \((S_t,x,y)\), it observes
\begin{equation}
o_t = (S_t,x_{S_t}) \in \mathcal{O}.
\end{equation}
Thus, the observation kernel is deterministic: \(O(o \mid S_t,x,y)=\mathbf{1}\{o=(S_t,x_{S_t})\}\). The global action set is \(\mathcal{A}=\{1,\dots,d+1\}\), where actions \(a\in[d]\) acquire features and \(a=d+1\) is the stopping action. At state \(S_t\), the available actions are
\begin{equation}
\mathcal{A}(S_t) = U_t \cup \{d+1\},
\end{equation}
that is, already acquired features are unavailable.

\textbf{Transition dynamics.}
If \(a \in U_t\), then \(S_{t+1}=S_t\cup\{a\}\) and \(\omega_{t+1}=(S_{t+1},x,y)\). If \(a=d+1\), the process terminates and transitions to \(\omega_\bot\). Although the state transition is deterministic given \((x,y)\), the newly revealed feature value is random from the agent's perspective through the posterior over the unobserved components.

\textbf{Belief state and observation law.}
After observing \(o_t=(S_t,x_{S_t})\), the belief over the hidden variables is
\begin{equation}
b_t
\triangleq
\bigl(S_t,\; p(\mathbf{x}_{U_t},\mathbf{y}\mid x_{S_t})\bigr).
\end{equation}
After choosing feature \(a\in U_t\), the newly observed scalar \(\mathbf{x}_a\) has predictive distribution
\begin{equation}
p(\mathbf{x}_a=o \mid x_{S_t})
=
\mathbb{E}_{\mathbf{y}\mid x_{S_t}}
\bigl[p(\mathbf{x}_a=o \mid x_{S_t},\mathbf{y})\bigr].
\end{equation}
This is the probability law used in the induced belief-MDP to average over possible outcomes of acquiring feature \(a\).

\textbf{Belief update.}
After acquiring \(a\in U_t\) and observing \(\mathbf{x}_a=o\), the updated belief state is
\begin{equation}
b(S_t,x_{S_t},a,o)
\triangleq
\bigl(
S_t\cup\{a\},
\;
p(\mathbf{x}_{U_t\setminus\{a\}},\mathbf{y}\mid x_{S_t},\mathbf{x}_a=o)
\bigr).
\end{equation}
Hence, the AFA-POMDP can be rewritten as a fully observable belief-MDP whose state is the posterior induced by the currently observed components \((S_t,x_{S_t})\).

\textbf{Immediate cost.}
To match the standard AFA objective in \eqref{eq:afa_obj_soft}, we define the immediate cost for feature acquisition and stopping as
\begin{equation}
\label{eq:appendix_standard_afa_cost}
C((S_t,x,y),a)
=
\begin{cases}
\alpha c(a), & a\in U_t,\\
\ell(f_\phi(x(m_t)),y), & a=d+1.
\end{cases}
\end{equation}
For \(a\in U_t\), the expected acquisition cost in belief state \(b_t\) is simply \(C(b_t,a)=\alpha c(a)\). The corresponding expected stopping cost is
\begin{equation}
C(b_t,d+1)
=
\mathbb{E}_{\mathbf{y}\mid x_{S_t}}
\bigl[
\ell(f_\phi(x(m_t)),\mathbf{y})
\bigr].
\end{equation}
With forced stopping at horizon \(k\), minimizing the expected total cost under \(C\) is equivalent to minimizing \eqref{eq:afa_obj_soft}: the trajectory accumulates \(\alpha c(a)\) for each acquired feature and pays the terminal prediction loss at the stopping mask.

\textbf{Value Function and Sufficient Statistic.}
Given the belief-MDP above, the optimal finite-horizon value could be defined directly as a function of \(b_t\). However, the observable pair \((S_t,x_{S_t})\) is a sufficient statistic for this belief state under the fixed data distribution \(p(\mathbf{x},\mathbf{y})\). Moreover, the masked representation used in the main paper,
\begin{equation}
x(m_t)=(m_t\odot x,m_t),
\end{equation}
uniquely encodes \((S_t,x_{S_t})\), and therefore uniquely determines the belief over unobserved components \(p(\mathbf{x}_{U_t},\mathbf{y}\mid x_{S_t})\). Thus, defining the value function with respect to \(x(m_t)\) is equivalent to defining it with respect to the belief \(b_t\). This is the form targeted in the main paper: rather than explicitly estimating the full belief over unobserved features and labels, which is generally difficult, we condition the policy and predictor directly on \(x(m_t)\).

For the fixed truncation horizon \(k\), the number of remaining acquisition opportunities at step \(t\) is \(h=k-t\). We write the optimal \(h\)-step truncated cost-to-go as
\begin{equation}
\label{eq:appendix_vk_opt}
V_h(x(m_t))
=
\min\!\left\{
C(b_t,d+1),
\;
\min_{a\in U_t}
\left[
\alpha c(a)
+
\mathbb{E}_{\mathbf{x}_a\mid x_{S_t}}
\bigl[
V_{h-1}(x(m_{t+1}))
\bigr]
\right]
\right\},
\end{equation}
with \(V_0(x(m_t))=C(b_t,d+1)\), where \(m_{t+1}\) is the mask corresponding to \(S_t\cup\{a\}\) and \(x(m_{t+1})\) includes the realized value \(x_a\). The corresponding optimal \(h\)-step state-action value is
\begin{equation}
\label{eq:appendix_qk_opt}
Q_h(x(m_t),a)
\triangleq
\begin{cases}
C(b_t,d+1), & a=d+1,\\
\alpha c(a) + \mathbb{E}_{\mathbf{x}_a\mid x_{S_t}}\bigl[V_{h-1}(x(m_{t+1}))\bigr], & a\in U_t.
\end{cases}
\end{equation}
An optimal \(h\)-step policy then satisfies
\begin{equation}
\label{eq:appendix_pik_opt}
\pi_h^*(x(m_t)) \in \arg\min_{a\in \mathcal{A}(S_t)} Q_h(x(m_t),a).
\end{equation}
This appendix uses \(h\)-indexed value functions to make the finite-horizon Bellman recursion explicit while keeping \(k\) for the fixed truncation horizon of the overall AFA problem. Under the common initialization \(S_0=\emptyset\), equivalently \(m_0=\mathbf{0}\), we have \(h=k\) at the initial state, so minimizing \(J(\pi)\) in \eqref{eq:return_and_objective} over policies coincides with the optimal \(h=k\) control problem characterized by \eqref{eq:appendix_vk_opt} and \eqref{eq:appendix_qk_opt}. More generally, at step \(t\), the relevant value is \(V_{k-t}(x(m_t))\). When \(k=d\), this truncated problem coincides with the full finite-horizon AFA problem.

\subsection{Policy Gradient Theorem for AFA} \label{appendix:pgt}
A standard way to optimize \(J(\pi_\theta)\) is via the \emph{policy gradient theorem} \citep{sutton2018reinforcement}, which expresses the gradient of the expected total cost using the score function \(\nabla_\theta \log \pi_\theta(a_t \mid x(m_t))\). For the fixed truncation horizon \(k\), define the policy-specific state-action value under \(\pi_\theta\) as
\begin{equation}
\label{eq:appendix_q_policy}
Q^{(\pi_\theta)}(x(m_t), a)
\triangleq
\mathbb{E}_{\mathbf{x}_{U_t},\mathbf{y}\mid x_{S_t}}
\mathbb{E}_{\pi_\theta}\bigl[
\sum_{i=t}^{t_\theta(\mathbf{x})} C((S_i,\mathbf{x},\mathbf{y}),a_i)
\,\big|\, a_t = a
\bigr].
\end{equation}
Here, \(t_\theta(\mathbf{x})\le k\) is the policy-induced stopping step in the resulting truncated trajectory, with rollout randomness left implicit. In our AFA setting, the policy gradient theorem yields
\begin{equation} \label{eq:pgt}
    \nabla_\theta J(\pi_\theta)
    \approx
    \mathbb{E}_{\mathbf{x},\mathbf{y}} \mathbb{E}_{\pi_\theta}
    \bigl[
        \sum_{t=0}^{t_\theta(\mathbf{x})}
        \hat{Q}^{(\pi_\theta)}(\mathbf{x}(m_t), a_t)
        \nabla_\theta \log \pi_\theta(a_t \mid \mathbf{x}(m_t))
    \bigr],
\end{equation}
where each action in the trajectory is sampled from the policy, \(a_t \sim \pi_\theta(\cdot \mid \mathbf{x}(m_t))\). Here, \(\hat{Q}^{(\pi_\theta)}\) is an approximation of the state-action value \(Q^{(\pi_\theta)}\), for example via a learned critic network. This score-function form is the standard RL approach used by methods such as PPO in prior AFA work \citep{pmlr-v139-li21p}.

\subsection{Deterministic Policy Gradient for Relaxed AFA} \label{appendix:dpg}
Section~\ref{sec:pathwise} introduces a continuous relaxation of the discrete AFA-POMDP by reparameterizing the acquisition policy with the Gumbel-Softmax construction in \eqref{eq:reparam}, and by defining the corresponding relaxed state dynamics and relaxed trajectory cost in \eqref{eq:relaxed_dynamics} and \eqref{eq:relaxed_return_and_objective}. This yields a continuous-action control problem in which the relaxed action lies in the simplex \(\Delta^{d+1}\). Conditioned on the exogenous Gumbel noise, the relaxed policy is deterministic as a function of the current relaxed state and the parameters \(\theta\). Let \(h_t \triangleq z_\theta(x(\tilde m_t))\) denote the policy logits at step \(t\). We therefore define the relaxed policy map by
\begin{equation}
\tilde{\pi}_\theta(x(\tilde m_t),\varepsilon_t)
\triangleq
\operatorname{softmax}\!\left(
    (h_t/\tau_{\mathrm{hard}} + \varepsilon_t)/\tau_{\mathrm{soft}}
\right),
\qquad
\varepsilon_t \sim \operatorname{Gumbel}(0,1)^{d+1}.
\end{equation}
Applying the deterministic policy gradient (DPG) theorem \citep{pmlr-v32-silver14, pmlr-v80-haarnoja18b, voelcker2026relative} to this reparameterized relaxed control problem gives
\begin{equation} \label{eq:dpg}
    \nabla_\theta \tilde{J}(\theta)
    \approx
    \mathbb{E}_{\mathbf{x},\mathbf{y}}
    \mathbb{E}_{\varepsilon}
    \bigl[
        \sum_{t=0}^{k}
        \nabla_\theta \tilde{\pi}_\theta(x(\tilde m_t),\varepsilon_t)\,
        \nabla_a \tilde{Q}^{(\pi_\theta)}(x(\tilde m_t), a)
        \Big|_{a = \tilde{\pi}_\theta(x(\tilde m_t),\varepsilon_t)}
    \bigr].
\end{equation}
Here, \(\tilde{Q}^{(\pi_\theta)}\) is analogous to \(\hat{Q}^{(\pi_\theta)}\) in \eqref{eq:pgt}, except it is defined with respect to the relaxed trajectory cost \(\tilde{G}\) in \eqref{eq:relaxed_return_and_objective}.

\subsection{Model-Based Learning in the AFA-POMDP} \label{appendix:model-based}

The belief-MDP formulation above is conceptually useful, but solving the AFA-POMDP exactly is highly intractable. Although the horizon is at most \(d\), the number of possible observation masks grows exponentially with \(d\), and the belief state additionally depends on a posterior over the unobserved components of the latent state \citep{aronsson2025surveyactivefeatureacquisition}. For this reason, some prior work on AFA uses an explicit belief model \(p(\mathbf{x}_{U_t},\mathbf{y}\mid x_{S_t})\) to reduce the search space and to support model-based acquisition decisions \citep{pmlr-v139-li21p, GhoshLan2023DiFA, aronsson2025surveyactivefeatureacquisition}. In such approaches, one uses the model over unobserved components to evaluate the expected effect of acquiring a candidate feature, rather than relying only on the currently observed input.

Following the AFA survey \citep{aronsson2025surveyactivefeatureacquisition}, we use the term \emph{model-based} specifically for methods that learn or use a model of the feature-observation dynamics, i.e., a model for \(p(\mathbf{x}_{U_t}\mid x_{S_t})\). This is the part of the belief state that determines how the AFA-POMDP evolves after a feature-acquisition action. By contrast, using a predictor \(f_\phi(x(m_t))\) to approximate the label belief \(p(\mathbf{y}\mid x_{S_t})\) does not by itself make a method model-based in this sense. Every AFA method needs such a predictive model to evaluate the terminal prediction loss after stopping: for example, the stopping cost in \eqref{eq:appendix_standard_afa_cost} depends on the predictor evaluated at the currently observed mask. Thus, the distinction is whether a method models the distribution of \emph{future observations} induced by acquiring unobserved features, not whether it contains a supervised predictor for the label. Under this terminology, NM-PPG is model-free: it learns a policy and predictor from observed training rollouts, but does not learn or query a model for \(p(\mathbf{x}_{U_t}\mid x_{S_t})\).

Modeling \(p(\mathbf{x}_{U_t}\mid x_{S_t})\) is essentially an imputation problem from the missing-feature literature \citep{ma2019eddi, NEURIPS2020_42ae1544}, and is itself a highly intractable problem in general. In principle, combining the policy with a model for \(p(\mathbf{x}_{U_t}\mid x_{S_t})\), together with the induced label belief \(p(\mathbf{y}\mid x_{S_t})\), yields a model-based approximation to the belief state \(p(\mathbf{x}_{U_t},\mathbf{y}\mid x_{S_t})\). Several AFA methods pursue this idea using \emph{deep arbitrary conditional generative models} \citep{ma2019eddi, pmlr-v139-li21p, GhoshLan2023DiFA}. However, accurately learning this belief over unobserved features is itself difficult, and these generative models are known to be unstable and often hard to train in practice \citep{DBLP:conf/icml/ValanciusLO24, schütz2025afabenchgenericframeworkbenchmarking}. As a result, when the imputation model is inaccurate, it can even reduce performance, depending on the complexity of the dataset. This is one reason why the main paper instead emphasizes direct conditioning on the sufficient statistic \(x(m_t)\), rather than requiring an explicit, high-quality model for the full belief over unobserved components.

\section{Myopic vs. Non-Myopic Policies in AFA} \label{appendix:myopic}

A \emph{myopic} policy in the AFA-POMDP is a policy that optimizes the \(1\)-step truncated value function. Specializing \eqref{eq:appendix_vk_opt} and \eqref{eq:appendix_qk_opt} to \(h=1\), the myopic cost-to-go is
\begin{equation}
\label{eq:appendix_v1}
V_1(x(m_t))
=
\min_{a\in \mathcal{A}(S_t)} Q_1(x(m_t),a),
\end{equation}
where the one-step state-action value is
\begin{equation}
\label{eq:appendix_q1}
Q_1(x(m_t),a)
\triangleq
\begin{cases}
C(b_t,d+1), & a=d+1,\\
\alpha c(a) + \mathbb{E}_{\mathbf{x}_a\mid x_{S_t}}\bigl[V_0(x(m_{t+1}))\bigr], & a\in U_t.
\end{cases}
\end{equation}
The base case in \eqref{eq:appendix_q1} is \(V_0(x(m_{t+1})) \triangleq C(b_{t+1},d+1)\), i.e., with zero acquisition steps remaining after observing the next mask \(m_{t+1}\), the only available operation is to stop and incur the expected stopping cost.
The corresponding myopic policy is therefore
\begin{equation}
\label{eq:appendix_pi1}
\pi_1^*(x(m_t)) \in \arg\min_{a\in \mathcal{A}(S_t)} Q_1(x(m_t),a).
\end{equation}
Intuitively, a myopic policy asks: \emph{if we either stop now or acquire one more feature and then stop, which action gives the lowest expected total cost?} By contrast, a \emph{non-myopic} policy plans with \(V_h\) for \(h>1\). At step \(t\), the \(k\)-truncated AFA problem studied in this paper corresponds to \(V_{k-t}\), while the full untruncated AFA problem is recovered when \(k=d\). Thus, non-myopic selection accounts for how the current acquisition changes future acquisition opportunities and the eventual stopping decision \citep{aronsson2025surveyactivefeatureacquisition}.

The benefit of non-myopic selection is highly dataset dependent. If the predictive utility of the remaining features is largely additive, if the best next acquisition does not depend strongly on future observations, and if feature costs are nearly uniform, then repeated myopic selection is often close to, or even exactly, optimal. This helps explain why recent AFA benchmarks frequently find strong myopic baselines on many datasets, while datasets that are explicitly constructed to require long-term planning favor non-myopic methods \citep{aronsson2025surveyactivefeatureacquisition, schütz2025afabenchgenericframeworkbenchmarking}. Non-myopic selection becomes important precisely when the value of an acquisition cannot be assessed from its immediate one-step effect alone.

Three situations are especially important. First, non-myopic planning is needed when features are \emph{jointly informative} but only weakly informative marginally. In that case, a feature may appear unhelpful under the 1-step objective in \eqref{eq:appendix_q1}, even though it is valuable because it enables a later feature whose usefulness only appears after the first one has been observed. Second, non-myopic planning is crucial when there are \emph{context features}: a feature that is itself weakly informative about the label, but indicates which other feature is informative for the current instance. This is the setting in which AFA most clearly benefits from being both \emph{non-myopic} and \emph{adaptive}, since the first acquisition is useful not because it directly predicts the label, but because it routes different instances toward different later acquisitions. In such problems, the best feature to acquire next varies across instances, and this variation only becomes visible after observing the context feature. Third, non-myopic planning becomes especially important under \emph{non-uniform feature costs}. A cheap feature may have little immediate predictive value, yet still be optimal because it reveals whether a more expensive acquisition is necessary at all. A myopic rule can therefore prefer an immediately informative but consistently costly feature, while a non-myopic rule can exploit a cheaper multi-step path with lower total expected cost.

Taken together, these observations clarify why no universal advantage of non-myopic AFA should be expected across all datasets. The value of non-myopic planning depends on the extent to which long-term feature interactions, context-dependent routing, and heterogeneous acquisition costs shape the decision problem. When these effects are weak, myopic policies can be highly competitive. When they are strong, non-myopic and adaptive selection becomes essential.


\end{document}